\definecolor{dblue}{RGB}{98, 140, 190}
\definecolor{dlblue}{RGB}{216, 235, 255}
\definecolor{dgreen}{RGB}{124, 155, 127}
\definecolor{dpink}{RGB}{207, 166, 208}
\definecolor{dyellow}{RGB}{255, 248, 199}
\definecolor{dgray}{RGB}{46, 49, 49}
\newcommand{\durl}[1]{\textcolor{dblue}{\underline{\url{#1}}}}
\newmdenv[
  topline=false,
  bottomline=false,
  rightline = false,
  leftmargin=10pt,
  rightmargin=0pt,
  innertopmargin=0pt,
  innerbottommargin=0pt
]{innerproof}
\newcounter{KDefCounter}
\newcommand{\ddef}[2]
{
\vspace{1mm}
{\bf Definition \theKDefCounter} (#1): {\it #2}
\stepcounter{KDefCounter}
}
\newtheorem{lemma}{Lemma}
\newtheorem{remark}{Remark}
\newtheorem{theorem}{Theorem}
\icmltitlerunning{What can I do here? A Theory of Affordances in Reinforcement Learning}
\newcommand{\AF}{{{\mathcal{A} \mathcal{F}}_{\cal I}}}
\newcommand{\AFP}{{{\mathcal{A} \mathcal{F}}_{\cal I'}}}
\begin{document}

\twocolumn[
\icmltitle{What can I do here? A Theory of Affordances in Reinforcement Learning}

\icmlsetsymbol{note}{*}
\begin{icmlauthorlist}
\icmlauthor{Khimya Khetarpal}{mila,note}
\icmlauthor{Zafarali Ahmed}{deepmind}
\icmlauthor{Gheorghe Comanici}{deepmind}
\icmlauthor{David Abel}{brown}
\icmlauthor{Doina Precup}{mila,deepmind}
\end{icmlauthorlist}
\icmlaffiliation{mila}{Mila - McGill University}
\icmlaffiliation{deepmind}{DeepMind}
\icmlaffiliation{brown}{Brown University}

\icmlcorrespondingauthor{Khimya Khetarpal}{khimya.khetarpal@mail.mcgill.ca}

\icmlkeywords{Machine Learning, ICML}

\vskip 0.3in
]

\printAffiliationsAndNotice{\icmlAdditionalNote} %

\begin{abstract}
Reinforcement learning algorithms usually assume that all actions are always available to an agent. However, both people and animals understand the general link between the features of their environment and the actions that are feasible. Gibson (1977) coined the term ``affordances'' to describe the fact that certain states enable an agent to do certain actions, in the context of embodied agents. In this paper, we develop a theory of affordances for agents who learn and plan in Markov Decision Processes. Affordances play a dual role in this case. On one hand, they allow faster planning, by reducing the number of actions available in any given situation. On the other hand, they facilitate more efficient and precise learning of transition models from data, especially when such models require function approximation. We establish these properties through theoretical results as well as illustrative examples. We also propose an approach to learn affordances and use it to estimate transition models that are simpler and generalize better.
\end{abstract}

\section{Introduction}
\label{submission}

Humans and animals have an exceptional ability to perceive their surroundings and understand which behaviors can be carried out successfully. For example, a hard surface enables walking or running, whereas a slippery surface enables skating or sliding. 
This capacity to focus on the most relevant behaviors in a given situation enables efficient decision making by limiting the choices of action that are even considered, and leads to quick adaptation to changes in the environment.

\citet{gibson1977theory} defined \textit{affordances} as different possibilities of action that the environment \textit{affords} to an agent. For example, water affords the action of swimming to a fish, but not to a land animal. Hence, affordances are a function of the environment as well as the agent, and \textit{emerge} out of their interaction. \citet{heft1989affordances} discussed the fact that affordances are located at the agent-environment boundary.
\citet{gibson1977theory} pointed out that affordances can also be viewed as a way to characterize an agent's state in an action-oriented fashion. For example, a seat can be any object on which one can sit above from the ground, regardless of its shape or color. This view leads potentially to very robust generalization when processing the perceptual stream in order to determine what to do. We take inspiration from \citet{gibson1977theory}, \citet{heft1989affordances}, and \citet{chemero2003outline} and provide a framework that enables artificially intelligent (AI) agents to represent and reason about their environment through the lens of affordances.

In this paper, we focus on reinforcement learning (RL) agents~\cite{sutton2018introduction}. We aim to endow RL agents with the ability to represent and learn affordances, which can help them to plan more efficiently, and lead to better generalization. While defining affordances is not a new topic in AI (see Sec.~\ref{sec:related} for a discussion of related work), our approach builds directly on the general framework of Markov Decision Processes (MDPs), in its traditional form. 

In order to define affordances, we need to first capture the notion of what it would mean for an agent to carry out an action ``successfully". To do this, we introduce the notion of {\em intent}, i.e., a desired outcome for an action. 
Affordances will then capture a subset of the state-action space in which the intent is achieved. This view is very compatible with model-based RL, in which the transition model captures the consequences of actions. However, learning an accurate model of the entire environment can be quite difficult, especially in large environments. Hence, we propose to learn affordances, and use them to define {\em partial models} \cite{talvitie2009simple}, which  focus on making high quality predictions for a subset of state and actions: those linked through an affordance..

We first define affordances in MDPs (Sec.~\ref{sec:afford}) and quantify the value loss when replacing the true MDP model with an affordance-based model (Sec.~\ref{sec:valueloss}). Then, we investigate the setting in which affordance-based partial models are learned from data, and we show that the planning loss is bounded, with high probability (Sec.~\ref{sec:planningloss}). The bound is given in terms of the complexity of the policy class determined by the size of the affordances. We provide empirical illustrations for this analysis (Sec.~\ref{sec:experiments}).
Finally, we propose an approach to learn affordances from data and use it to estimate a partial model of the world (Sec.~\ref{sec:experiments_learning}). Our results provide evidence that affordances and partial models lead to improved generalization and stability in the learning process.

\section{Background}
\label{sec:background}
In reinforcement learning (RL), a decision-making agent must learn to interact with an environment, through a sequence of actions, in order to maximize its expected long-term return~\cite{sutton2018introduction}. This interaction is typically formalized using the framework of Markov Decision Processes (MDPs). An MDP is a tuple $M=\langle {\cal S}, {\cal A}, r, P, \gamma \rangle $, where ${\cal S}$ is a set of states, ${\cal A}$ is a set of actions, $r: {\cal S} \times {\cal A}\rightarrow [0, R_{\max}$] is the reward function, $P:{\cal S} \times {\cal A} \rightarrow Dist({\cal S})$ is the environment's transition dynamics, mapping state-action pairs to a distribution over next states,  $Dist(\mathcal{S})$, and  $\gamma \in (0,1)$ is the discount factor.  At each time step $t$, the agent observes a state $s_t \in {\cal S}$ and takes an action $a_t \in {\cal A}$ drawn from a policy $\pi : {\cal S}\rightarrow  Dist({\cal A}) $. Then, with probability $P(s_{t+1}|s_t,a_t)$, the agent enters the next state $s_{t+1}\in{\cal S}$, receiving a numerical reward $r(s_t, a_t)$. The value function for a policy $\pi$ is defined as: 
$V_\pi(s) = E\left[\sum_{t=0}^{\infty} \gamma^{t} r(s_t, a_t) \big| s_0 = s, a_t \sim \pi(\cdot|s_t),\forall t\right].$
For simplicity of exposition, we assume henceforth that the MDP's state and action space are finite, though the ideas we present can be extended naturally to the infinite setting.

The goal of an agent is to find the optimal policy, $\pi^*=\arg\max_{\pi} V^{\pi}$ (which exists in a finite MDP). If the model of the MDP, consisting of $r$ and $P$, is given, the value iteration algorithm can be used to obtain the optimal value function, $V^*$, by computing the fixed-point of the system of Bellman equations~\cite{bellmann1957dynamic}: $V^*(s) = \max_a \Big( r(s,a) + \gamma \sum_{s^{'}} P(s'|s,a) V^*(s^{'}) \Big), \forall s.$ The optimal policy $\pi^*$ can be obtained by acting greedily with respect to $V^*$.

Because in general the true model of the environment is unknown, one approach that can be used to solve the optimization above is to use data in order to construct an approximate model, $\langle\hat{r}, \hat{P}\rangle$, usually by using maximum likelihood estimation, then solve the corresponding approximate MDP $\hat{M}$. This approach is called {\em model-based RL} or {\em certainty-equivalence (CE) control}. Given a finite amount of data, the estimate of the model will be inaccurate, and thus we will be interested in evaluating the optimal policy obtained in this way, $\pi^*_{\hat{M}}$, in the true MDP $M$. We denote by $V^{\pi}_M$ the value of any policy $\pi$ when evaluated in $M$. Our results will bound the value loss of various policies computed from an  approximate MDP compared to the true optimal value, in some $\ell_p$-norm,  $||V^{\pi_{\hat{M}}}_M- V^*_M||_p$. 

\section{Affordances}\label{sec:afford}
In an MDP, we usually assume that all actions are available in all states, and the model $\langle r,P\rangle$ therefore has to be defined for all $(s,a)$. We now build a framework for defining and using affordances, which limit the state-action space of interest. For this, we need to formalize Gibson's intuition of ``action success". Because we would like affordances to generalize across environments with different rewards, we start by considering a notion of ``intent" of an action, and we consider an action to have succeeded if it realizes its intent. For example, having a coffee machine affords the action of making coffee, because we can successfully obtain coffee from the machine. This does not necessarily mean the action is desirable in the current context: if the agent must go to sleep soon, or has an upset stomach, the reward for drinking coffee might be negative. Nonetheless, the action itself can be executed and would result in the intended consequence of possessing coffee. This example gives the intuition that intent is best captured by thinking about a target state distribution that should be achieved after executing an action.

\ddef{Intent $I_a$}{Given an MDP $M$ and action $a \in {\cal A}$, an intent is a map from states to desired state distributions that should be obtained after executing $a$, $I_a:{\cal S} \rightarrow \text{Dist}(\mathcal{S})$. An intent $I_a$ is satisfied to a degree, $\epsilon$, at state $s\in \cal S$ if and only if:
\begin{equation}
    \label{def:intent_state}
   d(I_a(s), P(\cdot|s,a))\leq \epsilon,
\end{equation}
where $d$ is a metric between probability distributions and $\epsilon\in [0,1]$ is a desired precision.
An intent $I_a$ is satisfied to a degree $\epsilon$  in MDP $M$ if and only if it is satisfied to degree $\epsilon$ $\forall s\in \cal S$.
}

In this work, we will take $d$ to be the total variation metric: $d(P,Q)=\frac{1}{2}\sum_{x\in \cal{X}} |P(x)-Q(x)|$
where ${\cal X}$ is assumed to be discrete and finite. Note that $d=0$ iff $P$ and $Q$ are identical, and the maximum value possible for $d$ is $1$ (which will make it convenient to use $\epsilon$).

This notion of intent is similar to empowerment~\cite{salge2014empowerment} or to setting a goal for the agent to reach after a temporally extended action \cite{schaul2015uvfa,nachum2018data}. Note that if the intent maps all states to the {\em same} distribution, we capture a strong notion of invariance (a kind of ``funneling"), so that the result of the action is insensitive to the state in which it is executed.
 
Based on this definition, it is clear that any intent can be satisfied if we set $\epsilon=1$. Depending on $\epsilon$, there may be no way to satisfy an intent for an action, given that the conditioning is over {\em all} states in the MDP. However, for the intents that were satisfied, we could imagine the agent planning in an approximate MDP in which $I_a$ replaces the transition model, because action $a$ will reliably take states into a known distribution, when $a$ completes. If $I_a$ happened to have support only at one state, i.e. $a$ is deterministic, then the planning could also be really efficient. 
 
For example, in a navigation task like the one depicted in Fig.~\ref{fig:Illustration}, an agent that intends to move left can do so in many states, but not in the ones that are immediately adjacent to the left wall. We build the notion of affordances with the goal of restricting the state-action pairs so as to allow intents to be satisfied.

\ddef{Affordances $\AF$}{Given a finite MDP, a set of intents ${\cal I}=\cup_{a\in{\cal A}} I_a$, and $\epsilon\in[0,1]$, we define the affordance $\AF$ associated with ${\cal I}$ as a relation  $\AF \subseteq \cal S \times \cal A$, such that $\forall (s,a) \in \AF$, $I_a$ is satisfied to degree $\epsilon$ in $s$.}

We will use $\AF(s) \subset {\cal A}$ to denote the set of actions $a$ such that $(s,a)\in \AF$.%

 \begin{figure}[t]
    \begin{center}
    \subfigure[Move Intent]{\label{fig:intent_changestate}\includegraphics[width=0.2\textwidth]{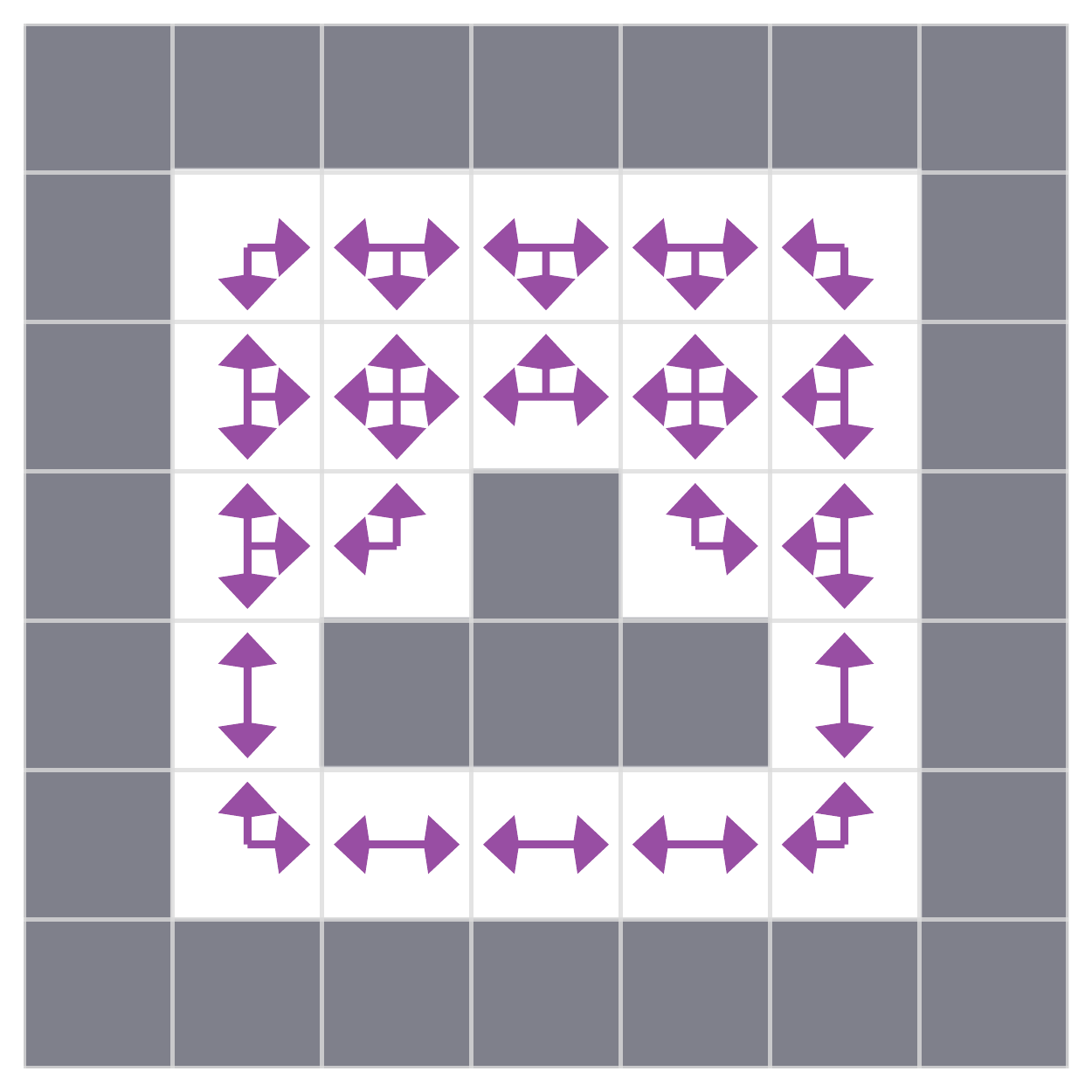}}\vspace*{-0.4cm}
    \hspace{0.4cm}
    \subfigure[Move Left Intent]{\label{fig:intent_left}\includegraphics[width=0.2\textwidth]{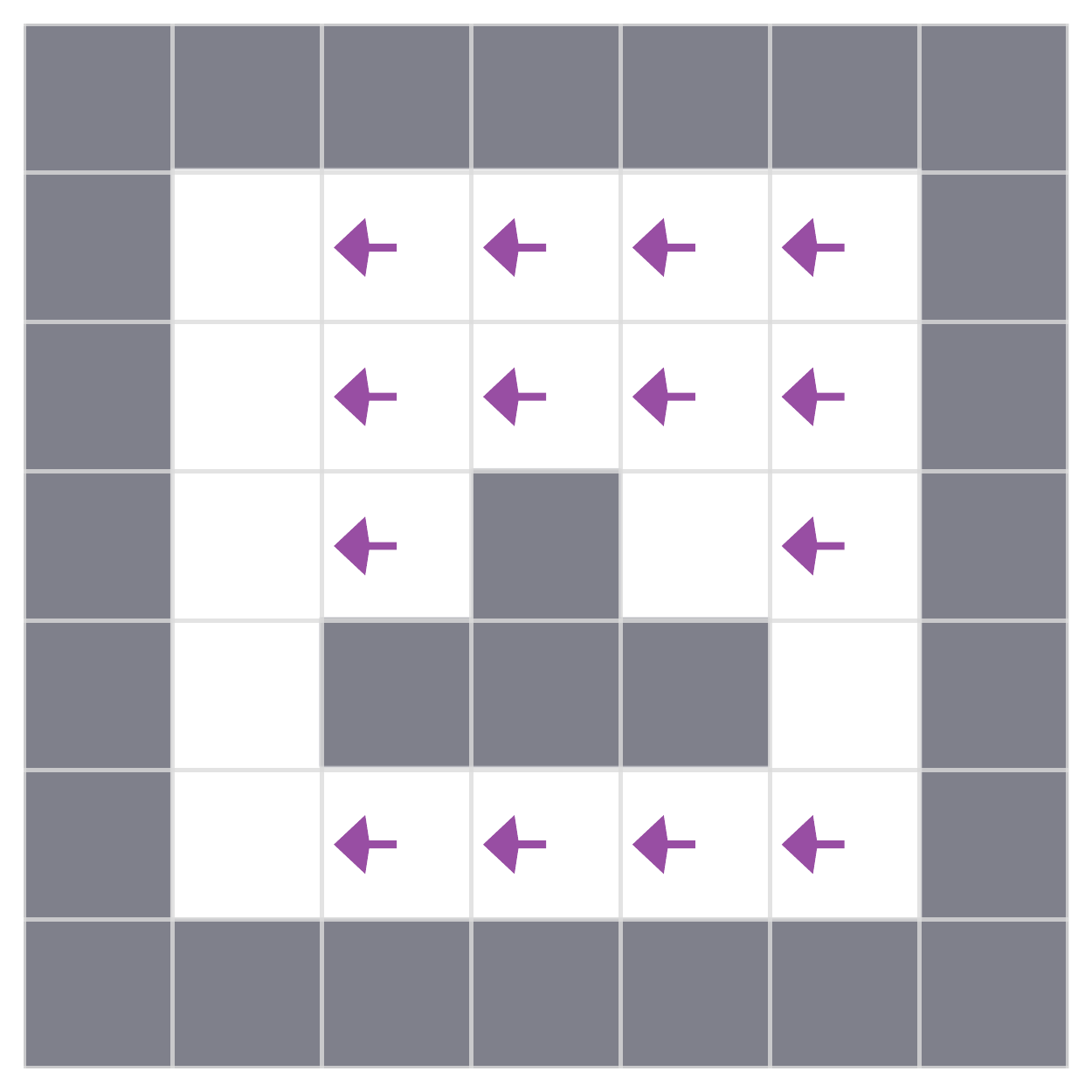}\vspace*{-0.5cm}}\hspace{-0.4cm}
    \caption{\label{fig:Illustration}\textbf{Visualization of affordances in a grid-world.} \textit{Affordances} are the subset of states and actions which satisfy the \textit{intents} to a desired degree (Cf. Def~\ref{def:intent_state}). Affordances are shown for intents specified as bringing (a) any change in position $\Delta y$ or $\Delta x$, and (b) a change of $-\Delta x$ only. Grid cells and arrows in each cell represent the states and actions respectively. See appendix for more illustrations.}
    \end{center}
    \vskip -0.2in %
\end{figure}

\textbf{Illustration}
To illustrate the idea of intents and affordances, consider the navigation task depicted in Fig.~\ref{fig:Illustration}, in which the agent always has 4 available actions that move it deterministically to a neighbouring state, unless the agent is next to a wall, in which case it remains in the same position. Consider the intent to be a change in the agent's current position. Given this intent specification, the affordance corresponding to any $\epsilon<1$ contains the subset of state-action pairs marked in Fig~\ref{fig:intent_changestate}, i.e. the grid cells and actions which successfully change the agent's position. It excludes state-action pairs which move the agent into a wall. Note that if the probability of ``success" of an action were $p_{succ}$, any $\epsilon<p_{succ}$ would result in the same affordance. Similarly, the affordance corresponding to the intent of moving left is depicted in Fig~\ref{fig:intent_left}. Note that it is possible to obtain an empty affordance for values of $\epsilon$ that are too stringent. We will further examine the effect of $\epsilon$ in our experiments.

Our definitions assume that an agent starts from an intent specification, which could either be given {\it a priori}, specified by a human in the loop, given by a planner, or learned and adapted over time. Affordances are then constructed on the basis of the intents. Our definitions are intended to allow RL agents to capture dynamics that are consistent and invariant to various factors in the environment. For example, in navigation tasks, we would like the agent to handle intents and affordances that allow its models to be robust with respect to variations such as the location of the walls, or the exact shape and size of the rooms. Examples of such invariances are given in Fig.~\ref{fig:appendix_Illustration}. %

\section{Value Loss Analysis}\label{sec:valueloss}

Given an MDP $M$ and set of intents $\mathcal{I}$, it is easy to notice that we can define an {\em induced MDP} $M_{\cal I} =\langle {\cal S}, {\cal A}, r, P_{\mathcal{I}}, \gamma \rangle$, where $r$ is the same as in the original MDP, and the set of intents induces the transition model $P_{\mathcal{I}}$. Since the intent specifies a desired distribution over next states, we can assume that from the states that afford a specific action $a$, the intent is a close-enough approximation of the transition model to be used ins stead. In all other states, action $a$ does not need to be considered, so we do not need to model it. 
We now study the value loss incurred due to using a model $M_{\cal I}$ based on intents as a proxy for the true model $M$.
\begin{theorem}
\label{thm:value_loss_analysis}
Let $\mathcal{I}$ be a set of intents and $\epsilon_{s,a}$ be the minimum degree to which an intent is satisfied for $(s,a)$.
\begin{equation}
\sum_{s'} \Big| P_{\mathcal{I}}(s'|s,a) - P(s'|s,a) \Big| \leq \epsilon_{{s,a}}.
\label{eq:epsilon}
\end{equation}
Let $\epsilon=\max_{s,a} \epsilon_{s,a}$. Then, the value loss between the optimal policy for the original MDP $M$ and the optimal policy $\pi^*_\mathcal{I}$ computed from  the induced MDP $M_{\cal I}$ is given by:
\begin{equation}
    ||V^{\pi^{*}_\mathcal{I}}_{M} - V^*_M||_{\infty} \leq   2\epsilon \frac{\gamma \texttt{Rmax}}{(1-\gamma)^{2}},  %
\label{eq:theorem1}
\end{equation}
where $\texttt{Rmax}$ is the maximum possible value of the reward.
\end{theorem}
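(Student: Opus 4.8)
The plan is to reduce the statement to the classical simulation (``certainty-equivalence'') argument, treating \eqref{eq:epsilon} as a uniform $\ell_1$ bound between the kernels $P$ and $P_{\mathcal{I}}$. For this I would read $P_{\mathcal{I}}$ as defined on all of ${\cal S}\times{\cal A}$ --- equal to the intent distribution $I_a(s)$ on the pairs where that intent is met to within $\epsilon$, and set to $P(\cdot\mid s,a)$ elsewhere --- so that \eqref{eq:epsilon} holds for \emph{every} $(s,a)$ with $\epsilon=\max_{s,a}\epsilon_{s,a}$. Then $M$ and $M_{\mathcal{I}}$ share the same states, actions, and reward $r$, and differ only in their transition kernels, by at most $\epsilon$ in $\ell_1$ at each pair.

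\textbf{Step 1 (simulation lemma for a fixed policy).} For an arbitrary policy $\pi$, write $P^\pi(\cdot\mid s)=\sum_a\pi(a\mid s)P(\cdot\mid s,a)$ and $P^\pi_{\mathcal{I}}$ analogously. Subtracting the two Bellman evaluation equations (the reward terms cancel) and inserting a cross term gives
\[
V^\pi_M(s)-V^\pi_{M_{\mathcal{I}}}(s)=\gamma\sum_{s'}P^\pi(s'\mid s)\big(V^\pi_M(s')-V^\pi_{M_{\mathcal{I}}}(s')\big)+\gamma\sum_{s'}\big(P^\pi(s'\mid s)-P^\pi_{\mathcal{I}}(s'\mid s)\big)V^\pi_{M_{\mathcal{I}}}(s').
\]
By the triangle inequality and \eqref{eq:epsilon}, $\sum_{s'}|P^\pi(s'\mid s)-P^\pi_{\mathcal{I}}(s'\mid s)|\le\sum_a\pi(a\mid s)\epsilon_{s,a}\le\epsilon$, and $\|V^\pi_{M_{\mathcal{I}}}\|_\infty\le R_{\max}/(1-\gamma)$ since rewards lie in $[0,R_{\max}]$. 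Taking $\ell_\infty$-norms, the first sum contributes at most $\gamma\|V^\pi_M-V^\pi_{M_{\mathcal{I}}}\|_\infty$ (it is an average under $P^\pi$) and the second at most $\gamma\epsilon R_{\max}/(1-\gamma)$; rearranging the resulting scalar inequality yields $\|V^\pi_M-V^\pi_{M_{\mathcal{I}}}\|_\infty\le\gamma\epsilon R_{\max}/(1-\gamma)^2$.

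\textbf{Step 2 (optimal policies).} Let $\pi^*_M$ be optimal in $M$ and decompose
\[
V^*_M-V^{\pi^*_{\mathcal{I}}}_M=\big(V^{\pi^*_M}_M-V^{\pi^*_M}_{M_{\mathcal{I}}}\big)+\big(V^{\pi^*_M}_{M_{\mathcal{I}}}-V^{\pi^*_{\mathcal{I}}}_{M_{\mathcal{I}}}\big)+\big(V^{\pi^*_{\mathcal{I}}}_{M_{\mathcal{I}}}-V^{\pi^*_{\mathcal{I}}}_M\big).
\]
The first and third brackets are each at most $\gamma\epsilon R_{\max}/(1-\gamma)^2$ in $\ell_\infty$ by Step 1, while the middle bracket is $\le 0$ pointwise because $\pi^*_{\mathcal{I}}$ is optimal in $M_{\mathcal{I}}$. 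Since also $V^*_M-V^{\pi^*_{\mathcal{I}}}_M\ge 0$, this gives $\|V^{\pi^*_{\mathcal{I}}}_M-V^*_M\|_\infty\le 2\gamma\epsilon R_{\max}/(1-\gamma)^2$, which is \eqref{eq:theorem1}; the factor $2$ comes precisely from comparing two policies of $M$ by routing through $M_{\mathcal{I}}$.

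\textbf{What needs care.} The manipulations above are routine; the point that demands attention is the modeling convention for $P_{\mathcal{I}}$ off the affordance. If instead one literally \emph{removes} action $a$ at states with $(s,a)\notin\AF$, then $\pi^*_M$ need no longer be a policy of $M_{\mathcal{I}}$, and the first bracket in Step 2 must be replaced by an argument that $\pi^*_M$ --- or some policy of $M$ just as good --- uses only affordable actions, which is exactly the property one wants $\mathcal{I}$ to be designed for. The only other subtlety is bookkeeping with the geometric series and the (deliberately loose) bound $\|V^\pi_{M_{\mathcal{I}}}\|_\infty\le R_{\max}/(1-\gamma)$, which together with the doubling in Step 2 fixes the constant in \eqref{eq:theorem1}.
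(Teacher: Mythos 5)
Your proof is correct and follows essentially the same route as the paper: decompose the loss by routing through the induced MDP $M_{\cal I}$ via the triangle inequality, bound each resulting policy-evaluation discrepancy by $\gamma\epsilon R_{\max}/(1-\gamma)^2$ using a simulation-lemma argument driven by the $\ell_1$ bound \eqref{eq:epsilon}, and collect the factor of $2$. Your version is somewhat more self-contained (you prove the fixed-policy simulation lemma directly rather than invoking the lemma of Jiang et al.\ for the optimal-value term), and your closing remark about how $P_{\mathcal{I}}$ must be interpreted off the affordance correctly identifies the one modeling convention the paper leaves implicit.
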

The proof is provided in appendix~\ref{sec:appendix_valuelossbound}.

\section{Planning Loss Bound}
\label{sec:planningloss}

So far we considered the effect of using the intent-based MDP in order to plan.
However, we would also like to use the associated affordance set, $\AF$, in order to speed up the planning process. Moreover, we would like to consider fine-tuning the intent set $\cal I$ by using data. 

We consider the certainty-equivalence control setting (as described in Sec~\ref{sec:background}), in which we optimize a policy based on an approximate model $\hat{M}_{\AF}$. More precisely, we use data to approximate the model of the MDP, but only for the set of state-action pairs in the affordance $\AF$ induced by a given set of intents $\cal I$ and a given $\epsilon$. State-action pairs which are not in $\AF$ will be considered impossible.

For simplicity, we assume that $\epsilon$ is such that  $|\AF(s)|\geq 1, \forall s\in \cal S$ (in other words, there is at least one action available in each state).

Note that $\hat{M}_{\AF}$ will have the same reward function $r$ as the original MDP $M$, but instead of $\cal S \times \cal A$, its model will be defined on $\AF$. One can think of this MDP as working with {\em partial models}~\cite{talvitie2009simple}, which are defined only for some state-action pairs.

Let $\pi^*_{\hat{M}_\AF}$ be the optimal policy of MDP $\hat{M}_\AF$. We quantify the largest absolute difference (over states) between the value of the true optimal policy with respect to the true model, $\pi^*_{M}$ and that of $\pi^*_{\hat{M}_\AF}$ when evaluated in $M$:
\begin{equation}
    \text{\textbf{Planning Value Loss: }} \Big| \Big|  V^*_{M} - V^{\pi^*_{\hat{M}_\AF}}_M \Big| \Big|_{\infty}
\end{equation}
Our work builds on the theory developed by \citet{jiang2015dependence}, which characterizes a bias-variance trade-off in approximate planning based on the complexity of the policy class allowed. 
\citet{jiang2015dependence} suggest that $\gamma$ can be viewed as a parameter that controls the number of policies that can be optimal, given a fixed state-action space along with a reward function. The authors draw parallel to supervised learning and their theory is suggestive of the fact that limiting the complexity of the policy class by using $\gamma$ can lead to an optimal bias-variance trade-off for a fixed amount of data.

Note that intuitively, the policy class in our case will depend on the affordances. For example, if only a single action can be carried out at any given state, there is only one policy available. If all actions are always available, the policy class is the same as in the original MDP $M$. Hence, the ``size" of the affordance controls the policy class, and of course, this size depends on $\epsilon$ and on the intent set $\cal I$.
 
We will now define the policy class for affordances as follows:

\ddef{Policy class $\Pi_{\mathcal{I}}$}{Given affordance $ \AF$, let  $\mathcal{M_I}$ be the set of MDPs over the state-action pairs in $\AF$, and let 
\begin{equation*}
\Pi_{\mathcal{I}} = \{ \pi^{*}_M \} \cup \{ \pi : \exists \bar M \in  \mathcal{M_I} \text{ s.t. }  \pi \text{ is optimal in } \bar M  \}.
\end{equation*}}
Affordances and intents control the size of the policy class as highlighted in the following remark. 
\begin{remark}
Given a set of intents $\cal I$, the following statements  hold for affordance $\AF$ and their corresponding policy class $\Pi_{\mathcal{I}}$:
\begin{enumerate}
    \item
     If  $ \forall{ s \in S}, |\AF(s)| = 1$ (i.e. only one action is affordable at every state), then $|\Pi_{\mathcal{I}}| = 1$.
    \item $|\AF| \leq |\AFP| \implies |\Pi_{\mathcal{I}}| \leq |\Pi_{\mathcal{I'}}|$.
\end{enumerate}
\label{remark1}
\end{remark}
We now present our main result. We show that the loss of the  policy for ${\hat{M}_\AF}$ is bounded, with high probability in terms of the policy class complexity $|\Pi_{\cal I}|$, which is controlled by the size of the affordance $\AF$. 
\begin{theorem}
Let $\hat{M}_\AF$ be the approximate MDP over affordable state-action pairs. Then certainty equivalent planning with $\hat{M}_\AF$ has planning loss 
\begin{align*}
    \Big|\Big|V^*_M - V^{\pi^{*}_{\hat{M}_\AF}}_M \Big|\Big|_{\infty} \leq& \frac{2 \texttt{Rmax}}{(1-\gamma)^2} \hookleftarrow\\
    &\times\Bigg(2\gamma \epsilon + \sqrt{\frac{1}{2n} \log \frac{2 |\AF| |\Pi_\mathcal{I}|}{\delta}}\Bigg)
\end{align*}
with probability at least $1-\delta$.
\label{theorem:planningvaluelossbound}
\end{theorem}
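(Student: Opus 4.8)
The plan is to follow the certainty-equivalence template of \citet{jiang2015dependence}, decomposing the planning loss into a deterministic \emph{approximation} part — coming from replacing the true dynamics by the intent model on the affordable pairs and from restricting the action set to $\AF$ — and a random \emph{estimation} part coming from fitting the transition model to $n$ samples. As an intermediate object I would introduce the \emph{exact} intent-induced MDP restricted to the affordances, $M_{\AF}$, with reward $r$ and dynamics $P_{\mathcal{I}}$ defined only on $\AF$. Since $M_{\AF}\in\mc{M_I}$ and $\hat{M}_{\AF}\in\mc{M_I}$, the definition of the policy class gives $\pi^{*}_{M_{\AF}},\pi^{*}_{\hat{M}_{\AF}}\in\Pi_{\mathcal{I}}$ (and $\pi^{*}_{M}\in\Pi_{\mathcal{I}}$). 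Writing $\bar\pi=\pi^{*}_{M_{\AF}}$, which is well defined and affordance-respecting since $|\AF(s)|\ge1$ for all $s$, I would bound $V^{*}_{M}-V^{\pi^{*}_{\hat{M}_{\AF}}}_{M}\le\big(V^{*}_{M}-V^{\bar\pi}_{M}\big)+\big(V^{\bar\pi}_{M}-V^{\bar\pi}_{\hat{M}_{\AF}}\big)+\big(V^{\bar\pi}_{\hat{M}_{\AF}}-V^{\pi^{*}_{\hat{M}_{\AF}}}_{\hat{M}_{\AF}}\big)+\big(V^{\pi^{*}_{\hat{M}_{\AF}}}_{\hat{M}_{\AF}}-V^{\pi^{*}_{\hat{M}_{\AF}}}_{M}\big)$, where the third bracket is $\le0$ because $\bar\pi$ is feasible in $\hat{M}_{\AF}$ and $\pi^{*}_{\hat{M}_{\AF}}$ is optimal there.

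For the model-mismatch contributions I would reuse the estimate behind Theorem~\ref{thm:value_loss_analysis}: since $\sum_{s'}\big|P_{\mathcal{I}}(s'|s,a)-P(s'|s,a)\big|\le\epsilon$ on $\AF$, a standard perturbation (simulation-lemma) argument gives $\|V^{\pi}_{M}-V^{\pi}_{M_{\AF}}\|_{\infty}\le\gamma\epsilon\,\texttt{Rmax}/(1-\gamma)^{2}$ for every $\pi\in\Pi_{\mathcal{I}}$, and, comparing optimal values exactly as in the proof of Theorem~\ref{thm:value_loss_analysis}, $\|V^{*}_{M}-V^{\bar\pi}_{M}\|_{\infty}\le2\gamma\epsilon\,\texttt{Rmax}/(1-\gamma)^{2}$ (this also uses that the affordance retains near-optimal behaviour, which is the role of the standing assumption on $\AF$). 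The same perturbation estimate absorbs the intent-vs-true-dynamics gap hidden inside the two estimation brackets; collecting these $\epsilon$-sized mismatches and using $\gamma\le1$ produces the $\tfrac{2\texttt{Rmax}}{(1-\gamma)^{2}}\,(2\gamma\epsilon)$ contribution.

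The delicate part is controlling $|V^{\pi}_{M}(s)-V^{\pi}_{\hat{M}_{\AF}}(s)|$ with a rate that does \emph{not} scale with $|\mc S|$ — a crude bound through the $\ell_{1}$ model error $\|\hat{P}-P\|_{1}$ would carry an extra $\sqrt{|\mc S|}$. Following \citet{jiang2015dependence}, I would expand this gap with the simulation lemma as a discounted-occupancy-weighted average, over $(s',a')\in\AF$, of the scalars $\big\langle\hat{P}(\cdot|s',a')-P(\cdot|s',a'),\,V^{\pi}_{M}\big\rangle$. For a \emph{fixed} $\pi$ the vector $V^{\pi}_{M}$ is non-random and bounded in $[0,\texttt{Rmax}/(1-\gamma)]$, so each such scalar is the deviation of an average of $n$ i.i.d.\ bounded samples from its mean, controlled by Hoeffding's inequality with $\tfrac{\texttt{Rmax}}{1-\gamma}\sqrt{\tfrac{1}{2n}\log\tfrac{2}{\delta'}}$. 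A union bound over the $|\AF|$ affordable pairs and over the $|\Pi_{\mathcal{I}}|$ policies in the class (with $\delta'=\delta/(|\AF||\Pi_{\mathcal{I}}|)$), together with the fact that the occupancy weights sum to at most $1/(1-\gamma)$, gives: with probability at least $1-\delta$, simultaneously for all $\pi\in\Pi_{\mathcal{I}}$ and all $s$, $|V^{\pi}_{M}(s)-V^{\pi}_{\hat{M}_{\AF}}(s)|\le\tfrac{\gamma\,\texttt{Rmax}}{(1-\gamma)^{2}}\sqrt{\tfrac{1}{2n}\log\tfrac{2|\AF||\Pi_{\mathcal{I}}|}{\delta}}$. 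Applying this to the two estimation brackets, both with $\pi\in\Pi_{\mathcal{I}}$, and using $\gamma\le1$, yields the $\tfrac{2\texttt{Rmax}}{(1-\gamma)^{2}}\sqrt{\tfrac{1}{2n}\log\tfrac{2|\AF||\Pi_{\mathcal{I}}|}{\delta}}$ term; adding the approximation term completes the bound.

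The main obstacle is precisely this estimation step: avoiding the $\sqrt{|\mc S|}$ penalty requires (i) the occupancy-reweighting that lets Hoeffding act on the \emph{fixed} value vector $V^{\pi}_{M}$ rather than on the random estimated transition vector, and (ii) the observation — this is what affordances buy, through Remark~\ref{remark1} — that the union bound ranges only over the restricted policy class $\Pi_{\mathcal{I}}$ and the affordable pairs $\AF$, not over all deterministic policies and all of $\mc S\times\mc A$. A secondary point to handle carefully is that the reference policy $\bar\pi$ must itself respect the affordance so that optimality of $\pi^{*}_{\hat{M}_{\AF}}$ can be invoked to discard its bracket, and that the restriction to $\AF$ does not destroy near-optimal behaviour; after that, tracking which mismatch is of size $\le\epsilon$ versus statistical and collecting constants is routine.
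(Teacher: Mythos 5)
Your proposal is correct and follows essentially the same route as the paper: a triangle-inequality decomposition through the intent-induced MDP, simulation-lemma bounds contributing the $2\gamma\epsilon$ approximation term, and a Hoeffding bound with a union over the $|\AF|$ affordable pairs and the restricted policy class $\Pi_{\mathcal{I}}$ for the estimation term, exactly in the style of Jiang et al.\ (2015). The only cosmetic difference is bookkeeping: you discard the middle bracket by optimality of $\pi^{*}_{\hat{M}_{\AF}}$ and absorb the intent-vs-true-dynamics gap into the two evaluation brackets, whereas the paper routes everything through $M_{\mathcal{I}}$ and applies its Lemma~\ref{lemma2} there; both yield the same constants.
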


The proof is in appendix~\ref{sec:appendix_planninglossbound}. Our result has similar implications to \citet{jiang2015dependence}: for a given amount of data, there will be an optimal affordance size $|\AF|$ that will provide the best bias-variance trade-off. Note also that as the amount of data used to estimate the model increases, the bound shrinks, so the affordance and class of policies can grow. Intuitively, the planning value loss now becomes a trade off between the size of affordances $|\AF|$, the corresponding policy space  $|\Pi_\mathcal{I}|$ and the approximation error, $\epsilon$. We could be very precise in the choice of intents, but then the affordance set becomes larger, hence the bound is looser. Alternatively, we could make the affordance set much smaller, but that might come at the expense of a poor approximation in the intent model, thereby loosening the bound.

\section{Empirical Results}\label{sec:experiments}
In this section, we conduct a set of experiments to illustrate our theoretical results\footnote{We provide the source code for all our experimental results at \url{https://tinyurl.com/y9xkheme}.}. In Sec.~\ref{sec:implact_affordance_planning}, we study the effect of planning with partial models constructed using intents and affordances on the quality of the plans.
In Sec~\ref{sec:planning_time}, we illustrate the potential of affordances to accelerate planning. Finally, in Sec~\ref{sec:acc_effectice_affordances} we study the planning accuracy when we use affordances and data to construct an approximate $\hat{M}_\AF$ (as opposed to using $\cal I$ as an approximate model in zero-shot fashion).

\subsection{Planning with Intents}
\label{sec:implact_affordance_planning}
\begin{figure}[t]
\centering
\includegraphics[trim={0 0.1cm 0 0},clip,width=0.8\columnwidth]{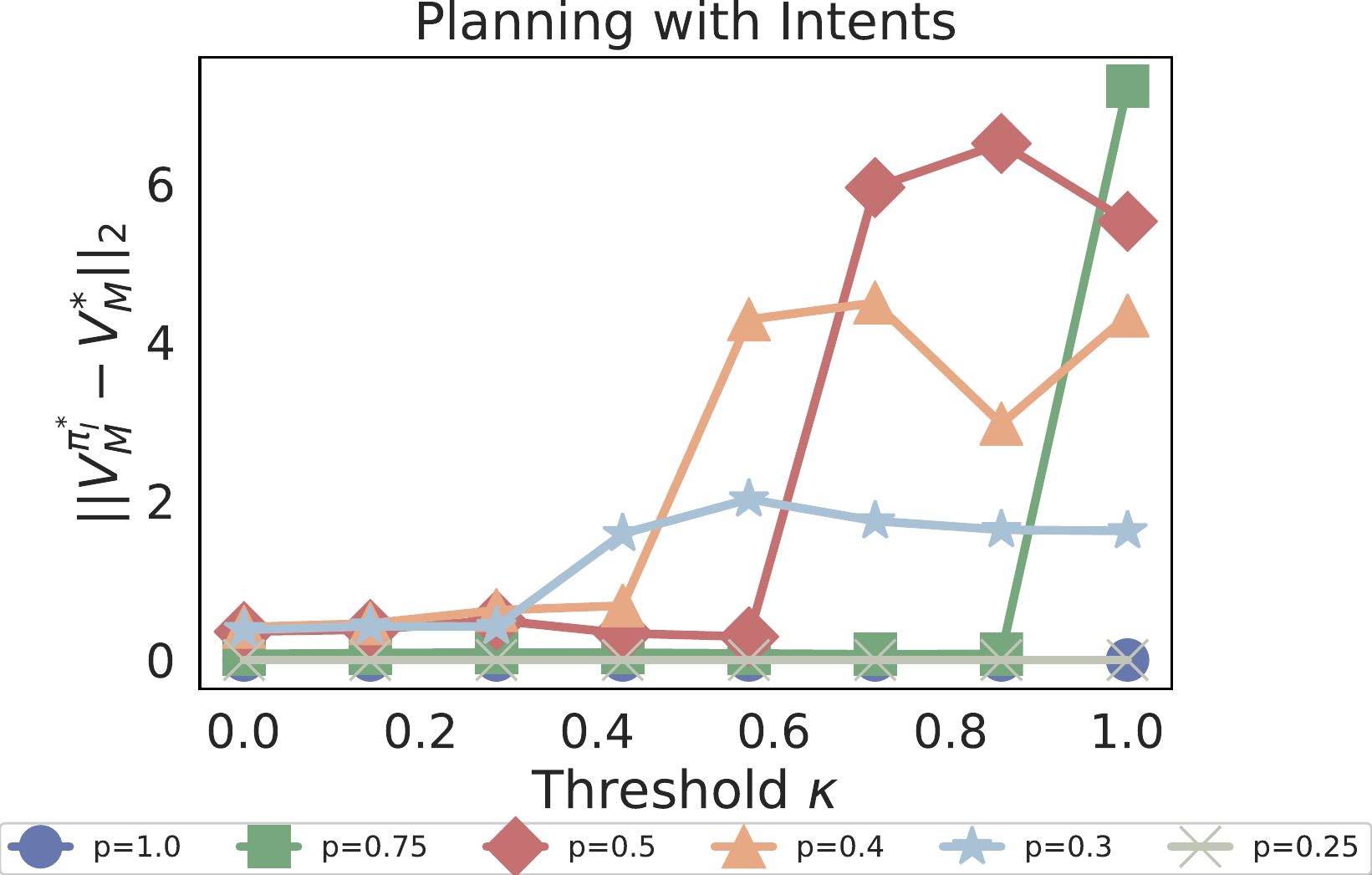}
\caption{\label{fig:planningwithintents}\textbf{The impact of intents and affordances on planning.} The environment is as in Fig.~\ref{fig:Illustration}. The actions are stochastic and fail with a probability of $1 - p$, which is varied in the experiments. $\kappa=0.0$ will provide maximum coverage of ${\mathcal S} \times {\mathcal A}$ and therefore results in performance close to the optimal policy's performance  for all values of p. As $\kappa$ increases, the affordance becomes more selective and the loss increases. The effect is not uniform because the intents are at the same time more accurate. %
}
\vskip -0.1in %
\end{figure}

We first study the impact of planning with  given affordances, as the degree of stochasticity in the environment changes. 

\textit{Experimental Setup:} We consider the gridworld depicted in Fig.~\ref{fig:Illustration}, where the actions are modified to be stochastic and fail with probability $1 - p$. The agent starts in the bottom left state, and the goal is situated in the top right state. Rewards are all 0, except at the goal where the reward is 1. We pick a collection of intents that describe successful movement in the directions of the different actions, and compute the affordances for different values of $\epsilon$. For the ease of plotting, in these and the following graphs we will use on the x-axis $\kappa=1-\epsilon$. Hence, for $\kappa=0$, all state-action pairs are in the affordance. As $\kappa$ increases, the affordance becomes smaller. We build the intent-induced MDP $M_{\cal I}$ as described in Sec.~\ref{sec:valueloss}, but the transition probabilities are limited only to state-action pairs in $\AF$.

Intuitively, as the affordance becomes smaller (by increasing $\kappa$), we will obtain an MDP in which $\cal I$ is more precise, but which limits the number of actions available at each state. Hence, it is interesting to inspect the trade-off between affordance size and value loss. We run value iteration in the two MDPs, $M$ and $M_{\cal I}$, to obtain the optimal policies $\pi^{*}_M$ and $\pi^{*}_{M_{\cal I}}$ respectively. We then evaluate and plot $||V^*_M - V^{\pi^*_{\cal I}}_M||_{2}$.

As the stochasticity in the actions decreases (higher values of $p$), a higher value of $\kappa$ results in more bias, due to reducing the action space too much. If the actions are deterministic ($p=1$), the affordance covers all state-action pairs regardless of the threshold, and the intents always match the real model, resulting in 0 loss everywhere (Fig.~\ref{fig:planningwithintents}, blue squares). Note that the curves are non-monotonic, because of the trade-off of using a better intent, which leads to more precise planning compared to reducing the state-action space which can introduce systematic errors.

\begin{figure}[t]
    \vskip -0.1in %
    \begin{center}
    \subfigure[One-room]{\label{fig:tworooms_plantime}\includegraphics[trim={0, 0.5cm, 0, 0.5cm},clip,width=0.23 \textwidth]{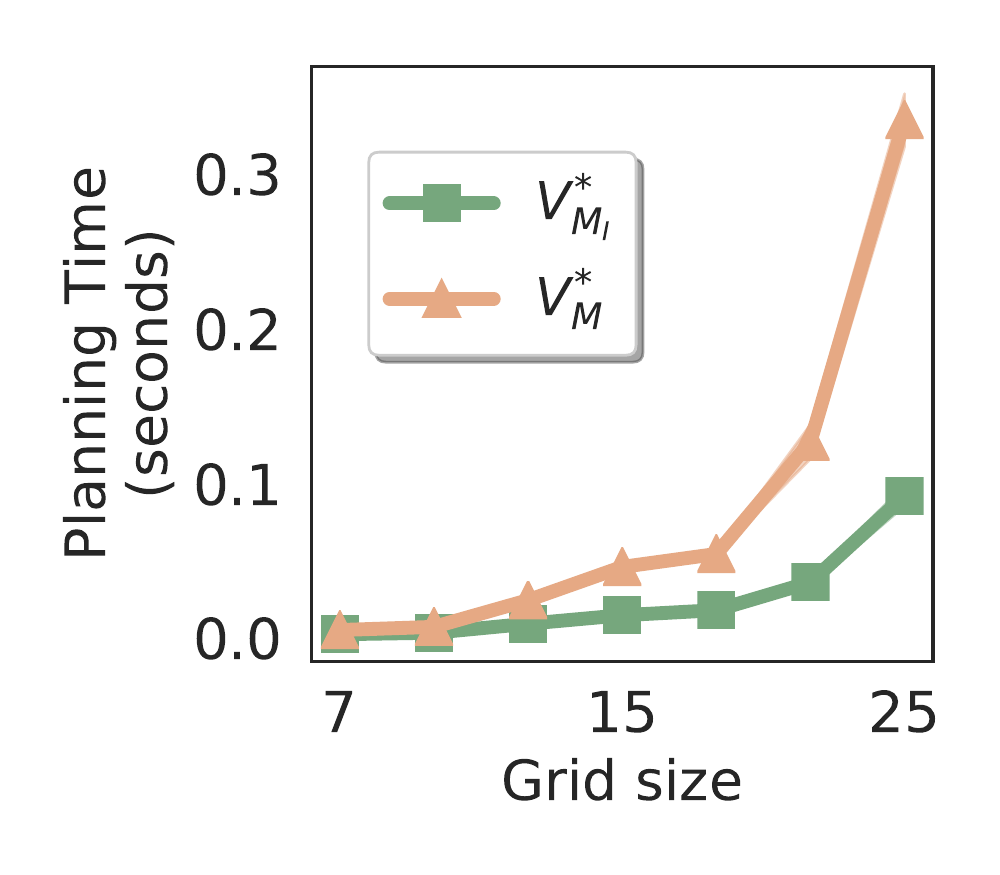}
    }
    \subfigure[Pachinko]{\label{fig:pachinko_plantime}\includegraphics[trim={0, 0.5cm, 0, 0.5cm},clip,width=0.23 \textwidth]{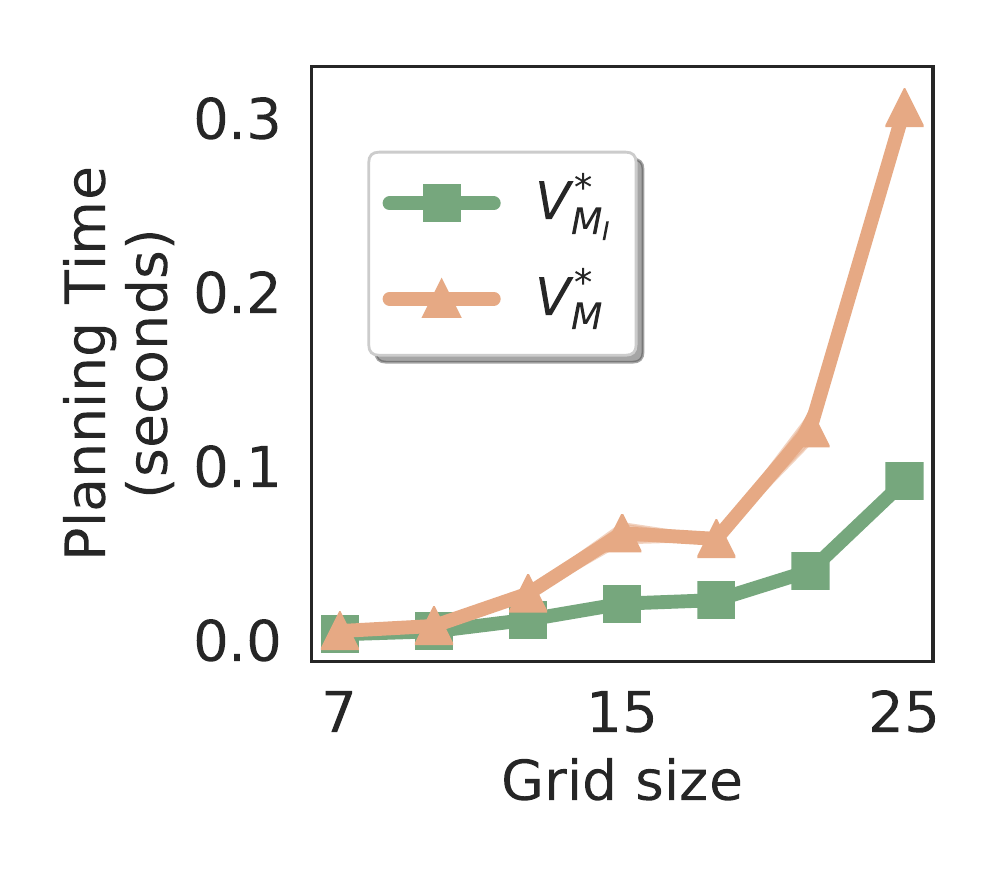}}
    \caption{\label{fig:planningtime}\textbf{Planning time for Value Iteration with and without affordances as a function of grid size.} Planning is significantly quicker with an affordance-aware model as the size of the grid increases. The shaded areas represent the standard error in the mean over 10 independent runs.}
    \end{center}
    \vskip -0.2in %
\end{figure}

\subsection{Accelerated Planning with Affordances}
\label{sec:planning_time}

To quantify the benefits of using a reduced state-action space through affordances, we investigate the running time of value iteration when planning in $M_{\cal I}$, compared to planning with the true model $M$. 

\textit{Experimental Setup:} We run value iteration in two MDPs, Pachinko and One-room (depicted in Fig.~\ref{fig:appendix_Illustration}), and simulate progressively difficult problems by increasing the size of the grid from $7$ to $25$. In both environments, the actions fail with probability $0.5$, lead to a neighbouring state chosen uniformly at random. In Pachinko, the wall configuration restricts paths through the maze. We use the same intents as before, and a threshold $\kappa=0.5$ to build the affordance. We measure the planning time as the time taken for the value iteration updates to be below a given, small threshold. 

Using the affordances significantly reduces the planning time, compared to using the full model, especially for the environments larger than size 15 (Fig.~\ref{fig:planningtime}). Hence, appropriate affordances can result in planning more efficiently. 

\subsection{Planning Loss with Affordances and Learned Models}
\label{sec:acc_effectice_affordances}

Thm.~\ref{theorem:planningvaluelossbound} shows that the planning value loss depends on the size of the affordances, $|\AF|$, and the amount of data used to estimate the model. We now study empirically the planning value loss for varying amounts of data and affordance size.

\textit{Experimental Setup:} For this experiment we use a $19 \times 19$ Pachinko grid world (Fig.~\ref{fig:appendix_Illustration}). The probability of success of the actions is drawn uniformly at random from $[0.1, 1]$ for each state. We estimate the model $\hat{M}_\AF$ from the data which is generated by randomly sampling a state $s$ and then taking a sequence of 10 uniformly random actions. For each state-action pair $(s,a)$ which has not been visited, the distribution over the next state, $P(\cdot|s, a)$, is initialized uniformly.%

We observe in Fig.~\ref{fig:planning_value_loss} that for the small data regime ($n = 25-200$ trajectories), the minimum planning loss is achieved at intermediate values of $\kappa$, which lead to an intermediate size of $|\AF|$. This result corroborates our theoretical bound. As expected, increasing the dataset size reduces the planning loss. Most importantly, we see a bias-variance trade off with the variation in the size of the affordance, as predicted by the bound in Sec.~\ref{sec:planningloss}. Models learned with scarce data and a selective affordance (high $\kappa$) lead to high errors, due to bias. If the affordance is not sufficiently selective, the model estimated is inaccurate, due to high variance, and the planning loss is also higher. 

\begin{figure}[t]
    \vskip -0.1in %
    \centering
    \includegraphics[width=0.8\columnwidth]{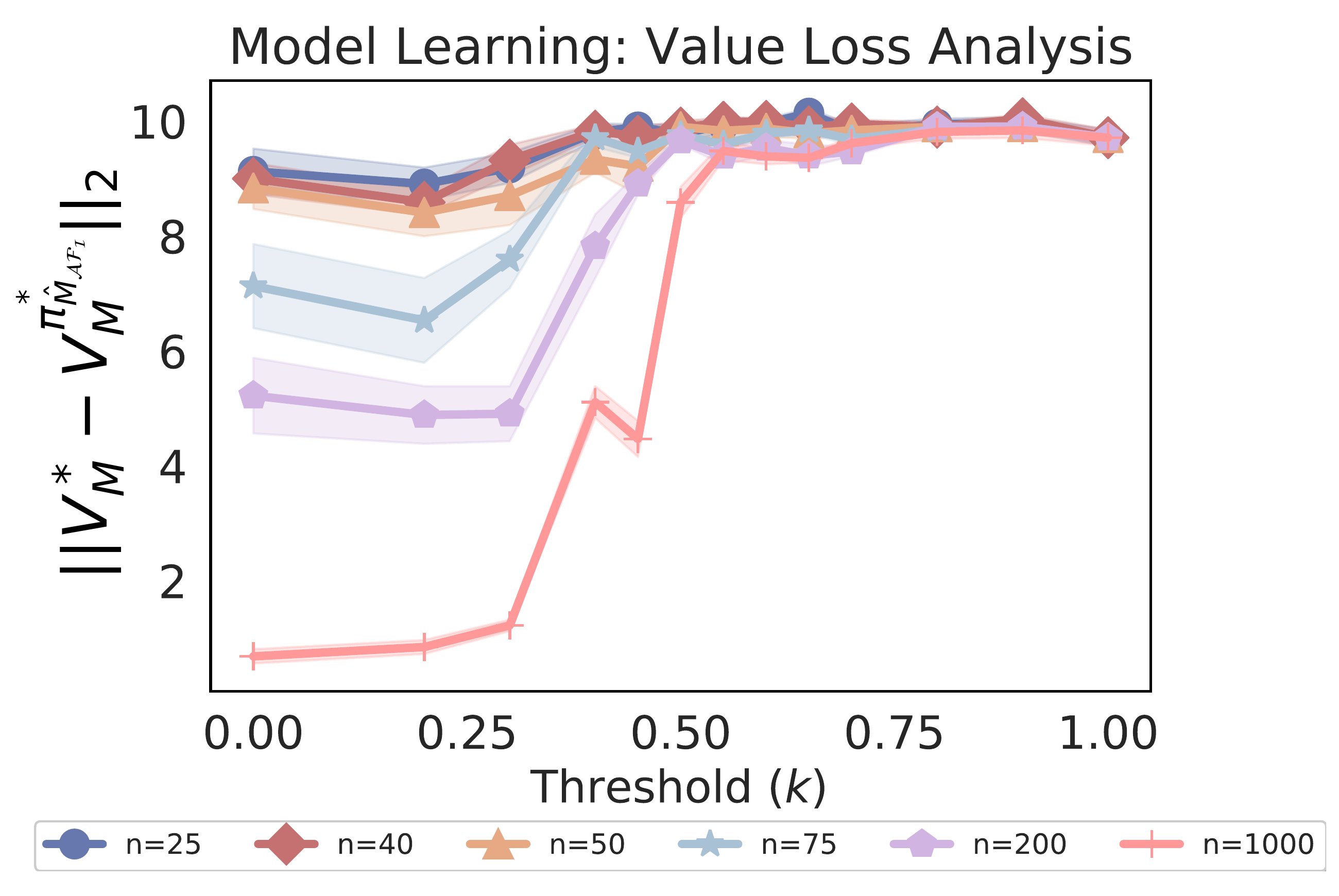}
    \caption{\label{fig:planning_value_loss}\textbf{Planning Value Loss Evaluation.} A model $\hat{M}_{\AF}$ is learned from the data, but state-action pairs that are not in the affordance are excluded.
    In the small data regime, an intermediate value of $\kappa$ is optimal as anticipated. With increase in the amount of data used to estimate the model, the planning loss eventually shrinks, as predicted by the theory, and increasing $|\AF|$ becomes better. The shaded areas show the standard error of the mean over 10 independent runs.}
    \vskip -0.2in %
\end{figure}

\section{Learning Affordances and Partial-Models with Function Approximation}\label{sec:experiments_learning}

In the previous section, we investigated the use of affordance-aware models to improve planning. However, the affordance was given a priori. In this section, we describe how we can learn and leverage  affordances in large state and action spaces, by using function approximation. In Sec.~\ref{sec:learning_aff}, we describe how to learn affordances through experience collected by an agent, in both discrete and continuous environments. In Sec.~\ref{sec:model_learning}, we then use the learned affordances to estimate partial models, which offer simplicity and better generalization.

\subsection{Learning Affordances}
\label{sec:learning_aff}

We represent an affordance as a classifier, $A_\theta(s,a,I)$, parameterized by $\theta$, which predicts whether 
a state $s\in{\cal S}$ and action $a\in{\cal A}$ can complete an intent, $I\in\mathcal{I}$. We assume that we have access to an intent-completion function, $c(s,a,s',I): {\cal S}\times{\cal A}\times{\cal S}\times \mathcal{I} \rightarrow [0, 1]$, that indicates if $s'\in I_a(s)$ for a given intent\footnote{In this work we consider intents that can be completed in one step (for example, changing state), so will always be 0 or 1. However, there are no restrictions on using multi-step intents, by using discounting, or using learned intents.}.  Transitions $(s,a,s')$, are collected from the environment and their intent completions $c(s,a,s',I)~\forall I \in \mathcal{I}$ are evaluated, to create a dataset, $\mathcal{D}$. We use the standard cross-entropy objective to train $A_\theta$: 
\begin{equation*}    \label{eqn:affordance_loss}
\mathcal{O}_A(\theta)=-\sum_{(s,a,s')\in\mathcal{D}}\sum_{I\in \mathcal{I}}c(s,a,s',I)\log A_\theta(s,a,I)
\end{equation*}

\begin{figure}[t!]
    \vskip -0.1in
    \begin{center}
    \includegraphics[trim={0 0.3cm 0 0},clip,width=0.4 \textwidth]{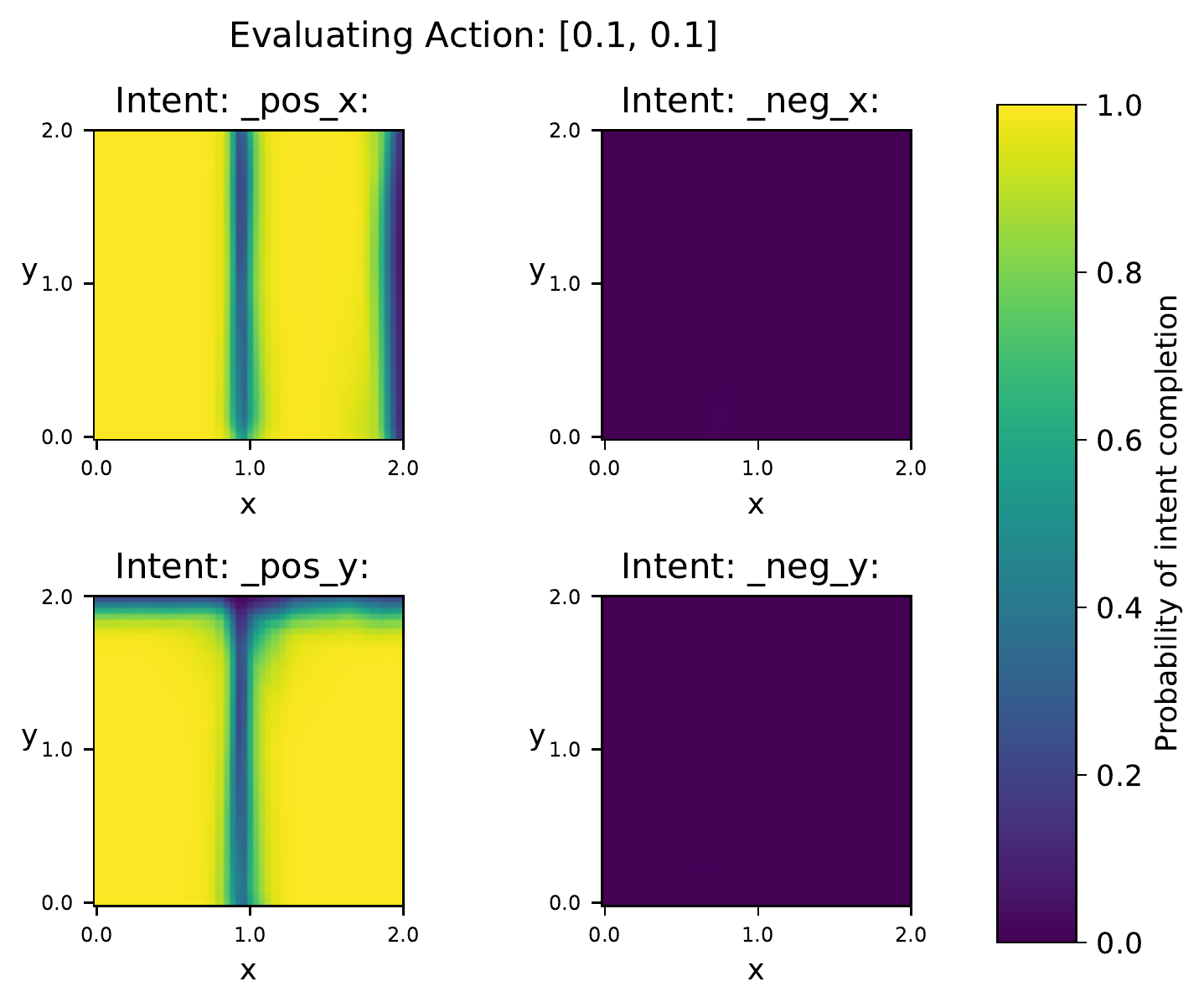}
    \vskip -0.1in
    \caption{\label{fig:continuous_intents}\textbf{Learned affordances in a continuous world.} Heatmaps show the probability of completing four distinct intents by the affordance classifier, $A_\theta$ for the the action, $F=(0.1,0.1)$ at every position in the world. $A_\theta$ correctly predicts that the intent $+\Delta x$ cannot be completed near the right walls, since going right has no effect on the agent's position there. Similarly $+\Delta y$ cannot be completed near the upper walls. Finally the two intents, $-\Delta y$ and $-\Delta x$, which describe movement in the opposite direction of the action, have close to zero probability.}
    \end{center}
    \vskip -0.2in %
\end{figure}

\emph{Experimental Setup:} We consider a continuous world where the agent can be in any position $(x, y)$ within a $2\times2$ 2D box (Fig~\ref{fig:continuousgridworld}). There is an impassable wall that divides the environment in two. 
When the environment resets, the starting position of the agent drifts from one side of the wall to the other. The action space consists of two displacements for each direction, $F=(F_x, F_y)$. The new position of the agent is drawn from $\mathcal{N}(\mu=(x+F_x, y+F_y), \sigma=0.1)$. If the action causes the movement through a wall, the position remains unchanged. We describe intents as movement in a particular direction: given state transition $(x, x')$,  $c(x,F,x',+\Delta x)=1, \forall F$ iff $x' - x > \delta$, for $\delta\in\mathbb{R}$, and 0 otherwise. We consider four intents: $\mathcal{I}=\{+\Delta x,-\Delta x,+\Delta y,-\Delta y\}$. Training data is collected by taking uniformly random actions with maximum displacement magnitude of $0.5$. We use a two layer neural network with 32 hidden units and RELU non-linearities \cite{nair2010rectified} and the Adam optimizer \cite{kingma2014adam} with learning rate $0.1$ to learn $A_\theta$.

\begin{figure*}
    \vskip -0.1in %
    \begin{center}
    \includegraphics[trim={2.5cm, 1.3cm, 0.5cm, 1.0cm},clip,width=1.1\textwidth]{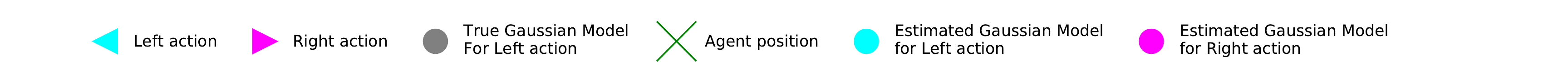}\\
    \vskip -0.1in
    \subfigure[{In-distribution $F_x=0.2$, Baseline Model}]{
    \includegraphics[width=0.24\textwidth]{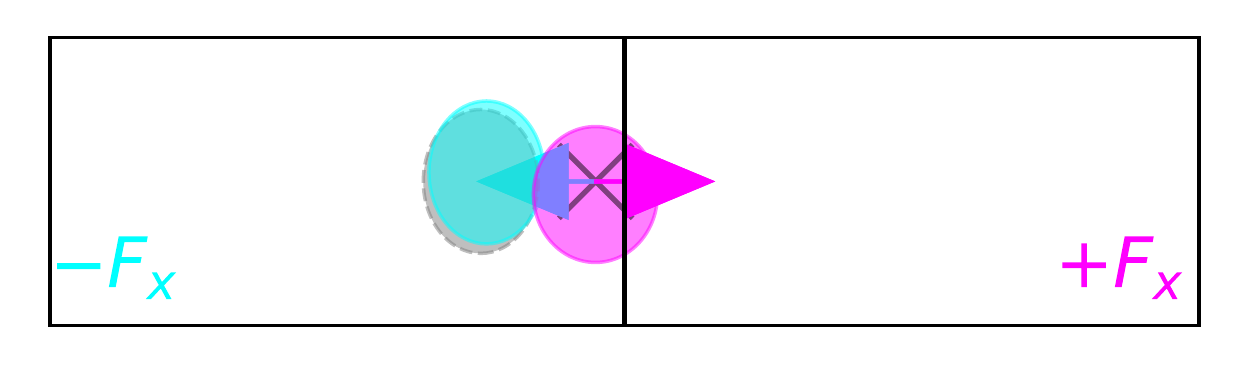}
    \includegraphics[width=0.24\textwidth]{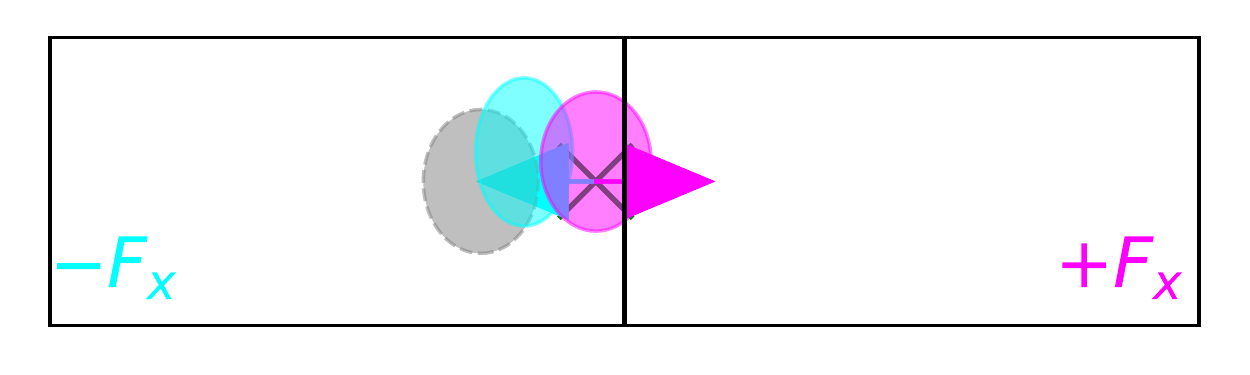}
    \label{fig:eval_vanilla_model}
    }
    \subfigure[{Out-of-distribution $F_x=0.75$, Baseline Models}]{
    \includegraphics[width=0.24\textwidth]{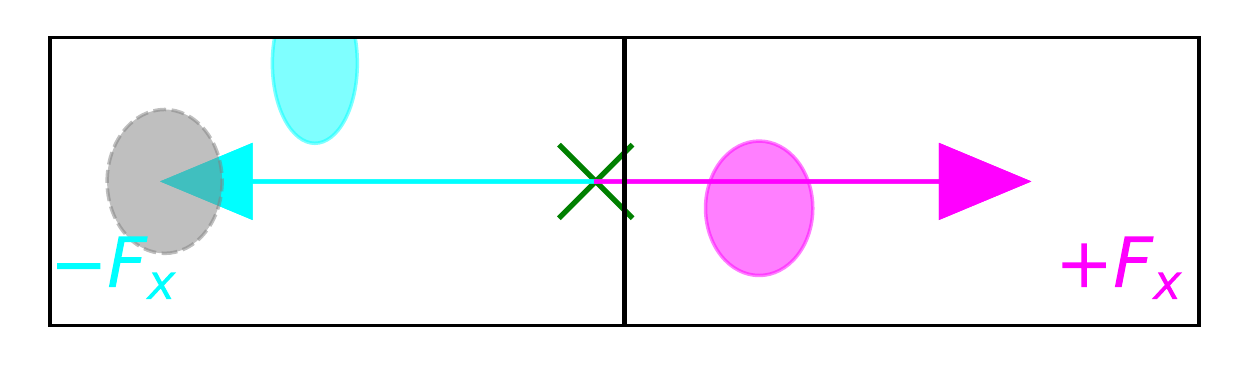}
    \includegraphics[width=0.24\textwidth]{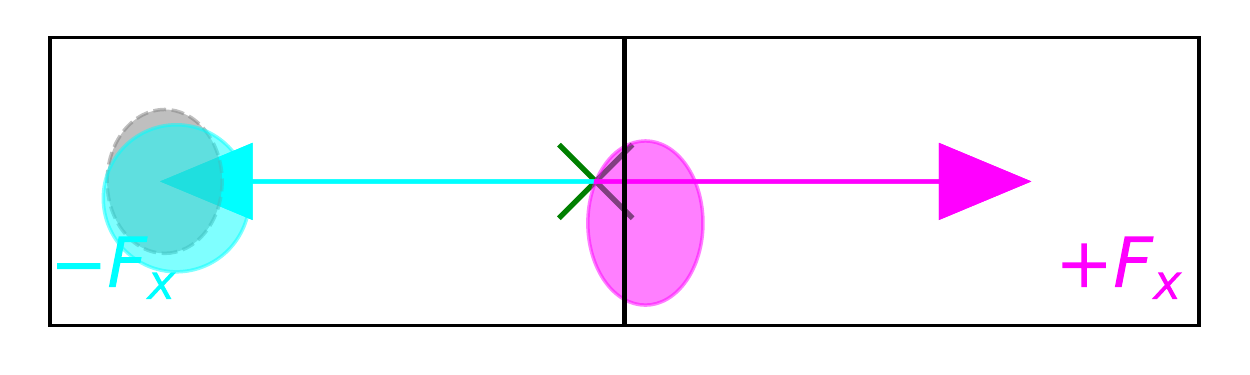}
    \label{fig:generalization_vanilla_model}
    }
    \subfigure[{In-distribution $F_x=0.2$, Affordance-aware Model}]{
    \includegraphics[width=0.24\textwidth]{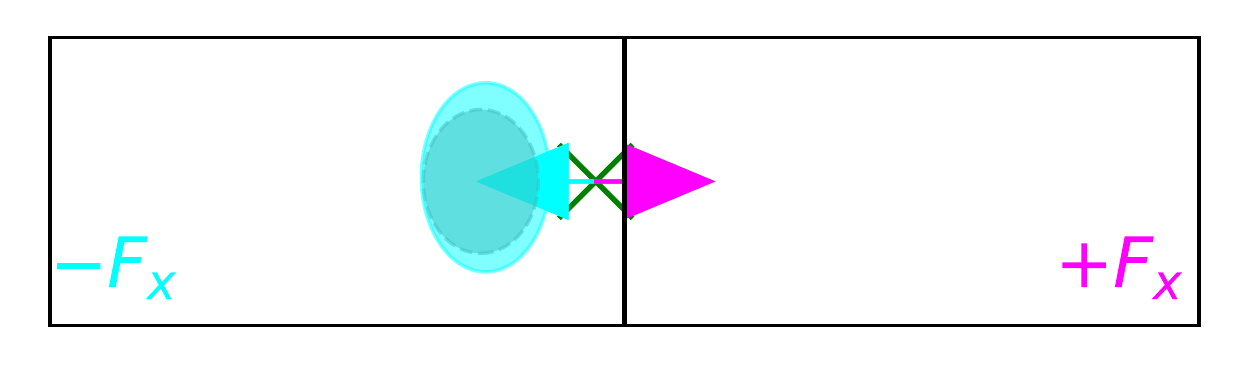}
    \includegraphics[width=0.24\textwidth]{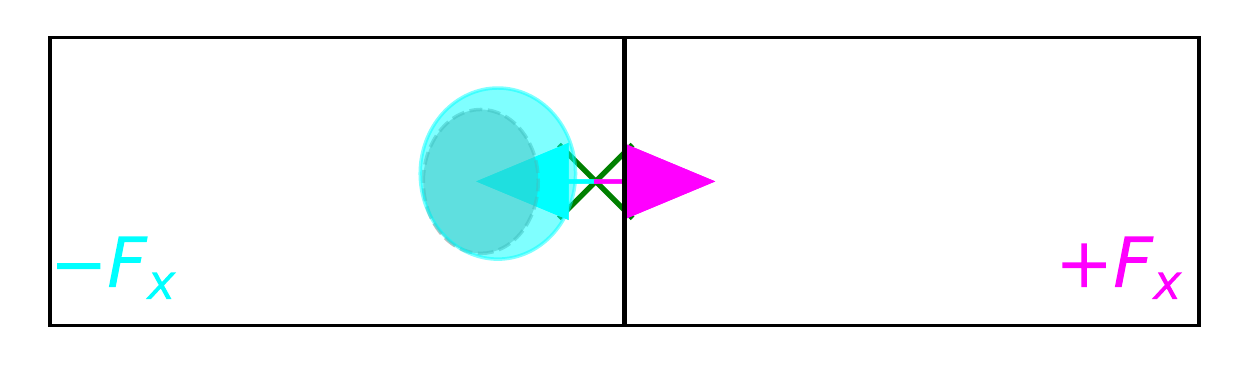}
    \label{fig:eval_affordance_model}
    }
    \subfigure[{Out-of-distribution $F_x=0.75$, Affordance-aware Model}]{
    \includegraphics[width=0.24\textwidth]{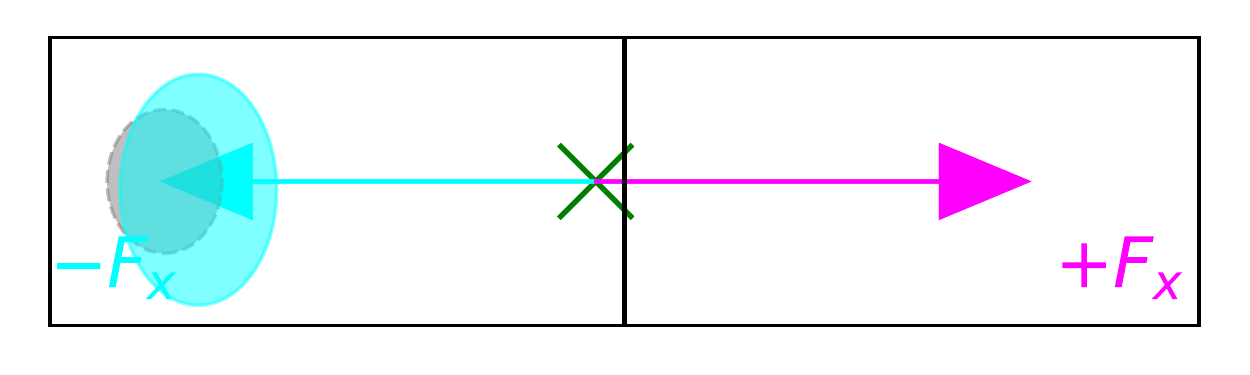}
    \includegraphics[width=0.24\textwidth]{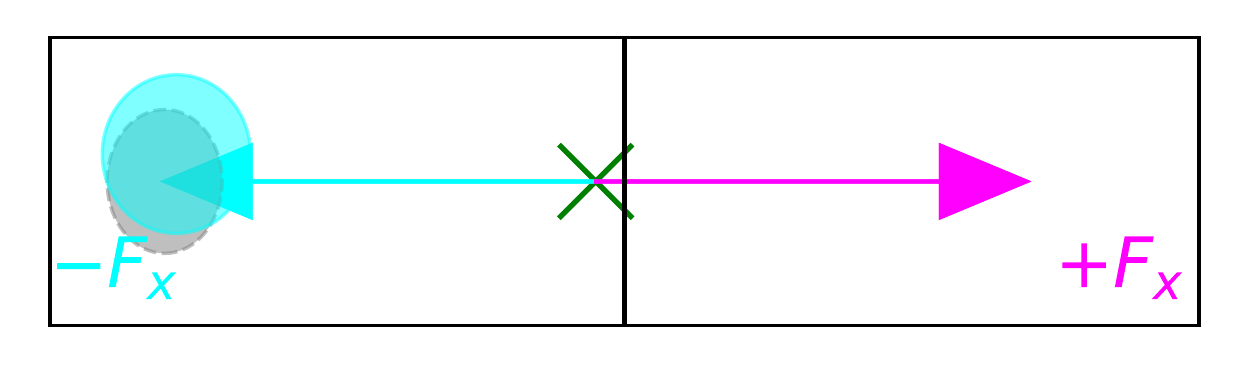}%
    \label{fig:generalization_affordance_model}}
    \vskip -0.1in
    \caption{\label{fig:eval:indist}\textbf{Evaluation of trained baseline and affordance-aware models for two independent seeds.} Top row (a,b) features the Baseline model while the bottom row (c, d) features the Affordance-aware Model.
     (a) On in-distribution actions, the baseline transition model learns reasonable predictions. For $+F_x$, the agent spreads its mass around its position, which might lead to predictions orthogonal or in the backward direction.
     (b) For out-of-distribution actions, the baseline transition model shows two failure modes for $+F_x$:  It either predicts that the agent can go through the wall, or tries to distribute the mass of the next state along the wall. For the left action, $-F_x$, the model predicts a degenerate distribution showing orthogonal movement to the action.
     (c, d) Transition models trained with  affordances predict a reasonable transition distribution for both in-distribution and out of distribution actions. For $+F_x$, the affordance classifier, $A_\theta$ predicts that no intent can be completed, and therefore the transition model is never queried. 
    \small{The radii of the circles show 2 standard deviations of the predicted model. See Fig~\ref{fig:appendix_eval_vanilla_model}, \ref{fig:appendix_eval_affordance_model}, \ref{fig:appendix_generalization_vanilla_model}, and \ref{appendix_fig:generalization_affordance_model} for more comparisons.}
    }
    \end{center}
    \vskip -0.2in %
\end{figure*}

After $2000$ updates, we evaluate the probability of completing the four intents for action $F=(0.1, 0.1)$ %
The classifier correctly learns that it cannot complete any intent near the walls (Fig~\ref{fig:continuous_intents}), and that applying a positive $F_x$ and $F_y$ cannot complete the intents  $-\Delta x$ and $- \Delta y$.

Analogous to the learning process in continuous environments, we are able to learn affordances in the discrete gridworld (Fig~\ref{fig:appendix_Illustration}). The algorithm we use to learn affordances has no requirements on the state- action space and therefore is expected to scale to more complex environments provided we have access to intents.

\subsection{Affordance-aware Partial-Model Learning}
\label{sec:model_learning}
In this section with demonstrate the practical use of having an affordance classifier in combination with a generative model of the environment. In particular, during training we use the affordance classifier to mask out transition data which do not complete any intent allowing the model to focus on learning from transitions that are relevant. During inference, we first query the affordance classifier to output if a state-action pair achieves any intent before querying the model for predictions. We show that the affordance-aware partial model produces better qualitative predictions and generalizes to out-of-distribution transition data.

\textit{Experimental Setup:} We re-use the continuous world 
from Sec~\ref{sec:learning_aff}. Since the underlying world uses a Gaussian model for transition noise, we use a Gaussian generative model, $P_\phi(s'|a,s)=\mathcal{N}(\mu_\phi(s,a), \sigma_\phi(s,a))$, to estimate the transition dynamics. Here $\mu_\phi$ and $\sigma_\phi$ are function approximators with parameters $\phi$, that estimate the mean and standard deviation of the distribution. To obtain a baseline model, we maximize the log probability of the next state transition: $\mathcal{O}_\text{baseline}(\phi) = \sum_{(s,a,s')\in \mathcal{D}}\log P_\phi(s'|s,a)$. 
To train an affordance-aware model, we use the outputs of the affordance classifier, $A_\theta$, to mask the loss for $P_\phi$: 
\begin{equation*}
\small{\mathcal{O}_{\text{aff}}(\phi)= \sum_{(s,a,s')\in \mathcal{D}}\mathbbm{1}\Big[\max_{\forall I\in\mathcal{I}}A_\theta(s,a,I) > k\Big]\log P_\phi(s'|s,a)}
\end{equation*}
for a threshold $k=0.5$ and $\mathbbm{1}$ is the indicator function that returns 1 if the argument is True\footnote{The full algorithm is detailed in Alg.~\ref{alg:aff_model_learning} and source code is provided.}. This loss focuses the learning of the model on transitions that complete an intent. In this setting, both the affordance classifier and the partial model are trained simultaneously.

After training for 7000 updates, both the baseline and affordance-aware models achieve a similar training loss (Fig.~\ref{fig:appendix_eval_vanilla_model}). To understand how these models behave, we inspect their predictions near the wall, by  querying two actions: $-F_x$, which leads to leftwards movement and $+F_x$. which will not have any effect due to the impassable wall. We keep $F_y=0$ fixed. 

\textit{In-distribution qualitative behavior:} We first consider the actions that are seen during training. For $F_x=-0.2$, two out of five baseline models predict an incorrect or offset transition distribution (See two representative seeds in Fig.~\ref{fig:eval_vanilla_model} and all Fig.~\ref{fig:appendix_eval_vanilla_model}) compared to near-perfect predictions in all runs of the affordance-aware model (See two representative seeds in Fig.~\ref{fig:eval_affordance_model} and all in Fig.~\ref{fig:appendix_eval_affordance_model}). For the action $F_x=+0.2$, which moves into the wall, the baseline model distributes the mass of its predictions along the wall. This is reasonable considering, that the agent will not move and the model is restricted to produce a Gaussian prediction. On the other hand, the affordance-aware model first uses $A_\theta$ to determine that the action can not complete an intent in this situation and $P_\phi$ is never queried. In a planning setting \citep{schrittwieser2019mastering}, such a model could be used to reduce the number of actions considered and thereby reduce computational complexity.

\textit{Out-of-distribution qualitative behavior:} To analyze how these learned models generalize, we evaluate them using a displacement never seen during training. For the action, $F_x=-0.75$, only 1 out of 5 baseline models predicts a good solution. Most distributions are offset or have a wide standard deviation (See two representative seeds in Fig.~\ref{fig:generalization_vanilla_model} and all in Fig.~\ref{fig:appendix_generalization_vanilla_model}). In contrast, all affordance-aware models predict reasonable distributions (See two representative seeds in Fig.~\ref{fig:generalization_affordance_model} and all in Fig.~\ref{appendix_fig:generalization_affordance_model}). For $F_x=+0.75$, which cannot be executed in the environment, the baseline model predicts that the agent can move through the wall (Fig.~\ref{fig:generalization_vanilla_model}) while the affordance-mask determines that the partial model cannot be queried for this action (Fig.~\ref{fig:generalization_affordance_model}).

These results indicate that despite having similar quantitative losses, the baseline and affordance-aware models have qualitatively different behavior. In retrospect, this is not surprising given that they have vastly different learning goals. The baseline models need to take into account the edge cases to make good predictions in all situations. On the other hand, since the classifier prevents us from querying the affordance-aware model when the action cannot be executed, it need only learn the rule: $\mathcal{N}(\mu=F_x+x, \sigma=0.1)$.

This section used the affordance classifier to focus the training of transition models on actions that are relevant. 
Our results %
show that the affordance-aware model can generalize to out-of-distribution actions. We expect that in environments with pixel-based~\citep{kaiser2020model} or other structured observations the model will likely also generalize to novel states.

\section{Related Work}\label{sec:related}

Affordances have a rich history in a variety of fields such as robotics, psychology, ecology, and computer vision. This notion originated in psychology~\citet{gibson1977theory, heft1989affordances, chemero2003outline}, but our approach is more related to the use of goals and preconditions for actions in classical AI systems, such as STRIPS \cite{fikes1971strips}.

In AI, researchers have also studied \textit{object-affordances}, in order to map actions to objects~\cite{slocum2000further, fitzpatrick2003learning, lopes2007affordance}. \citet{montesano2008learning} presented a developmental approach to learning object affordances. %
Their approach focuses on robots that learn basic skills such as visual segmentation, color, and shape detection. Different modalities have been used to detect affordances, such as visuo-motor simulation~\cite{schenck2012detecting}, visual characteristics~\cite{song2015learning}, and text embeddings~\cite{fulda2017can}.

In the context of MDPs, \citet{abel2014toward} define affordances as propositional functions on states. In particular, affordances consist of a mapping $\langle p, g  \rangle \rightarrow \mathcal{A'}$ where $\mathcal{A'} \subset \mathcal{A}$ represents the relevant action-possibilities, $p$ is a predicate on states $\mathcal{S} \rightarrow \{ 0,1 \} $ representing the precondition for the affordance and $g$ is an ungrounded predicate on states representing a lifted goal description. 
An extension learns affordances in the context of goal-based priors~\cite{abel2015goal}: they learn probability distributions over the optimality of each action for a given state and goal. However, their  approach relies on Object Oriented-MDPs~\cite{diuk2008object}, which assume the existence of \textit{objects} and \textit{object class} descriptions. Our approach is more general, using MDPs of any kind, and does not assume any knowledge about existing object classes or their attributes. Additionally, their work assumes that the model of the OO-MDP is given to the agent upfront.

\citet{cruz2014improving} demonstrate the utility of affordances given as \textit{prior knowledge} to RL agents. Later, \citet{cruz2016training,cruz2018multi} \textit{learned} contextual affordances as a tuple of $\langle \text{state}, \text{object}, \text{action}, \text{effect}\rangle$. Their approach, however, depends on known objects, such as ``sponge" or ``cup", in the construction of a \textit{state}, which poses considerable restrictions. Instead, we pursue  learning general purpose affordances from data and do not make any such object-centric assumptions.

A related family of approaches involves learning when to prune actions \cite{even2003action, sherstov2005improving, rosman2012good, zahavy2018learn} in order to cope with large action-spaces. Recently, AlphaStar used a similar approach of action masks to prune the action-space derived from human data ~\cite{vinyals2019grandmaster}. 
Our goal is to not only learn what can be done in a given state, but also build simpler, more robust models from this understanding. Besides, our approach to affordances can be used also to characterize  states, not just to eliminate actions.

Much of the work on partial models \citep{oh2017value,amos2018learning,guo2018neural,gregor2019shaping} focuses on models that predict only some of the state variables, or that make latent-space predictions \cite{schrittwieser2019mastering}, rather than restricting the space of actions considered in a given state. Our work is complimentary to existing techniques for building partial models in that they could still leverage our approach to reduce the number of actions, thereby further reducing the computational complexity of planning.

\section{Discussion and Future work}
We have laid the foundation of using affordances in RL to achieve two goals: 1) decreasing the computational complexity of planning, and 2) enabling the stable learning of partial models from data, which can generalize better than full models. %

One limitation of our work is that affordances are constructed based on intents, which are specified a priori. However, intents could also be learned from data---a problem which ties closely to subgoal discovery in temporal abstraction~\cite{mcgovern2001automatic,csimcsek2005identifying}. Critical states~\cite{goyal2019infobot, nair2019hierarchical} that are important for decision making would be ideal candidates to include in intent distributions.

A promising future theoretical direction would be to establish a bound on the number of samples required to learn affordance-aware partial models. Intuitively, and as shown in our results, it should be much faster to learn a simpler model on a subset of states and actions, compared to a complex model class for the full state and action space. Our results from Sec.~\ref{sec:experiments_learning} suggest that learned affordances can be used to estimate approximate transition models that are simpler and generalize better, despite reaching similar training losses. 

Finally, while we focused on affordances for primitive actions, we believe that an important application of this idea will be in the context of hierarchical reinforcement learning \citep{asadi2007effective, manoury2019hierarchical}, where intents are akin to subgoals and affordances could then take the role of initiation sets for options \cite{khetarpal2020options}. Intents could then be learned using information theoretic criteria proposed for option terminations as in \citet{harutyunyan2019termination}. Future avenues for this work in the context of option models~\cite{sutton1999between} could potentially include modelling long-term side effects of actions that do not match any of the intents. Moreover, if agents are allowed to create new, extended actions, using affordances would provide an effective way to control planning complexity, given ever-expanding action sets.

\section*{Acknowledgements}
The authors would like to thank Feryal Behbahani, Ankit Anand, Anita Gergely and the anonymous ICML reviewers for valuable feedback on a draft of this paper, Marc G. Bellemare, Shibl Mourad, Andre Barreto and the entire DeepMind Montreal team for constructive feedback throughout this project, Prakash Panangaden, Emmanuel Bengio, Kushal Arora and other colleagues at Mila for useful discussions. A special thank you to Kory Mathewson for timely and incredibly detailed feedback on an early version of this draft.

\bibliography{references}
\bibliographystyle{icml2020}

\appendix
\onecolumn
\renewcommand\thefigure{\thesection\arabic{figure}}    
\setcounter{section}{0}
\setcounter{theorem}{0}
\section{Appendix}
\label{sec:appendix}
\subsection{Reproducibility}
We follow the reproducibility checklist by \cite{joelle2019} to ensure this research is reproducible. For all algorithms presented, we include a clear description of the algorithm and source code is included with these supplementary materials. For any theoretical claims, we include: a statement of the result, a clear explanation of any assumptions, and complete proofs of any claims. For all figures that present empirical results, we include: the empirical details of how the experiments were run, a clear definition of the specific measure or statistics used to report results, and a description of results with the standard error in all cases. All figures with the returns show the standard error across multiple independent random seeds. In the following section, we provide complete details of the computing requirements and dependencies for the code.

\subsection{Computing and Open source libraries.}
All experiments were conducted using free Google Colab instances\footnote{\href{https://colab.research.google.com/}{https://colab.research.google.com/}}. We used EasyMDP version 0.0.5 for the GridWorlds in Section 6. For the function approximators and probabilistic models in Section 7 we used Tensorflow version 2.1\footnote{\href{tensorflow.org}{tensorflow.org}} and Tensorflow Probability version 0.9.0\footnote{\href{https://github.com/tensorflow/probability/releases/tag/v0.9.0}{https://github.com/tensorflow/probability/releases/tag/v0.9.0}} respectively. Source code is provided at \href{https://tinyurl.com/y9xkheme}{https://tinyurl.com/y9xkheme}.

\subsection{Proofs}
\label{sec-proofs}

\subsubsection{Proof of Value Loss Bound}
\label{sec:appendix_valuelossbound}
\begin{theorem}
\label{thm:value_loss_analysis}
Let $\mathcal{I}$ be a set of intents and $\epsilon_{s,a}$ be the minimum degree to which an intent is satisfied for $(s,a)$.
\begin{equation*}
\sum_{s'} \Big| P_{\mathcal{I}}(s'|s,a) - P(s'|s,a) \Big| \leq \epsilon_{{s,a}}.
\end{equation*}
Let $\epsilon=\max_{s,a} \epsilon_{s,a}$. Then, the value loss between the optimal policy for the original MDP $M$ and the optimal policy $\pi^*_\mathcal{I}$ computed from  the induced MDP $M_{\cal I}$ is given by:
\begin{equation*}
    ||V^{\pi^{*}_\mathcal{I}}_{M} - V^*_M||_{\infty} \leq   2\epsilon \frac{\gamma \texttt{Rmax}}{(1-\gamma)^{2}},  %
\end{equation*}
where $\texttt{Rmax}$ is the maximum possible value of the reward.
\end{theorem}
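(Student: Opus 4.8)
The plan is to view this as an instance of the standard "simulation lemma" / model-error-to-value-error argument, adapted to the fact that the induced MDP $M_{\cal I}$ changes only the transition kernel (the reward $r$ is untouched) and that the kernel perturbation is bounded uniformly in $\ell_1$ by $2\epsilon$ (note the factor $2$: Eq.~\eqref{eq:epsilon} bounds $\sum_{s'}|P_{\cal I}-P|$ by $\epsilon_{s,a}$, but the total-variation metric in Def.~\ref{def:intent_state} carries the $\tfrac12$, so the $\ell_1$ bound is what matters here). First I would record the elementary fact that for any fixed policy $\pi$ and any bounded function $f:\mathcal S\to\mathbb R$, the one-step Bellman backups of $M$ and $M_{\cal I}$ under $\pi$ differ pointwise by at most $\gamma \|f\|_\infty \max_{s,a}\sum_{s'}|P_{\cal I}(s'|s,a)-P(s'|s,a)| \le \gamma\epsilon\|f\|_\infty$, since the reward terms cancel.

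Next I would use this to bound $\|V^{\pi}_{M_{\cal I}} - V^{\pi}_{M}\|_\infty$ for an arbitrary fixed $\pi$. Writing $T^\pi$ and $T^\pi_{\cal I}$ for the respective Bellman operators, both are $\gamma$-contractions with fixed points $V^\pi_M$ and $V^\pi_{M_{\cal I}}$; a telescoping/triangle-inequality argument gives $\|V^\pi_M - V^\pi_{M_{\cal I}}\|_\infty \le \frac{1}{1-\gamma}\|T^\pi V^\pi_M - T^\pi_{\cal I} V^\pi_M\|_\infty \le \frac{\gamma\epsilon}{1-\gamma}\|V^\pi_M\|_\infty \le \frac{\gamma\epsilon}{1-\gamma}\cdot\frac{\texttt{Rmax}}{1-\gamma}$, i.e. a per-policy value error of at most $\frac{\gamma\epsilon\,\texttt{Rmax}}{(1-\gamma)^2}$.

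Finally I would turn the per-policy bound into the stated optimality bound via the usual "two optimal policies" sandwiching. Let $\pi^* = \pi^*_M$ and $\pi^*_{\cal I} = \pi^*_{M_{\cal I}}$. Then
\begin{align*}
V^{\pi^*_{\cal I}}_M - V^{\pi^*}_M
&= \big(V^{\pi^*_{\cal I}}_M - V^{\pi^*_{\cal I}}_{M_{\cal I}}\big) + \big(V^{\pi^*_{\cal I}}_{M_{\cal I}} - V^{\pi^*}_{M_{\cal I}}\big) + \big(V^{\pi^*}_{M_{\cal I}} - V^{\pi^*}_M\big).
\end{align*}
The middle term is $\le 0$ because $\pi^*_{\cal I}$ is optimal in $M_{\cal I}$; each outer term is bounded in sup-norm by $\frac{\gamma\epsilon\,\texttt{Rmax}}{(1-\gamma)^2}$ from the previous step. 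Combining (and noting $V^{\pi^*}_M \ge V^{\pi^*_{\cal I}}_M$ pointwise so the left side is already sign-definite on each coordinate after taking absolute values) yields $\|V^{\pi^*_{\cal I}}_M - V^*_M\|_\infty \le \frac{2\gamma\epsilon\,\texttt{Rmax}}{(1-\gamma)^2}$, which is exactly \eqref{eq:theorem1}.

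The only real subtlety — the "main obstacle," though it is minor — is bookkeeping the factor of $2$ correctly: one factor of $2$ would wrongly appear if one conflated the total-variation bound with the $\ell_1$ bound, and the genuine factor of $2$ in the final statement comes from the two-sided sandwich, not from total variation. I would make sure the $\ell_1$ perturbation $\sum_{s'}|P_{\cal I}-P|\le\epsilon$ (as in \eqref{eq:epsilon}) is the quantity propagated through the Bellman backup, and that the single factor of $2$ is introduced exactly once, at the sandwiching step. Everything else is the routine contraction-mapping estimate.
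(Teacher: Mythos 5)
Your proof is correct and arrives at the stated bound; the skeleton (propagate the $\ell_1$ model perturbation through one Bellman backup, contract, then triangle-inequality over a decomposition that contributes the factor of $2$) is the same as the paper's, but the details differ in a way worth noting. The paper splits $\|V^{\pi^*_{\mathcal I}}_M - V^*_M\|_\infty$ into $\|V^*_M - V^*_{M_{\mathcal I}}\|_\infty + \|V^{\pi^*_{\mathcal I}}_M - V^*_{M_{\mathcal I}}\|_\infty$ and bounds the first term by importing a lemma of Jiang et al.\ that compares the \emph{optimal} value functions of two MDPs via a function class $\mathrm{F}$ containing $V^*_{M}$; you instead use the three-term sandwich whose middle term is nonpositive by optimality of $\pi^*_{\mathcal I}$ in $M_{\mathcal I}$, so you only ever need the fixed-policy bound $\|V^\pi_M - V^\pi_{M_{\mathcal I}}\|_\infty \le \tfrac{1}{1-\gamma}\|T^\pi V^\pi_M - T^\pi_{\mathcal I}V^\pi_M\|_\infty \le \tfrac{\gamma\epsilon\,\texttt{Rmax}}{(1-\gamma)^2}$. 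This is more elementary and self-contained (it is essentially the contraction argument the paper itself writes out only later, in the appendix lemma used for the planning-loss theorem), and your handling of the sign of the final difference is more careful than the paper's somewhat informal treatment of its ``Term 2.'' One small caution: your parenthetical claim that the $\ell_1$ perturbation is bounded by $2\epsilon$ is at odds with the displayed hypothesis of the theorem, which bounds $\sum_{s'}|P_{\mathcal I}(s'|s,a)-P(s'|s,a)|$ by $\epsilon_{s,a}$ directly (the total-variation factor $\tfrac12$ notwithstanding); your actual derivation uses $\epsilon$, consistently with that hypothesis, so this is only a wording slip, but if you had carried $2\epsilon$ through the backup the final constant would have doubled.
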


\begin{proof}

The value loss  that we want to bound is given by:
\begin{equation}
    ||V^{\pi^{*}_\mathcal{I}}_{M} - V^*_M||_{\infty}= \max_{s \in \cal S} \Big| V^{\pi^{*}_\mathcal{I}}_{M}(s) - V^*_M(s) \Big|
 \leq \underbrace{\max_{s \in \cal S} \Big| V^*_{M}(s) - V^{*}_{M_{\cal I}}(s) \Big|}_\text{Term 1} + \underbrace{\max_{s \in \cal S} \Big| V^{\pi^{*}_\mathcal{I}}_{M}(s) - V^*_{M_{\cal I}}(s) \Big|}_\text{Term 2}
\end{equation}
where we used the triangle inequality in the last step.

In order to bound Term 1, we first define the distance function $d^{\mathrm{F}}_{M_1, M_2}$ between two MDPs $M_1$ and $M_2$ that differ only in their dynamics as follows.

\ddef{$d^{\mathrm{F}}_{M_1, M_2}$}{ \label{def:dfm}
Given two MDPs $M_1$ and $M_2$ with dynamics $P_1$ and $P_2$ respectively, and function $f: \cal S \to \mathbb{R}$, define:
\begin{equation*}
    d^{f}_{M_1, M_2}(s,a) := \Big | \mathbb{E}_{s' \sim P_1(.|s,a)} [f(s')] -  \mathbb{E}_{s' \sim P_2(.|s,a)} [f(s')]  \Big |.
\end{equation*}
For any set of functions $\mathrm{F}$, define:
\begin{equation}
 d^{\mathrm{F}}_{M_1, M_2}(s,a) := \sup_{f \in \mathrm{F}} d^{f}_{M_1, M_2}(s,a).
\end{equation}
}

We now introduce Lemma 1 from \cite{jiang2018pac}, which we will use to establish our result.
\begin{lemma}
\label{lemma:jiang}
\cite{jiang2018pac} Given any $M_1$ and $M_2$, and any set $\mathrm{F}$ of value functions containing $V^{*}_{M_1}$ (i.e. $V^{*}_{M_1}\in \mathcal{F}$), we have: \begin{equation}
    ||V^{*}_{M_1} - V^{*}_{M_2}||_{\infty} \leq H ||d^{\mathrm{F}}_{M_1, M_2}||_{\infty},
\end{equation} where H is the horizon used in computing the value functions $V^{*}_{M_1}$ and $V^{*}_{M_2}$.
\end{lemma}

We invoke Lemma~\ref{lemma:jiang} in the case of discounted infinite horizon MDPs. Therefore:
\begin{equation} \label{eq:value_loss_1}
    ||V^{*}_{M} - V^{*}_{M_{\cal I}}||_{\infty} \leq  \frac{1}{(1 - \gamma)} ||d^{\mathrm{F}}_{M, M_{\cal I}}||_{\infty}
\end{equation} 

Consider now that $M_1$ and $M_2$ have the same reward function, and let $f : S \to \mathbb{R}$ be a function bounded by $\texttt{Rmax}$. Using the Bellman equation,  the term $d^{f}_{M, M_{\cal I}}$ can be expanded as follows:
\begin{equation*}
\begin{split}
  d^{f}_{M, M_{\cal I}}(s,a) &=  \Big| \sum_{s'} P(s'|s,a) \gamma f(s')
  - \sum_{s'} P_{\cal I}(s'|s,a) \gamma f(s') \Big|  \\
   &\leq  \sum_{s'} \gamma |f(s')| \left| P(s'|s,a) -  P_{\cal I}(s'|s,a)   \right| \leq \epsilon_{{s,a}} \frac{\gamma \texttt{Rmax}}{(1-\gamma)} 
\end{split}
\end{equation*}
where in the last step we used the notation introduced in Eq.~(\ref{eq:epsilon}).

Now, to be able to plug this bound back in Equation~\ref{eq:value_loss_1}, we need to consider the infinity norm: 
\begin{align}
  ||d^{\mathcal{F}}_{M, M_{\cal I}}||_{\infty} & = \max_{s,a}  \epsilon_{{s,a}} \frac{\gamma \texttt{Rmax}}{(1-\gamma)} 
\end{align}
Plugging the above back in Equation~\ref{eq:value_loss_1} and using Equation~\ref{eq:epsilon} yields:
\begin{equation}
    ||V^{*}_{M} - V^{*}_{M_{\cal I}}||_{\infty} \leq \epsilon \frac{\gamma \texttt{Rmax}}{(1-\gamma)^{2}}.
\end{equation}

We now consider the Term 2. Note that we now have to analyze the value loss between the optimal policy for the original MDP $M$ and the optimal policy $\pi^*_\mathcal{I}$ for the induced MDP $M_{\cal I}$. In other words, we would like to bound the policy evaluation error in the intended MDP as follows:
\begin{equation}
    \max_{s \in \cal S} \Big| V^{\pi^{*}_\mathcal{I}}_{M}(s) - V^*_{M_{\cal I}}(s) \Big| = \max_{s \in \cal S} \Big| V^{\pi^{*}_\mathcal{I}}_{M}(s) - V^{\pi^{*}_\mathcal{I}}_{M_{\cal I}}(s) \Big|
\end{equation}

Expanding each term as follows:
\begin{equation}
   V^{\pi^{*}_\mathcal{I}}_{M}(s) - V^{\pi^{*}_\mathcal{I}}_{M_{\cal I}}(s) =  \Big( R(s, \pi^{*}_\mathcal{I}(s))  + \gamma \mathbb{E}_{s' \sim P(.|s,a)} [V^{\pi^{*}_\mathcal{I}}_{M}(s')] \Big) - \Big( R(s, \pi^{*}_\mathcal{I}(s))  + \gamma \mathbb{E}_{s' \sim P_{\cal I}(.|s,a)} [V^{\pi^{*}_\mathcal{I}}_{M_{\cal I}}(s')] \Big)
\end{equation}

Considering that the rewards are known and same, we have: 
\begin{equation}
   V^{\pi^{*}_\mathcal{I}}_{M}(s) - V^{\pi^{*}_\mathcal{I}}_{M_{\cal I}}(s) =  \gamma \mathbb{E}_{s' \sim P(.|s,a)} [V^{\pi^{*}_\mathcal{I}}_{M}(s')]  - \gamma \mathbb{E}_{s' \sim P_{\cal I}(.|s,a)} [V^{\pi^{*}_\mathcal{I}}_{M_{\cal I}}(s')] 
\end{equation}

Considering the max over all states as we are interested in Term 1, and following through the proof of Lemma~\ref{lemma:jiang}, we have:
\begin{equation}
    \max_{s \in \cal S} \Big| V^{\pi^{*}_\mathcal{I}}_{M}(s) - V^*_{M_{\cal I}}(s) \Big| \leq ||d^{\mathcal{F}}_{M, M_{\cal I}}||_{\infty} + \max_{\cal S , \cal A} \Big| \mathbb{E}_{s' \sim P_{\cal I}(.|s,a)} [V^{\pi^{*}_\mathcal{I}}_{M}(s')] - \mathbb{E}_{s' \sim P_{\cal I}(.|s,a)} [V^{\pi^{*}_\mathcal{I}}_{M_{\cal I}}(s')]\Big|
\end{equation}

Using the notation introduced in Eq.~(\ref{eq:epsilon}) and expanding the inequality $H=1/(1-\gamma)$ times, it follows:
\begin{equation}
    \max_{s \in \cal S} \Big| V^{\pi^{*}_\mathcal{I}}_{M}(s) - V^*_{M_{\cal I}}(s) \Big| \leq \epsilon \frac{\gamma \texttt{Rmax}}{(1-\gamma)^{2}}
\end{equation}

Combining Term 1 and 2 bounds yields the final result in Equation~\ref{eq:theorem1}.
\end{proof}

\subsubsection{Proof of Planning Loss Bound}
\label{sec:appendix_planninglossbound}
\ddef{Policy class $\Pi_{\mathcal{I}}$}{Given affordance $ \AF$, let  $\mathcal{M_I}$ be the set of MDPs over the state-action pairs in $\AF$, and let 
\begin{equation*}
\Pi_{\mathcal{I}} = \{ \pi^{*}_M \} \cup \{ \pi : \exists \bar M \in  \mathcal{M_I} \text{ s.t. }  \pi \text{ is optimal in } \bar M  \}.
\end{equation*}}

\begin{theorem}
Let $\hat{M}_\AF$ be the approximate MDP over affordable state-action pairs. Then the certainty equivalence planning with $\hat{M}_\AF$ has planning loss 
\begin{equation*}
\Big|\Big|V^*_M - V^{\pi^{*}_{\hat{M}_\AF}}_M \Big|\Big|_{\infty} \leq \frac{2 \texttt{Rmax}}{(1-\gamma)^2} \left( 2\gamma \epsilon +  \sqrt{\frac{1}{2n} \log \frac{2 |\AF| |\Pi_\mathcal{I}|}{\delta}} \right)
\end{equation*} 
with probability at least $1-\delta$.
\label{theorem:planningvaluelossbound}
\end{theorem}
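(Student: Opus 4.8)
The plan is to decompose the planning loss into a "bias" term coming from using intents instead of the true dynamics on $\AF$, and a "variance" term coming from estimating the model from $n$ samples, and then to bound each separately. First I would introduce the intermediate MDP $M_{\cal I}$ (the induced MDP from Section~\ref{sec:valueloss}, but with dynamics defined only on $\AF$) and the estimated MDP $\hat{M}_\AF$, and use the triangle inequality:
\begin{equation*}
\Big|\Big|V^*_M - V^{\pi^{*}_{\hat{M}_\AF}}_M \Big|\Big|_{\infty} \leq \Big|\Big|V^*_M - V^{\pi^{*}_{M_{\cal I}}}_M \Big|\Big|_{\infty} + \Big|\Big|V^{\pi^{*}_{M_{\cal I}}}_M - V^{\pi^{*}_{\hat{M}_\AF}}_M \Big|\Big|_{\infty}.
\end{equation*}
The first term is exactly the quantity bounded in Theorem~\ref{thm:value_loss_analysis}, giving $2\epsilon\gamma\texttt{Rmax}/(1-\gamma)^2$, which accounts for the $2\gamma\epsilon$ contribution (up to the $\texttt{Rmax}/(1-\gamma)^2$ prefactor).

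For the second (statistical) term I would follow the certainty-equivalence argument of \citet{jiang2015dependence}. The key point is that both $\pi^*_{M_{\cal I}}$ and $\pi^*_{\hat{M}_\AF}$ lie in the policy class $\Pi_{\cal I}$ (by construction $\pi^*_{\hat{M}_\AF}$ is optimal in an MDP over $\AF$, and $\pi^*_{M_{\cal I}}$ likewise, while $\pi^*_M$ is included explicitly), so it suffices to control $\sup_{\pi\in\Pi_{\cal I}} |V^\pi_M(s) - V^\pi_{\hat M_\AF}(s)|$ uniformly. For a fixed policy $\pi$, the simulation-lemma-style bound gives $\|V^\pi_M - V^\pi_{\hat M_\AF}\|_\infty \le \frac{\gamma\texttt{Rmax}}{(1-\gamma)^2}\max_{(s,a)\in\AF}\|P(\cdot|s,a) - \hat P(\cdot|s,a)\|_1$ — but because the value functions only depend on the dynamics through expectations $\mathbb{E}_{s'\sim P}[V^\pi(s')]$, I would instead phrase the per-$(s,a)$ error via $d^f_{M,\hat M_\AF}$ with $f$ ranging over the (finite) set of value functions $\{V^\pi_{\hat M_\AF} : \pi\in\Pi_{\cal I}\}$, so that only $|\AF|\cdot|\Pi_{\cal I}|$ scalar empirical-mean deviations need to be controlled rather than a full $\ell_1$ deviation of each conditional distribution. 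Each such scalar is an average of $n$ i.i.d. bounded (by $\texttt{Rmax}/(1-\gamma)$) random variables, so Hoeffding's inequality plus a union bound over the $|\AF||\Pi_{\cal I}|$ pairs yields that with probability at least $1-\delta$ every deviation is at most $\frac{\texttt{Rmax}}{1-\gamma}\sqrt{\frac{1}{2n}\log\frac{2|\AF||\Pi_{\cal I}|}{\delta}}$; propagating this through the value-difference recursion ($H = 1/(1-\gamma)$ applications, as in Lemma~\ref{lemma:jiang}) contributes the $\frac{2\texttt{Rmax}}{(1-\gamma)^2}\sqrt{\cdot}$ term. A standard optimality-comparison step — $V^*_{\hat M_\AF}(s)\ge V^{\pi^*_{M_{\cal I}}}_{\hat M_\AF}(s)$ since $\pi^*_{\hat M_\AF}$ is optimal in $\hat M_\AF$, combined with the uniform deviation bound applied to both $\pi^*_{M_{\cal I}}$ and $\pi^*_{\hat M_\AF}$ — converts the value-function closeness into the policy-loss bound, picking up the factor of $2$.

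Adding the two terms and collecting the common prefactor $\frac{2\texttt{Rmax}}{(1-\gamma)^2}$ gives the stated bound. The main obstacle I anticipate is making the union-bound step rigorous: $\Pi_{\cal I}$ must be shown to be the \emph{right} finite class — i.e. that $\pi^*_{\hat M_\AF}$ is guaranteed to be in it with probability one (which holds because $\hat M_\AF \in \mathcal{M}_{\cal I}$ by definition, being an MDP over $\AF$) and that the data-dependence of $V^{\pi}_{\hat M_\AF}$ does not break the Hoeffding argument. The cleanest fix, and the one I would use, is to union-bound over the \emph{fixed} class $\{V^\pi_{\bar M}\}$ implicitly by noting that for certainty-equivalence it suffices to bound $\max_{\pi\in\Pi_{\cal I}} |(\hat P - P)^\top V^\pi_M|$ where $V^\pi_M$ is data-independent, and then transfer from $V^\pi_M$ to $V^\pi_{\hat M_\AF}$ by a second application of the recursion — exactly the device in \citet{jiang2015dependence}. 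The $2\gamma\epsilon$ (rather than $\gamma\epsilon$) in the final bound is a minor point: it arises because the bias term of Theorem~\ref{thm:value_loss_analysis} already carries a factor $2$, so I would simply quote that theorem verbatim.
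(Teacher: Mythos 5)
Your proposal is correct, and it reaches the stated bound by a genuinely different (and in fact slightly sharper) route than the paper. The paper uses a four-term decomposition that routes everything through the intermediate MDP $M_{\cal I}$: the first term is handled by Theorem~\ref{thm:value_loss_analysis}, the second and fourth by Lemma~\ref{lemma:jiangv2} (each costing another $\epsilon\gamma\texttt{Rmax}/(1-\gamma)^2$, which is where the paper's $2\gamma\epsilon$ rather than $\gamma\epsilon$ comes from), and the third by Lemma~\ref{lemma4}, whose Hoeffding step treats the empirical Bellman backup as concentrating around $V^\pi_{M_{\cal I}}$. You instead use a two-term split, pay the intent-approximation bias exactly once via Theorem~\ref{thm:value_loss_analysis}, and run the concentration argument directly between $M$ and $\hat M_\AF$; since the samples are drawn from the true dynamics $P$, your empirical means genuinely have expectation $\langle P(\cdot|s,a), V^\pi_M\rangle$, which sidesteps the slightly awkward claim in the paper's Lemma~\ref{lemma4} that the mean is $V^{\pi}_{M_{\cal I}}$. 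Your handling of the two delicate points — the union bound over the fixed finite class $\Pi_{\cal I}$ with the data-independent functions $V^\pi_M$ in the Bellman residual, and the one-sided optimality comparison $V^{\pi^*_{M_{\cal I}}}_{\hat M_\AF} \le V^{*}_{\hat M_\AF}$ supplying the factor of $2$ — matches the device in \citet{jiang2015dependence} that the paper also relies on. The net effect is that your argument yields $\frac{2\texttt{Rmax}}{(1-\gamma)^2}\bigl(\gamma\epsilon + \sqrt{\tfrac{1}{2n}\log\tfrac{2|\AF||\Pi_{\cal I}|}{\delta}}\bigr)$, which implies the theorem a fortiori; what the paper's longer decomposition buys is mainly that each intermediate gap is an explicitly named lemma, at the cost of charging the $\epsilon$ bias twice more.
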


\begin{proof}
Let us consider that the world is represented by an MDP $M: \langle {\cal S}, {\cal A}, R, \gamma, P \rangle $. Let $P_{\cal I}$ to denote proxy models based on the collection of intents $\mathcal{I}$, and the resulting \emph{intended} MDP is denoted by $M_{\cal I}: \langle {\cal S}, {\cal A}, r, P_{\cal I}, \gamma \rangle$. We are now interested in estimating $M_{\cal I}$ via the data samples experienced by the agent. Let's consider this estimated model to be $\hat{M}_\AF$, where $\hat{M}_\AF: \langle {\cal S}, {\cal A}, R, \gamma, \hat{P_{\cal I}} \rangle $. In particular, we are interested in the CE-control policy, which is discussed in more detail in the main paper. Let $\pi^*_{\hat{M}_\AF}$ be the optimal policy of MDP 
 $\hat{M}_\AF$. We quantify the largest absolute difference (over states) between the value of the true optimal policy with respect to the true model, $\pi^*_{M}$ and that of $\pi^*_{\hat{M}_\AF}$ when evaluated in $M$:
\begin{equation}
    \text{\textbf{Planning Value Loss: }} \Big| \Big|  V^*_{M} - V^{\pi^*_{\hat{M}_\AF}}_M \Big| \Big|_{\infty}
\end{equation}

It is to be noted that our proof builds on the theory proposed by \citet{jiang2015dependence}. However, we are not concerned with the dependence of planning value loss on the effective horizon, and consider a fixed discount factor $\gamma$. We follow through the steps of the proof of Theorem 2 of \citet{jiang2015dependence} and prove this theorem using the lemmas below: Lemma~\ref{lemma2}, Lemma~\ref{lemma3}, and Lemma~\ref{lemma4}, and \ref{lemma:jiangv2}.

\lemma{\cite{jiang2015dependence}}{ For any MDP $\hat{M}_\AF$ with $\hat{R}=R$,  which is an approximate model of the MDP given by the intent collection $\mathcal{I}$, we have
\begin{equation}
    \Big|\Big| V^*_{M_{\cal I}} - V^{\pi^*_{\hat{M}_\AF}}_{M_{\cal I}} \Big|\Big|_{\infty} \leq  2 \max_{\pi \in \Pi_\mathcal{I}} ||V^{\pi}_{M_{\cal I}} - V^{\pi}_{\hat{M}_\AF}||_{\infty} 
\end{equation}
\label{lemma2}
}
\begin{proof}
$\forall s \in S$
Let us consider:
\begin{align*}
    &  V^*_{M_{\cal I}}(s) - V^{\pi^*_{\hat{M}_\AF}}_{M_{\cal I}}(s) \\
    &= \Big( V^*_{M_{\cal I}}(s) - V^{\pi^*_{M_{\cal I}}}_{\hat{M}_\AF}(s) \Big) + \underbrace{\Big( V^{\pi^*_{M_{\cal I}}}_{\hat{M}_\AF}(s) - V^{*}_{\hat{M}_\AF}(s) \Big)}_{\leq 0} + \Big( V^{*}_{\hat{M}_\AF}(s) - V^{\pi^*_{\hat{M}_\AF}}_{M_{\cal I}}(s) \Big)\\
    &\leq \Big( V^*_{M_{\cal I}}(s) - V^{\pi^*_{M_{\cal I}}}_{\hat{M}_\AF}(s) \Big) - \Big( V^{*}_{\hat{M}_\AF}(s) - V^{\pi^*_{\hat{M}_\AF}}_{M_{\cal I}}(s) \Big) \\
    &\leq 2 \max_{\pi \in \Big\{ \pi^{*}_{\hat{M}_\AF}, \pi^{*}_{M_{\cal I}}  \Big\} } \Big| V^{\pi}_{M_{\cal I}}(s)  - V^{\pi}_{\hat{M}_\AF}(s) \Big|
\end{align*}
Taking a max over all states on both sides of the inequality and noticing that the set of all policies is a trivial super set of $\Big\{ \pi^{*}_{\hat{M}_\AF}, \pi^{*}_{M_{\cal I}}  \Big\} \in \Pi_\mathcal{I}$, from which the final result follows.
\end{proof}

We now turn to Lemma~\ref{lemma3}.
\lemma{\cite{jiang2015dependence}}{ For any MDP $\hat{M}_\AF$ with $\hat{R}=R$ bounded by $[0, R_{max}]$ which is an approximate of the MDP  estimated from data experienced in the world for a set of intents ${\cal I}$,
\begin{equation}
    \Big|\Big| V^{\pi}_{M_{\cal I}} - V^{\pi}_{\hat{M}_\AF}\Big|\Big|_{\infty} \leq  \frac{1}{(1-\gamma)} \left\|  \sum_{a \in \AF(s)} \Big( R(s,a) + \gamma \langle \hat{P_{\cal I}}(s,a,;), V^{\pi}_{M_{\cal I}} \rangle \Big) -  V^{\pi}_{M_{\cal I}}  \right\| _{\infty} .
\end{equation}
\label{lemma3}
}
\begin{proof}

Given any policy $\pi$, define state-value function $V_0, V_1, \dots V_m$ such that $V_0 = V^{\pi}_{M_{\cal I}}$,

From this point onward, we use $\AF(a)$ and $\AF(s)$ to denote affordable states and affordable actions respectively.

$\forall s \in \AF(a)$
\begin{equation*}
    V_m(s) = \sum_{a \in \AF(s)} \pi(a|s) \Big( R(s,a) + \gamma \langle \hat{P_{\cal I}}(s,a,;), V_{m-1} \rangle \Big)
\end{equation*}

Therefore:
\begin{align}
\label{eq:intermediatesteps}
\begin{split}
    ||V_m - V_{m-1}||_{\infty} &= \max_s \left[  \sum_{a \in \AF(s)} \pi(a|s) \gamma \left\langle \hat{P_{\cal I}}(s,a,;), (V_{m-1} - V_{m-2}) \right\rangle  \right] \\
    &\leq \gamma \max_s \sum_{a \in \AF(s)} \pi(a|s) \hat{P_{\cal I}}(s,a,;) ||V_{m-1} - V_{m-2}||_\infty
\end{split}
\end{align}

Since $\langle \hat{P_{\cal I}}(s,a,;), f \rangle = \sum_{s'} \hat{P_{\cal I}}(s,a,s') \cdot f(s')$
\begin{align*}
\begin{split}
    \sum_{s^{'}} \hat{P_{\cal I}}(s,a,s') \cdot f(s') &\leq \sum_{s'} \hat{P_{\cal I}}(s,a,s') \cdot |f|_\infty \\
    &= |f|_\infty \text{ since } ||\hat{P_{\cal I}}||_{1} = 1
\end{split}
\end{align*}

Therefore 
\begin{align*}
    ||V_m - V_{m-1}||_{\infty} \leq \gamma || V_{m-1} - V_{m-2} || _\infty
\end{align*}

Therefore, $||V_m - V_0||_\infty \sum_{k=0}^{m-1} ||V_{k+1} - V_k||_\infty \leq  ||V_1 - V_0||_\infty \sum_{k=1}^{m-1} \gamma^{k-1}$

Taking the limit $m \rightarrow \infty $, $V_m \rightarrow V_{\hat{M}_\AF}^{\pi}$, and we have:
\begin{equation*}
    ||V_{\hat{M}_\AF} - V_0||_{\infty} \leq \frac{1}{1-\gamma} ||V_1 - V_0||_\infty
\end{equation*}
where notice that $V_0 = V^{\pi}_{M_{\cal I}}$ and $V_1 = \sum_{a \in \AF(s)} \pi(a|s) \Big( R + \gamma \langle \hat{P_{\cal I}}(s,a;), V_{M}^{\pi} \rangle \Big)$

Therefore,  
\begin{equation*}
    ||V^{\pi}_{M_{\cal I}} - V^{\pi}_{\hat{M}_\AF}||_{\infty} \leq  \frac{1}{(1-\gamma)} || \sum_{a \in \AF(s)} (R(s,a) + \gamma \langle \hat{P_{\cal I}}(s,a,;), V^{\pi}_{M_{\cal I}} \rangle) -  V^{\pi}_{M_{\cal I}}  ||_{\infty} 
\end{equation*}
\end{proof}

\begin{lemma} \label{lemma4}
The following holds with probability at least $1-\delta$:
\begin{equation*}
\Big|\Big|V^*_{M_{\cal I}} - V^{\pi^{*}_{\hat{M}_\AF}}_{M_{\cal I}} \Big|\Big|_{\infty} \leq \frac{2 \texttt{Rmax}}{(1-\gamma)^2} \sqrt{\frac{1}{2n} \log \frac{2 | \mathcal{\AF}| |\Pi_{\mathcal{I}}|}{\delta}}
\end{equation*} 
\end{lemma}

\begin{proof}

Using Lemma \ref{lemma2} and \ref{lemma3}, we have
\begin{equation*}
\begin{split}
    \left\| V^{\pi^*_{M_{\cal I}}}_{M_{\cal I}} - V^{\pi^*_{\hat{M}_\AF}}_{M_{\cal I}} \right\|_{\infty} &\leq  2 \max_{\pi \in \Pi_\mathcal{I}} ||V^{\pi}_{M_{\cal I}} - V^{\pi}_{\hat{M}_\AF}||_{\infty} \\
    &\leq \max_{\pi \in \Pi_\mathcal{I}}  \frac{2}{(1-\gamma)} \left\| \sum_{a \in \AF(s)} (R(s,a) + \gamma \langle \hat{P_{\cal I}}(s,a,;), V^{\pi}_{M_{\cal I}} \rangle) -  V^{\pi}_{M_{\cal I}}  \right\|_{\infty} \\
    &\leq  \max_{s \in S, \pi \in \Pi_\mathcal{I}} \frac{2}{(1-\gamma)} \left\| \sum_{a \in \AF(s)} (R(s,a) + \gamma \langle \hat{P_{\cal I}}(s,a,;), V^{\pi}_{M_{\cal I}} \rangle) -  V^{\pi}_{M_{\cal I}}  \right\|_{\infty}
\end{split}
\end{equation*}

Since $\sum_{a \in \AF(s)} (R(s,a) + \gamma \langle \hat{P_{\cal I}}(s,a,;), V^{\pi}_{M_{\cal I}} \rangle)$ is the average of the IID samples the agent obtains by interacting with the environment, bounded in $[0, \texttt{Rmax}]$ with mean $V^{\pi}_{M_{\cal I}}$ (for any $s, a, \pi$ tuple). Then according to Hoeffdings inequality,

\begin{equation*}
    \forall t \geq 0, \; P\Big( \Big|\sum_{a \in \AF(s)}  R(s,a) + \gamma \langle \hat{P_{\cal I}}(s,a,;), V^{\pi}_{M_{\cal I}} \rangle - V^{\pi}_{M_{\cal I}} \Big| > t \Big) \leq 2 \exp \left\{ \frac{-2 n t^{2}}{(\texttt{Rmax})^{2} / (1-\gamma)^{2}} \right\}
\end{equation*}
To obtain a uniform bound over all $s, a, \pi$ tuples, we equate the RHS to $\frac{\delta}{|\AF(a)||\AF(s)|\Pi_\mathcal{I}|}$ and the result follows as shown below.

\begin{equation*}
\begin{split}
    2 \exp \left\{ \frac{-2 n t^{2}}{(\texttt{Rmax})^{2} / (1-\gamma)^{2}}\right\} &= \frac{\delta}{|\AF(a)||\AF(s)||\Pi_\mathcal{I}|} \\
    \frac{-2 n t^{2}}{(\texttt{Rmax})^{2} / (1-\gamma)^{2}} &= \log \frac{\delta}{2|\AF(a)||\AF(s)||\Pi_\mathcal{I}|} \\
    \frac{2 n t^{2}}{(\texttt{Rmax})^{2} / (1-\gamma)^{2}} &= \log \frac{2|\AF(a)||\AF(s)||\Pi_\mathcal{I}|}{\delta} \\
    t^{2} &= \frac{(\texttt{Rmax})^{2}}{(1-\gamma)^{2}} \frac{1}{2n} \log \frac{2 |\AF(a)||\AF(s)||\Pi_\mathcal{I}|}{\delta} \\
    t &= \frac{\texttt{Rmax}}{(1-\gamma)} \sqrt{\frac{1}{2n} \log \frac{2 |\AF(a)||\AF(s)||\Pi_\mathcal{I}|}{\delta}}
\end{split}
\end{equation*}

We express the state-action pairs in affordances as the size of affordances. Formally, the size of affordances for a intent can be expressed as $|\AF|$. 
Plugging this back, we get the final result:
\begin{equation*}
    ||V^*_{M_{\cal I}} - V^{\pi^*_{\hat{M}_\AF}}_{M_{\cal I}}||_{\infty} \leq \frac{2 \texttt{Rmax}}{(1-\gamma)^2} \sqrt{\frac{1}{2n} \log \frac{2 |\AF| |\Pi_\mathcal{I}|}{\delta}}
\end{equation*} 

\end{proof}

The following lemma is very similar to a result on the error with respect to the optimal value function, as proved in \cite{jiang2018pac}.
\begin{lemma}
\label{lemma:jiangv2}
Given any policy $\pi$, and any set $\mathrm{F}$ of value functions containing $V^{\pi}_{M}$, we have 
\begin{equation}
    ||V^{\pi}_{M} - V^{\pi}_{M_{\cal I}}||_{\infty} \leq \frac{\gamma}{1-\gamma} ||d^{\mathrm{F}}_{M, M_{\cal I}}||_{\infty}.
\end{equation}
\end{lemma}

\begin{proof}

In the proof below, for any model $M$, we will use ${\cal T}^\pi_M$ to denote the Bellman opearator 
$$ {\cal T}^\pi_M f = \sum_a \pi(a |s) \left( R(s,a) + \gamma \sum_{s'} P_M (s' |s,a) f(s')\right). $$
The Bellman operator has the following property (for any two models $M_2$ and $M_2$):
For the first term, 
\begin{align*}
&({\cal T}^\pi_{M_1} - {\cal T}^\pi_{M_2}) f(s) \\
& \quad = \sum_a \pi(a |s) \left( R(s,a) + \gamma \sum_{s'} P_{M_1} (s' |s,a) f(s')\right) - \sum_a \pi(a |s) \left( R(s,a) + \gamma \sum_{s'} P_{M_2} (s' |s,a) f(s')\right) \\
& \quad = \sum_a \pi(a |s) \gamma \left(  \sum_{s'} P_{M_1} (s' |s,a) f(s') - \sum_{s'} P_{M_2} (s' |s,a) f(s')\right) \\
& \quad \leq \sum_a \gamma  \pi(a |s) d^{f}_{M_1, M_{2}} =  \gamma d^{f}_{M_1, M_{2}}
\end{align*}
where $d^{f}_{M_1, M_{2}}$ is the metric defined in Section~\ref{def:dfm}.
\begin{align*}
&{\cal T}^\pi_{M} f_1(s) - {\cal T}^\pi_{M} f_2(s) = \\
& \quad = \sum_a \pi(a |s) \left( R(s,a) + \gamma \sum_{s'} P_{M} (s' |s,a) f_1(s')\right) - \sum_a \pi(a |s) \left( R(s,a) + \gamma \sum_{s'} P_{M} (s' |s,a) f_2(s')\right) \\
& \quad = \sum_a \pi(a |s) \gamma \sum_{s'} P_{M} (s' |s,a) \left( f_1(s') - f_2(s')\right)\\
& \quad \leq \sum_a \pi(a |s) \gamma \sum_{s'} P_{M} (s' |s,a) || f_1 - f_2||_\infty = \gamma || f_1 - f_2||_\infty
\end{align*}
Now, the following holds for the initial value error we are interested to bound:
\begin{align*}
& ||V^{\pi}_{M} - V^{\pi}_{M_{\cal I}}||_{\infty} \leq  ||V^{\pi}_{M} - {\cal T}^\pi_{M_{\cal I}} V^{\pi}_M||_{\infty} + ||{\cal T}^\pi_{M_{\cal I}} V^{\pi}_M - V^{\pi}_{M_{\cal I}}||_{\infty} \\ & \quad \quad = ||{\cal T}^\pi_{M} V^{\pi}_{M} - {\cal T}^\pi_{M_{\cal I}} V^{\pi}_M||_{\infty} + ||{\cal T}^\pi_{M_{\cal I}} V^{\pi}_M - {\cal T}^\pi_{M_{\cal I}} V^{\pi}_{M_{\cal I}}||_{\infty} \\
& \quad \quad = ||({\cal T}^\pi_{M} - {\cal T}^\pi_{M_{\cal I}}) V^{\pi}_M||_{\infty} + ||{\cal T}^\pi_{M_{\cal I}} (V^{\pi}_M  - V^{\pi}_{M_{\cal I}})||_{\infty} \\
& \quad \quad \leq \gamma || d^{\mathrm{F}}_{M, M_{\cal I}} ||_{\infty} + \gamma ||V^{\pi}_{M} - V^{\pi}_{M_{\cal I}} ||_{\infty}
\end{align*}
Unfolding the above to infinity, we obtain in the limit the following:
$$ ||V^{\pi}_{M} - V^{\pi}_{M_{\cal I}}||_{\infty} \leq \frac{\gamma}{1 - \gamma} || d^{\mathrm{F}}_{M, M_{\cal I}} ||_{\infty}$$
\end{proof}

Now that we have all the necessary Lemmas proved, we can use them to provide the bound for Theorem~\ref{theorem:planningvaluelossbound}:
\begin{equation*}
\Big| \Big|  V^*_{M} - V^{\pi^*_{\hat{M}_\AF}}_M \Big| \Big|_{\infty} \leq \Big| \Big|  V^*_{M} - V^{\pi^*_{M_{\cal I}}}_M \Big| \Big|_{\infty} + \Big| \Big|  V^{\pi^*_{M_{\cal I}}}_M - V^{*}_{M_{\cal I}} \Big| \Big|_{\infty} + \Big| \Big| V^{*}_{M_{\cal I}} - V^{\pi^*_{\hat{M}_\AF}}_{M_{\cal I}} \Big| \Big|_{\infty} + \Big| \Big| V^{\pi^*_{\hat{M}_\AF}}_{M_{\cal I}} - V^{\pi^*_{\hat{M}_\AF}}_M \Big| \Big|_{\infty}
\end{equation*}

Theorem~\ref{thm:value_loss_analysis} applies to the first term, Lemma~\ref{lemma:jiangv2} to the second and forth term, and Lemma~\ref{lemma4} for the third term. Finally,

\begin{align*}
\Big| \Big|  V^*_{M} - V^{\pi^*_{\hat{M}_\AF}}_M \Big| \Big|_{\infty} & \leq 2 \epsilon \frac{\gamma \texttt{Rmax}}{(1-\gamma)^{2}} + 2 \epsilon \frac{\gamma \texttt{Rmax}}{(1-\gamma)^{2}} + \frac{2 \texttt{Rmax}}{(1-\gamma)^2} \sqrt{\frac{1}{2n} \log \frac{2 |\AF| |\Pi_\mathcal{I}|}{\delta}} \\
& = \frac{2 \texttt{Rmax}}{(1-\gamma)^2} \left( 2\gamma \epsilon +  \sqrt{\frac{1}{2n} \log \frac{2 |\AF| |\Pi_\mathcal{I}|}{\delta}} \right)
\end{align*}

\end{proof}

\subsection{Empirical Validation: Additional Details}
\label{sec:algo-discrete}
\subsubsection{Computationally building affordances}
\label{sec:build-affordances}
Consider $\AF \subseteq {\mathcal S} \times {\mathcal A}$ as the subset of state-action pairs that complete a collection of intents specified a priori. To control the size of affordances, $|\AF|$, we introduce a threshold $k$. A state-action pair is deemed affordable if the intent is completed i.e. $s' \in I_a(s)$ and the $P(s, a, s') \geq k$. For $k=0.0$, affordances include all state-action pairs that achieves the intent. With increasing threshold values, $\AF$ contains relatively smaller subset of state-action space, resulting in a reduced affordance size. Our final affordance set considers all intents i.e. $ \cup_{I \in \mathcal{I}} \AF \subset {\cal S} \times {\cal A}$. We present the pseudocode for the empirical analysis of planning value loss bound in Algorithm~\ref{alg:planningvaluelossbound}.
\begin{algorithm}[h]
   \caption{Pseudo code: Planning Value Loss Analysis}
   \label{alg:planningvaluelossbound}
\begin{algorithmic}
   \STATE {\bfseries Require:} Collection of Intents $\mathcal{I}$, where each intent $\forall{s} \in {\cal S}, I_a(s) \in Dist(s)$
   \STATE {\bfseries Input:} MDP $M:\langle {\cal S}, {\cal A}, r, P\rangle$, number of trajectories $n$, thresholds $k$
   
   \textbf{1. Affordances $\AF$:}
   \STATE  $\AF \leftarrow$ Computationally Build Affordances $(M, \mathcal{I}, k)$ //As explained in Sec~\ref{sec:build-affordances}.
   
   \textbf{2. Learn Affordance-aware Model $\hat{M}_{\AF}$:}
   \STATE Transition tuples $(s, a, s', r ) \leftarrow$ collect trajectories
   \STATE $\hat{P_{\cal I}} \leftarrow $ Count $(s, a, s', r )$ to estimate model
   
   \textbf{3. Planning:}
   \STATE $\pi^{*}_M$ $\leftarrow$ Value Iteration($P, R$)
   \STATE $\pi^{*}_{\hat{M}_{\AF}}$ $\leftarrow$ Value Iteration($\hat{P_{\cal I}}, R$)
   
   \textbf{4. Certainty-Equivalence Control Evaluation:}
   \STATE $V^{\pi^{*}_M}_M$ $\leftarrow$ Policy Evaluation($\pi^{*}_M, P, R, \gamma$)
   \STATE $V^{\pi^{*}_{\hat{M}_{\AF}}}_M$ $\leftarrow$ Policy Evaluation($\pi^{*}_{\hat{M}_{\AF}}, P, R, \gamma$)
   \STATE Planning Loss $\leftarrow \Big| \Big| V^*_M - V^{\pi^*_{\hat{M}_\AF}}_M \Big| \Big|_2$
   
   \text{Repeat steps 1-4 for different values of $k$, $n$, and average over $m$ independent seeds.}
\end{algorithmic}
\end{algorithm}

\begin{figure}[h]
    \begin{center}
    \subfigure[One-room, Move Intent]{\label{fig:appendix_intent_changestate}\includegraphics[width=0.2\textwidth]{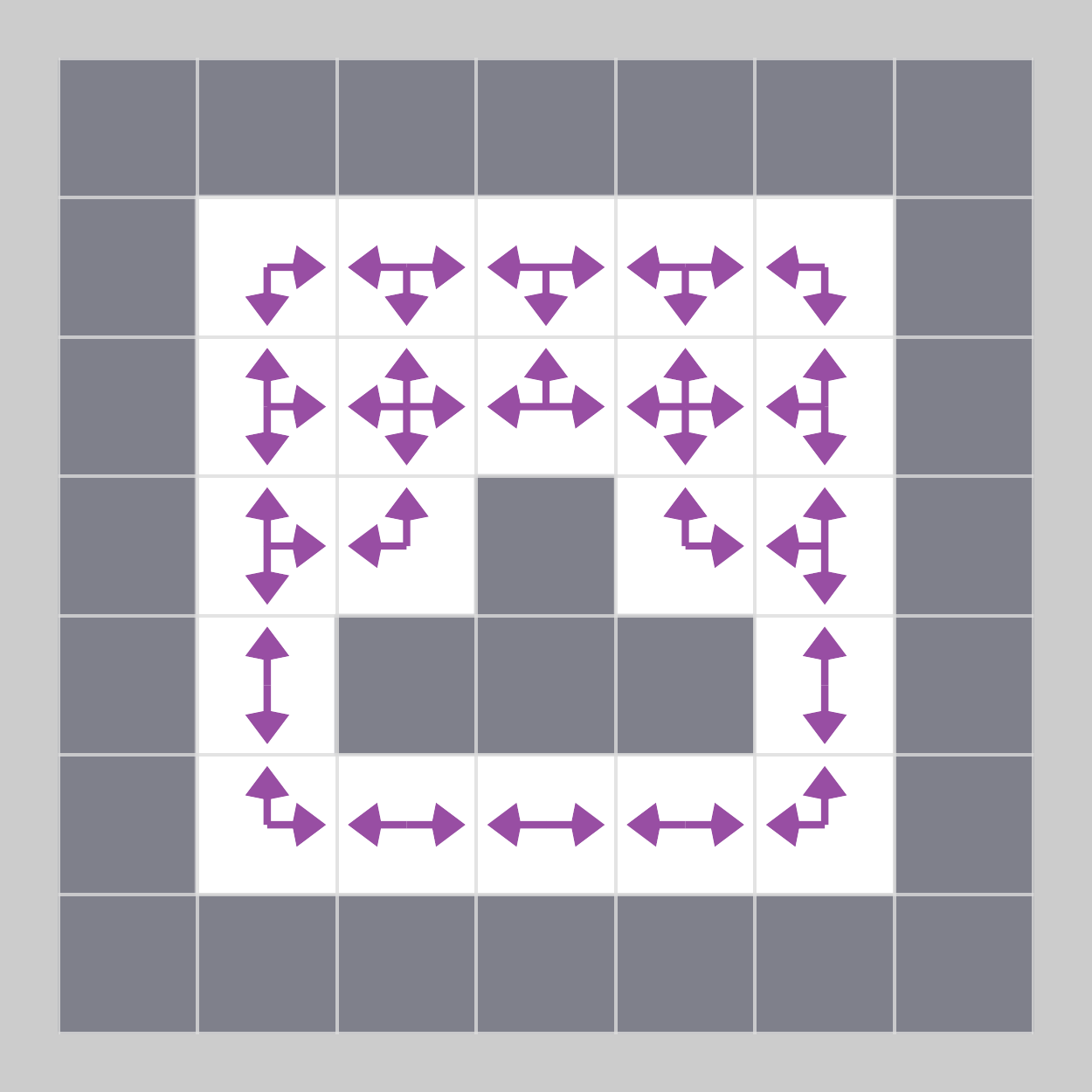}} 
    \hspace{0.4cm}
    \subfigure[One-room, Move Left Intent]{\label{fig:appendix_intent_left}\includegraphics[width=0.2\textwidth]{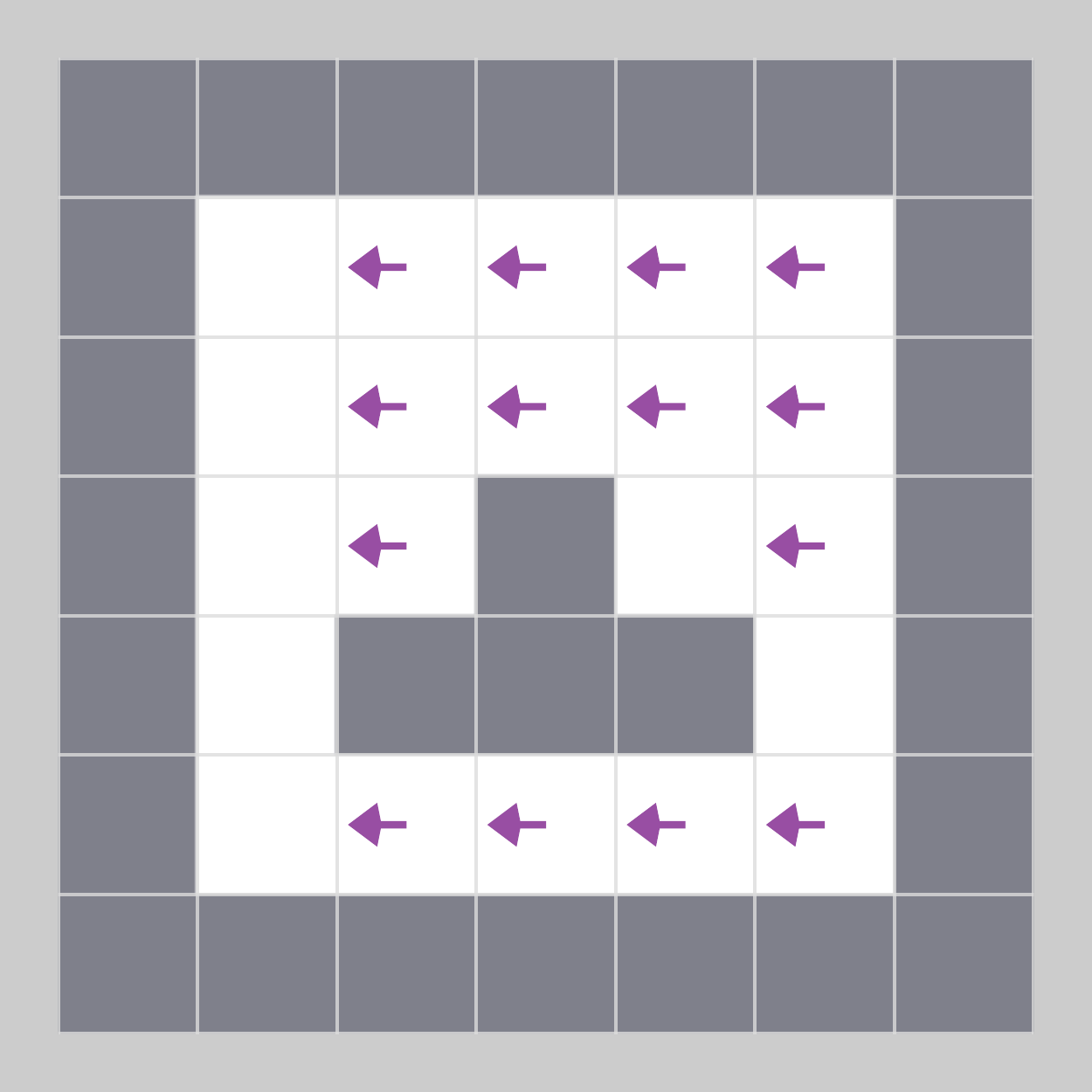}}
    \subfigure[One-room, Move Intent]{\label{fig:appendix_intent_changestate}\includegraphics[width=0.2\textwidth]{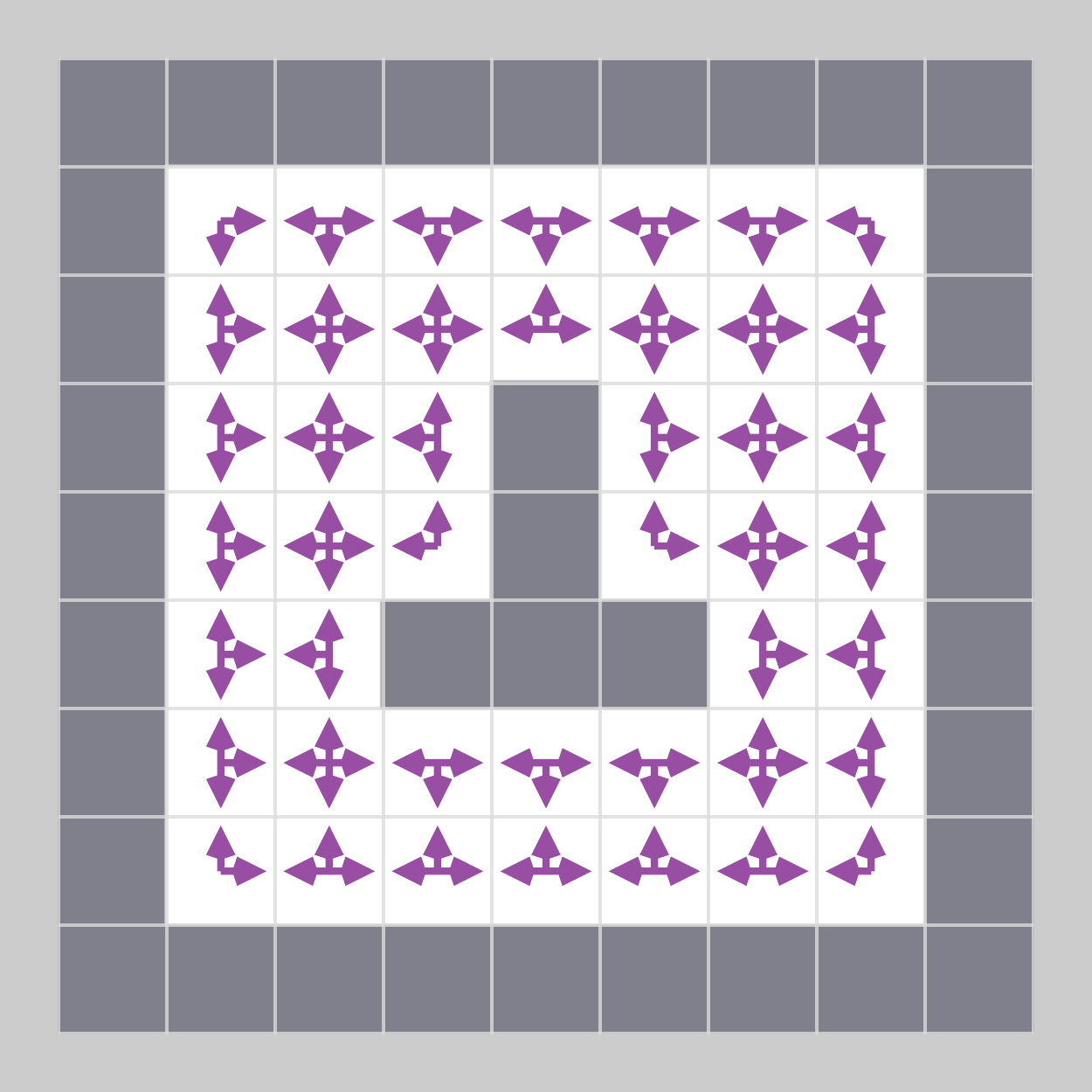}} 
    \hspace{0.4cm}
    \subfigure[One-room, Move Left Intent]{\label{fig:appendix_intent_left}\includegraphics[width=0.2\textwidth]{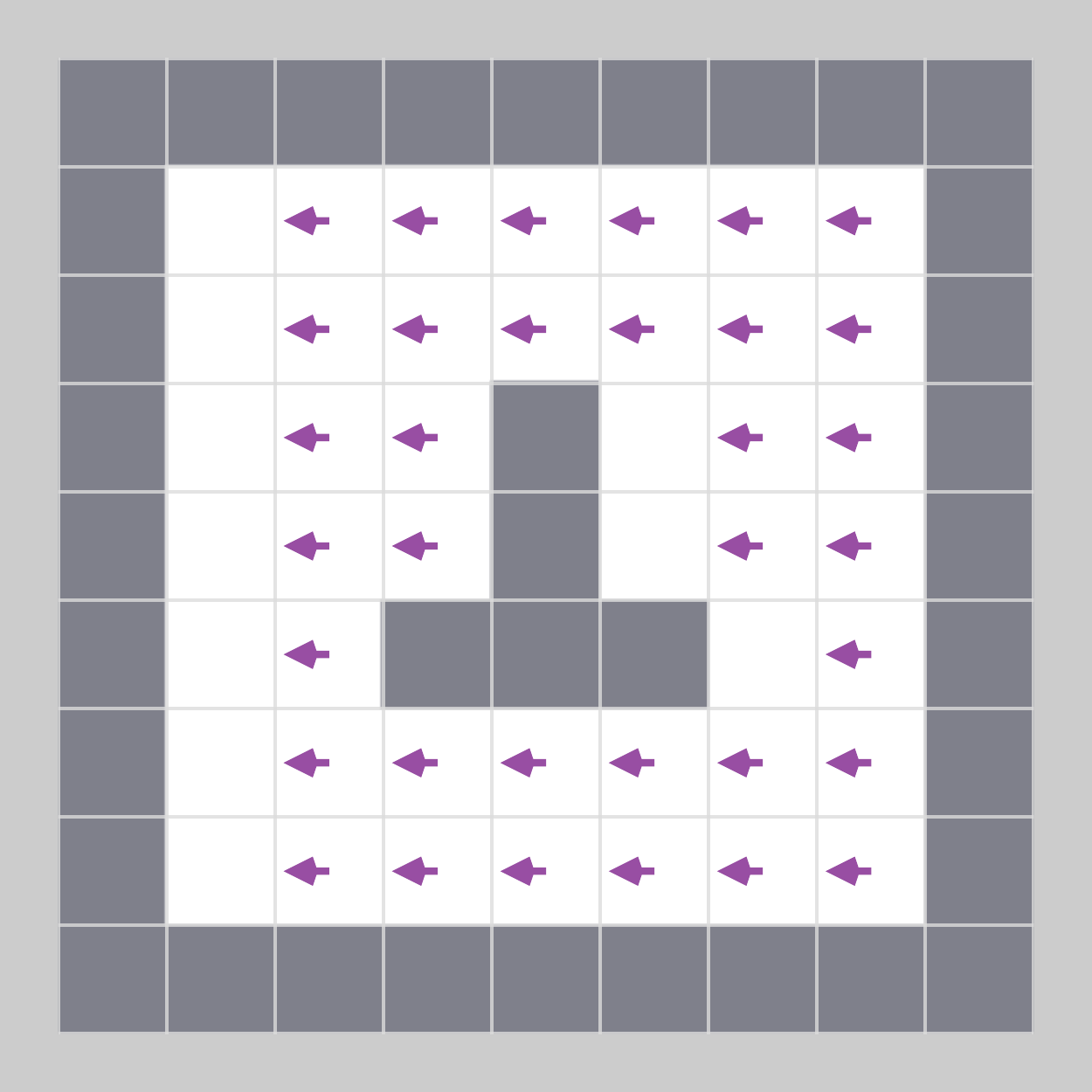}}
    \hspace{0.4cm}
    \subfigure[Pachinko, Move Left Intent]{\label{fig:appendix_pachinko_left}\includegraphics[width=0.2\textwidth]{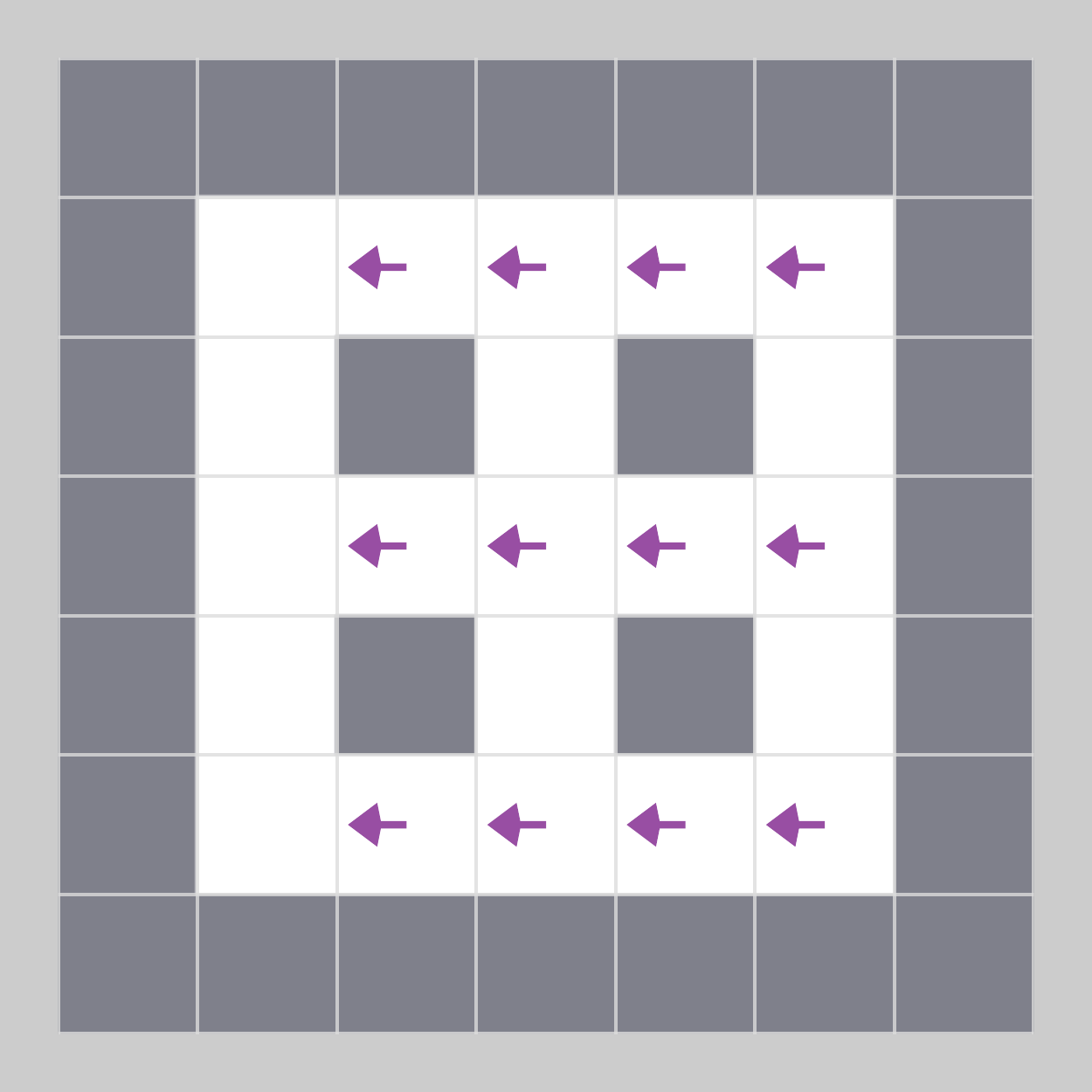}}
    \hspace{0.4cm}
    \subfigure[Pachinko, Move Intent]{\label{fig:appendix_pachinko_move}\includegraphics[width=0.2\textwidth]{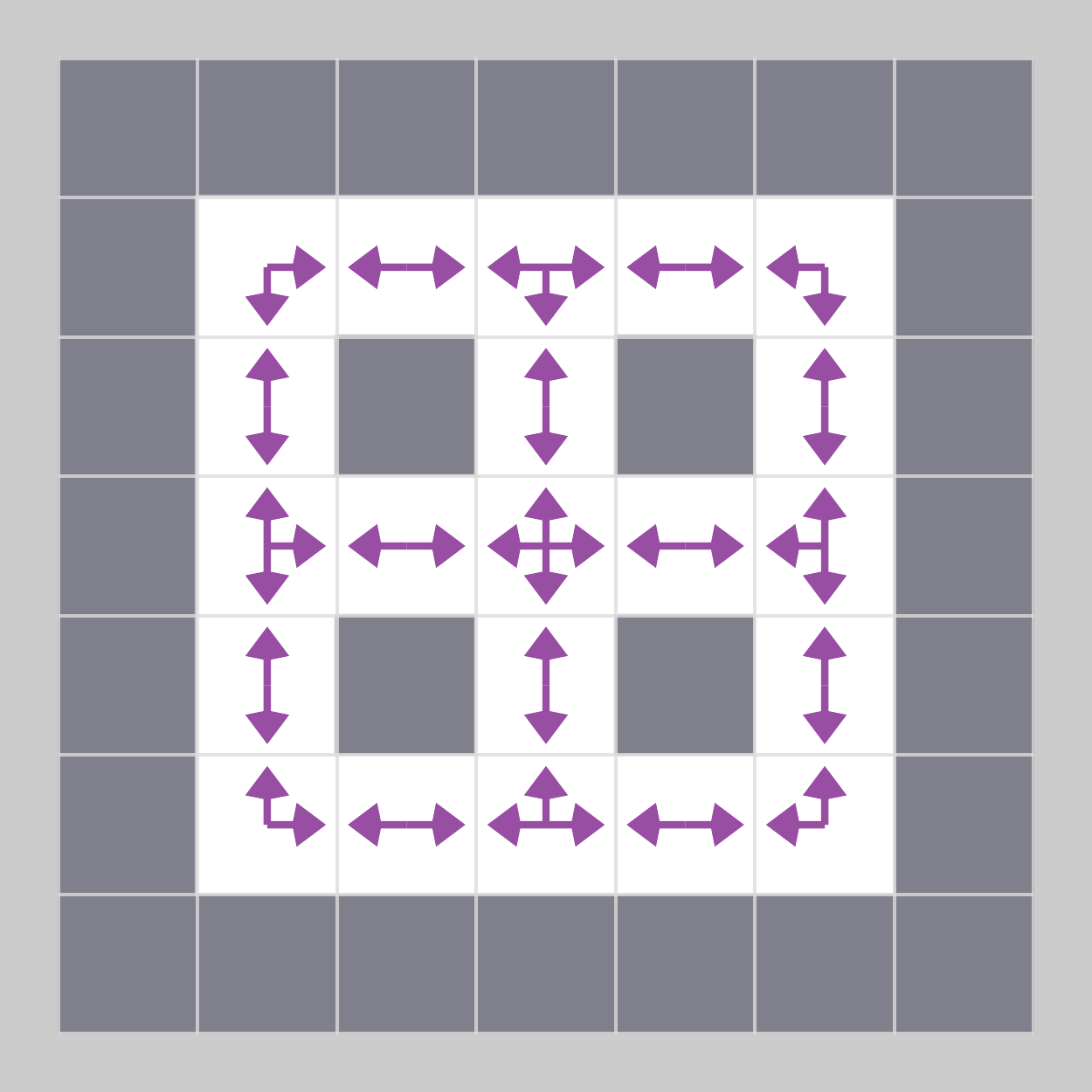}}
    \hspace{0.4cm}
    \subfigure[Pachinko, Move Up Intent]{\label{fig:appendix_pachinko_up}\includegraphics[width=0.2\textwidth]{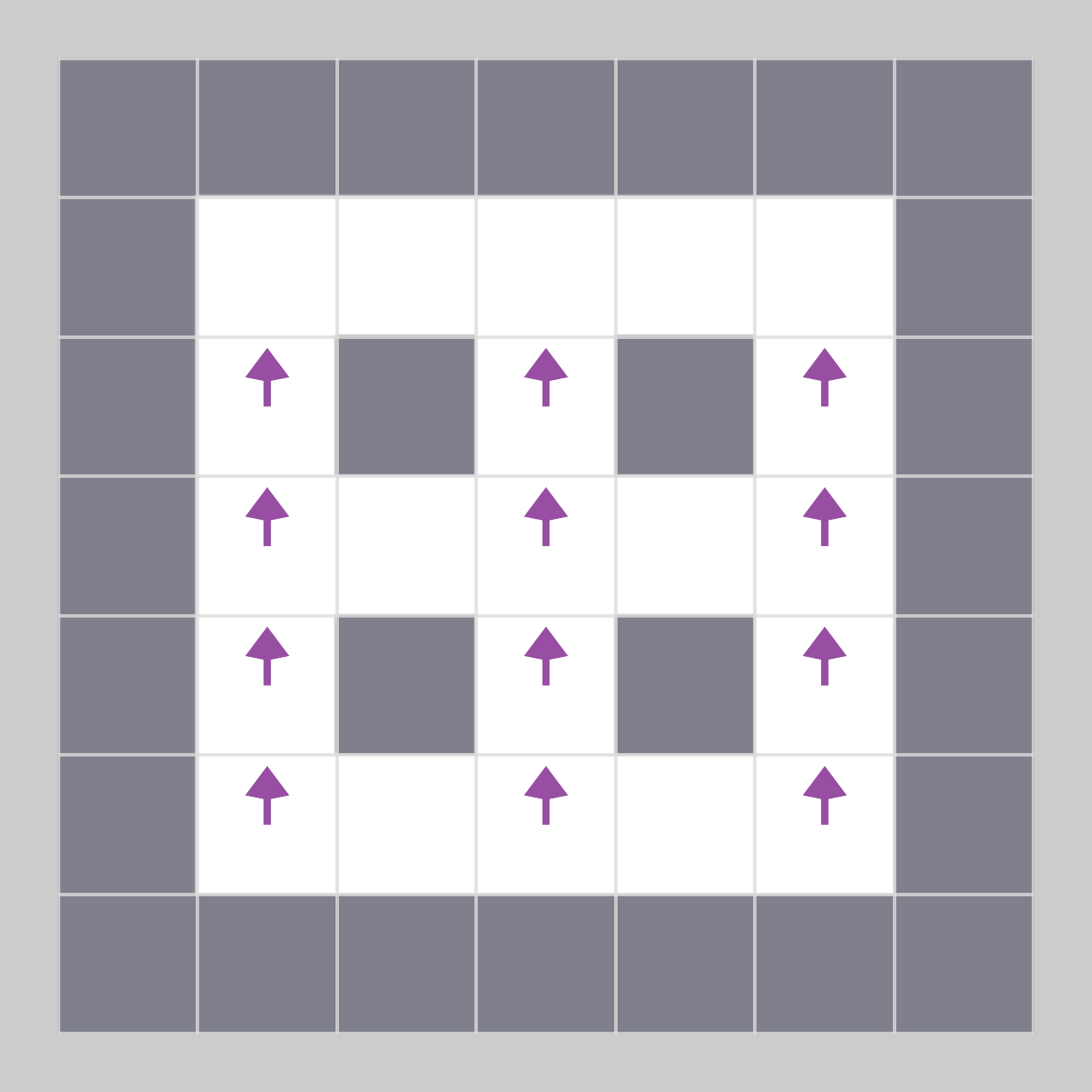}}
    \hspace{0.4cm}
    \subfigure[Pachinko, Move Intent]{\label{fig:appendix_pachinko_move}\includegraphics[width=0.2\textwidth]{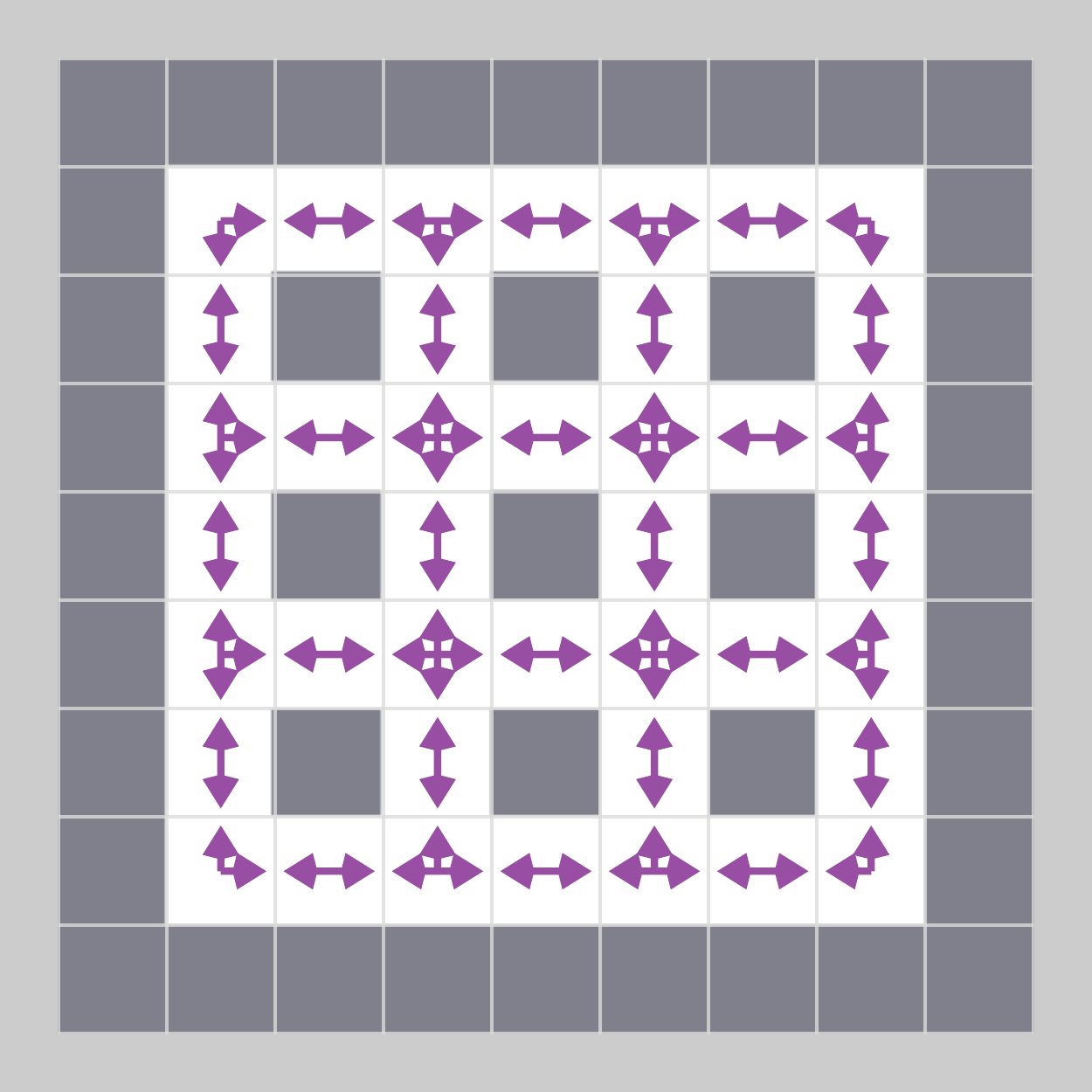}}
    \caption{\label{fig:appendix_Illustration}\textbf{Visualization of learned affordances in a variety of grid-worlds.} \textit{Affordances} and \textit{intents} are a general concept that can generalize across environments.}
    \end{center}
    \vskip -0.2in %
\end{figure}

\begin{figure}[h]
    \begin{center}
    \includegraphics[width=0.30\textwidth]{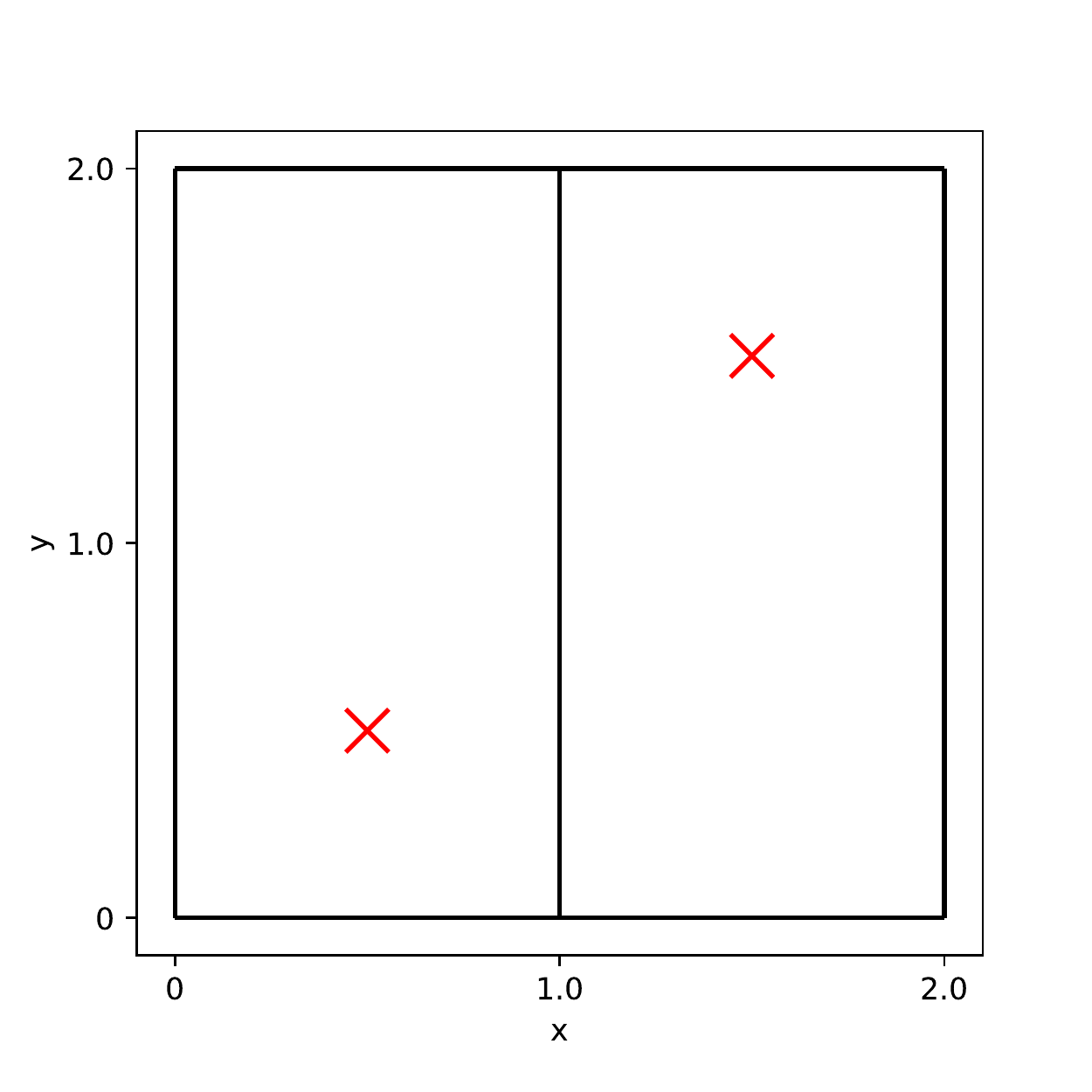}
    \caption{\label{fig:continuousgridworld}\textbf{Non-stationary Continuous World.} The continuous world has agent state represented by (x,y) coordinates. An impassable wall divides the world into two halves. The action space consists of displacements in the (x,y) directions. The two red crosses indicate the possible starting positions in the world. The agent will start around one cross for a fixed number of episodes before drifting toward the other. Upon reaching the next cross, the system reverses the starting state of the agent drifts towards the other.}
    \end{center}
\end{figure}

\begin{figure}[h]
    \begin{center}
    \subfigure[]{\includegraphics[width=0.5\textwidth]{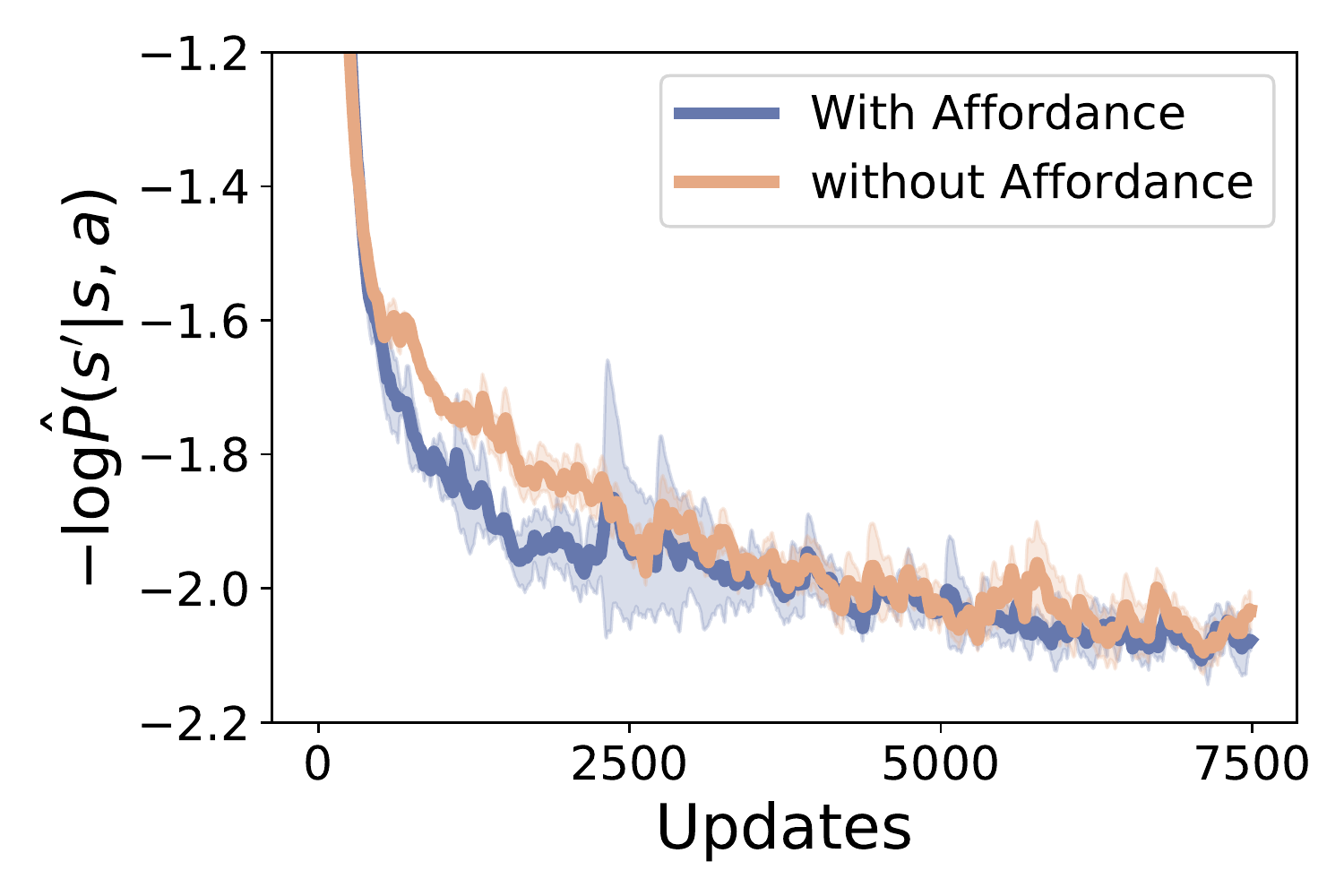}}
    \caption{\label{fig:average_learning_curves}\textbf{Learning curves for training a transition model with and without affordances.} Though the losses are similar, qualitative performance of the models is quite different. Shaded areas show standard error of the mean over 5 independent runs.}
    \end{center}
\end{figure}

\begin{figure}[h]
    \begin{center}
    \includegraphics[trim={0.0cm, 1.5cm, 0.6cm, 1.5cm},clip,width=0.9\textwidth]{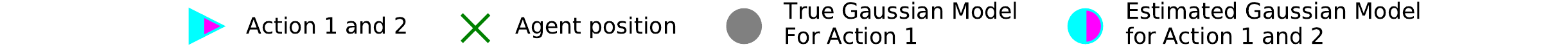}\\
    \includegraphics[width=0.25\textwidth]{{figures/continuous/replicated/no_labels/V3_model_empirical_demo_seed0_movement_noise0.1_P0.95_1.5_F0.2_0.0_affordanceFalse}.pdf}
    \includegraphics[width=0.25\textwidth]{{figures/continuous/replicated/no_labels/V3_model_empirical_demo_seed1_movement_noise0.1_P0.95_1.5_F0.2_0.0_affordanceFalse}.pdf}
    \includegraphics[width=0.25\textwidth]{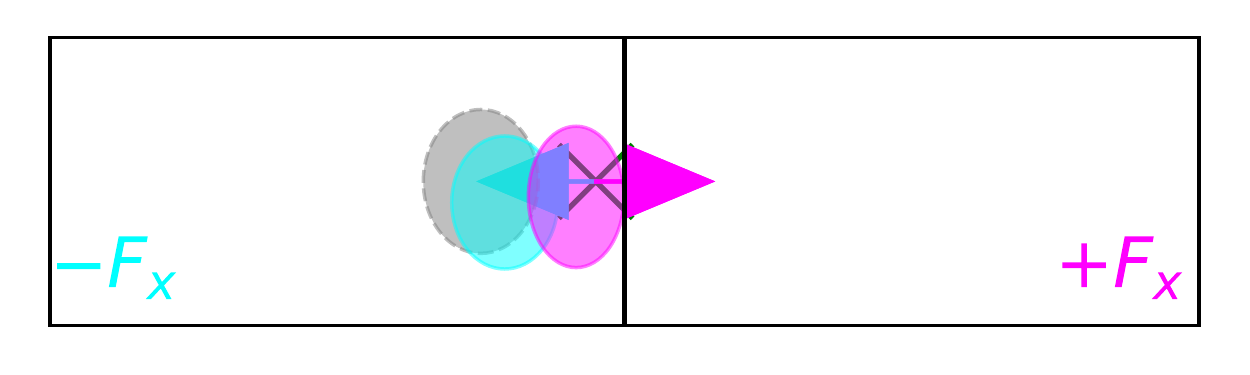}
    \includegraphics[width=0.25\textwidth]{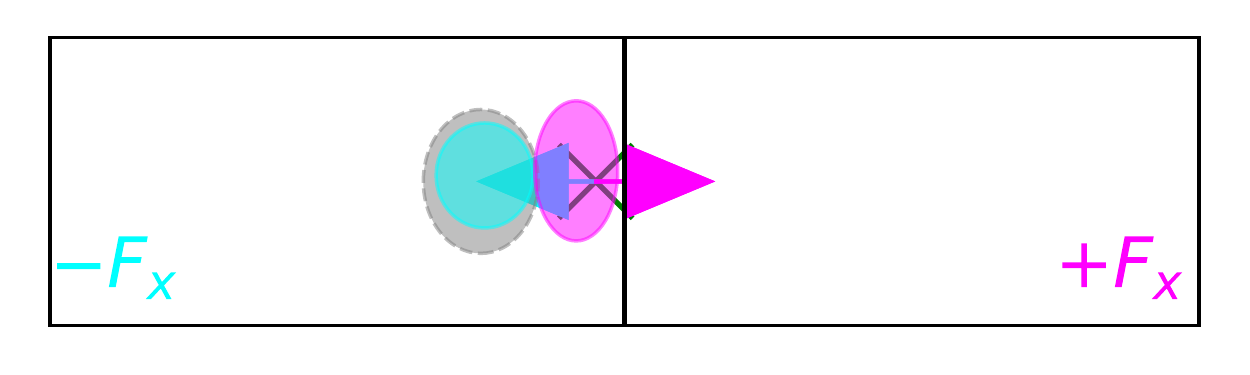}
    \includegraphics[width=0.25\textwidth]{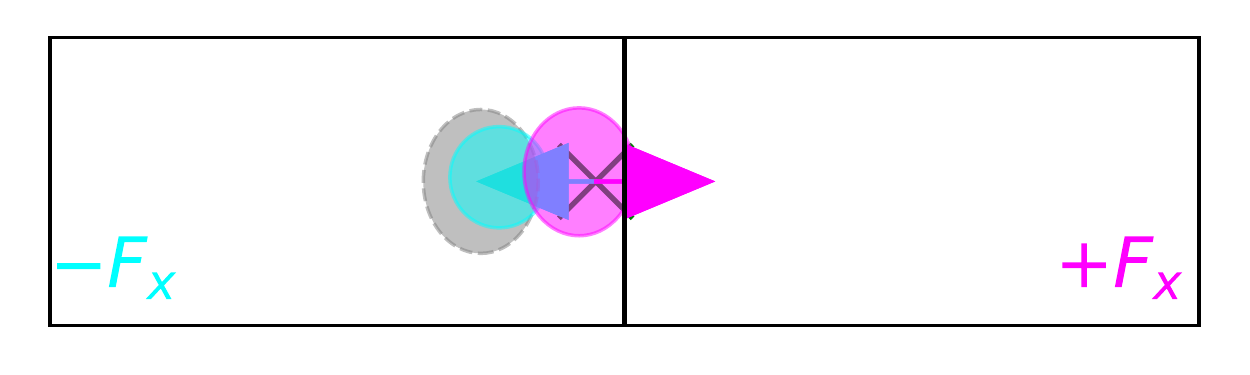}
    \caption{\label{fig:appendix_eval_vanilla_model}\textbf{Empirical visualization of a baseline model for five independent seeds.} 
    }
    \end{center}
\end{figure}

\begin{figure}[h]
    \begin{center}
    \includegraphics[trim={0.0cm, 1.5cm, 0.6cm, 1.5cm},clip,width=0.9\textwidth]{{figures/continuous/legends/legend_horizontal}.pdf}\\
    \includegraphics[width=0.25\textwidth]{{figures/continuous/replicated/no_labels/V3_model_empirical_demo_seed0_movement_noise0.1_P0.95_1.5_F0.2_0.0_affordanceTrue}.pdf}
    \includegraphics[width=0.25\textwidth]{{figures/continuous/replicated/no_labels/V3_model_empirical_demo_seed1_movement_noise0.1_P0.95_1.5_F0.2_0.0_affordanceTrue}.pdf}
    \includegraphics[width=0.25\textwidth]{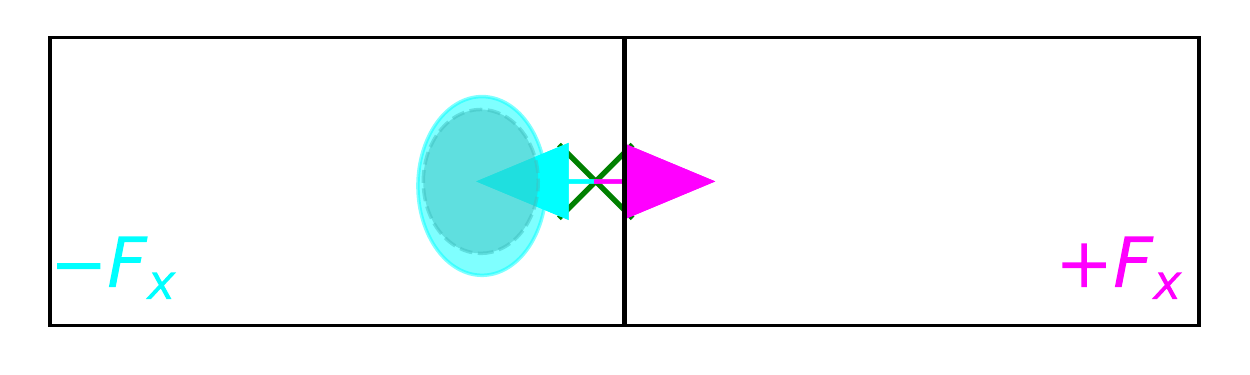}
    \includegraphics[width=0.25\textwidth]{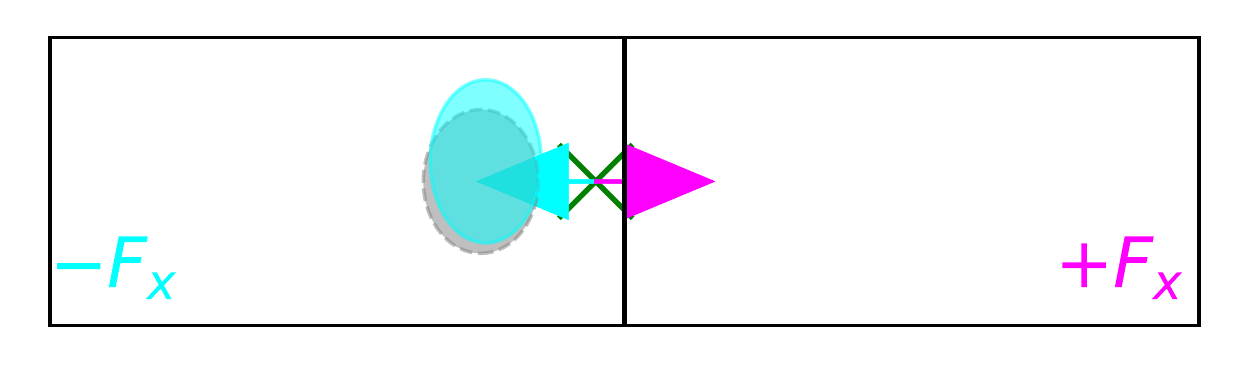}
    \includegraphics[width=0.25\textwidth]{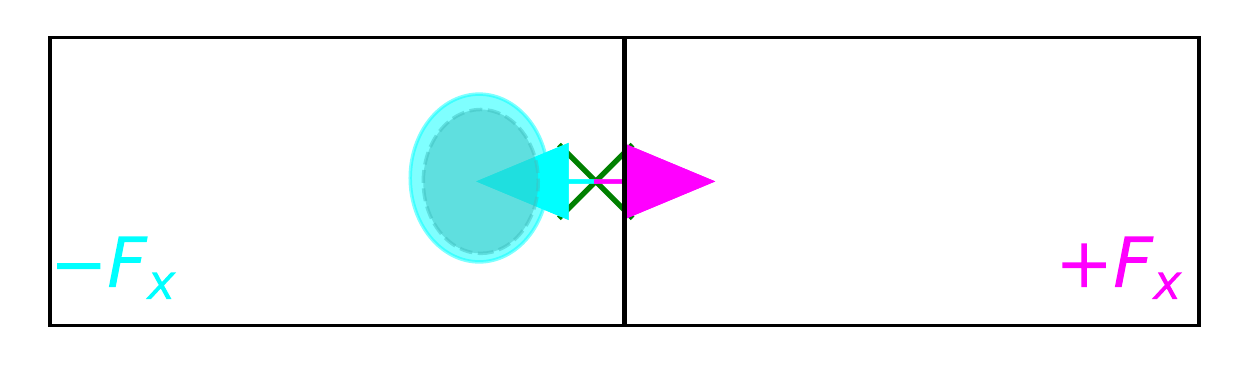}
    \caption{\label{fig:appendix_eval_affordance_model}\textbf{Empirical visualization of an affordance-aware partial model for five independent seeds.} 
    }
    \end{center}
\end{figure}

\begin{figure}[h]
    \begin{center}
    \includegraphics[trim={0.0cm, 1.5cm, 0.6cm, 1.5cm},clip,width=0.9\textwidth]{{figures/continuous/legends/legend_horizontal}.pdf}\\
    \includegraphics[width=0.25\textwidth]{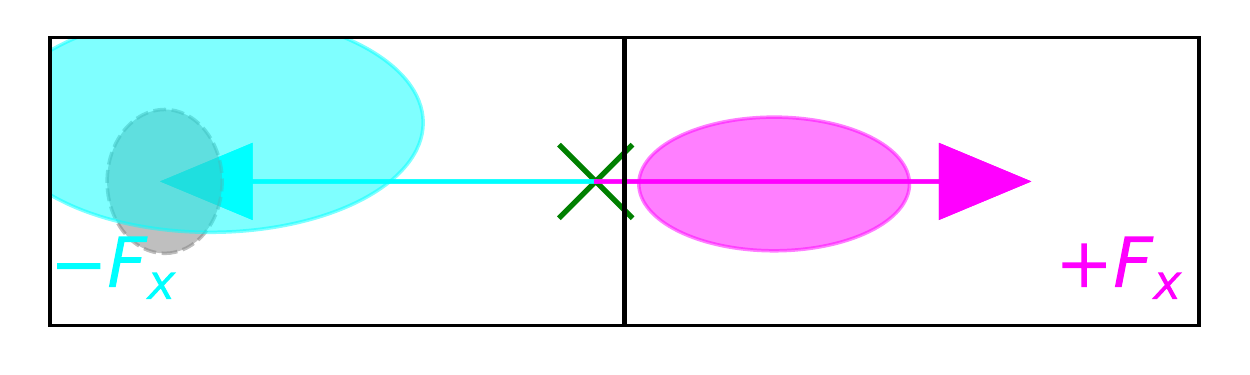}
    \includegraphics[width=0.25\textwidth]{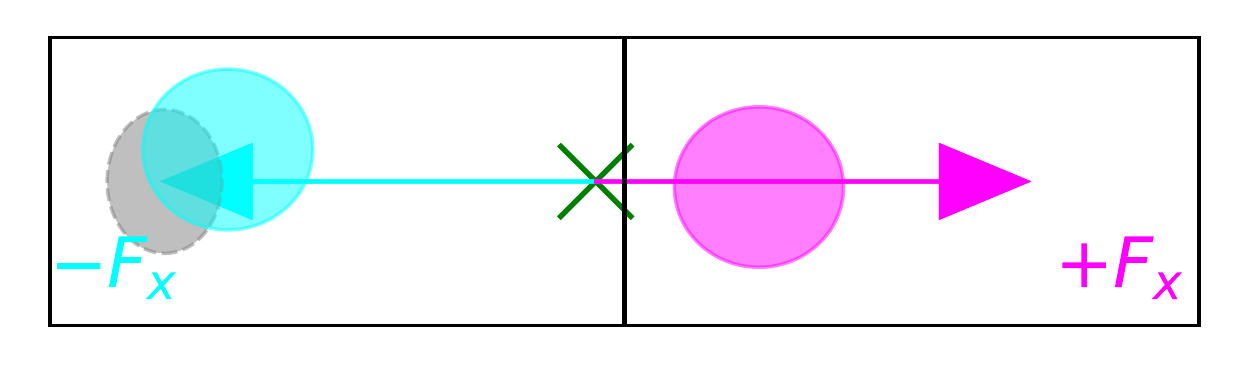}
    \includegraphics[width=0.25\textwidth]{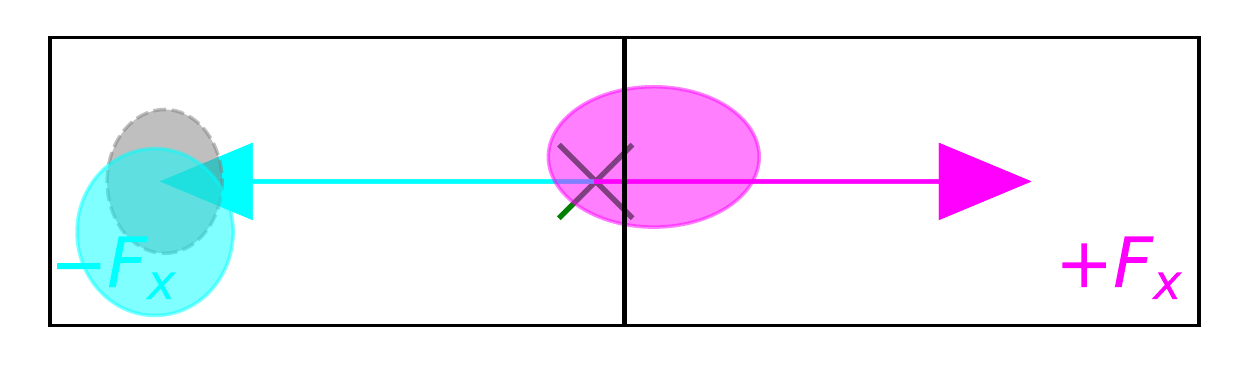}
    \includegraphics[width=0.25\textwidth]{{figures/continuous/replicated/no_labels/V3_model_empirical_demo_seed2_movement_noise0.1_P0.95_1.5_F0.75_0.0_affordanceFalse}.pdf}
    \includegraphics[width=0.25\textwidth]{{figures/continuous/replicated/no_labels/V3_model_empirical_demo_seed3_movement_noise0.1_P0.95_1.5_F0.75_0.0_affordanceFalse}.pdf}
    \caption{\label{fig:appendix_generalization_vanilla_model}\textbf{Empirical visualization of generalization for a baseline model for four independent seeds.}
    }
    \end{center}
\end{figure}

\begin{figure}[h]
    \begin{center}
    \includegraphics[trim={0.0cm, 1.5cm, 0.6cm, 1.5cm},clip,width=0.9\textwidth]{{figures/continuous/legends/legend_horizontal}.pdf}\\
    \includegraphics[width=0.25\textwidth]{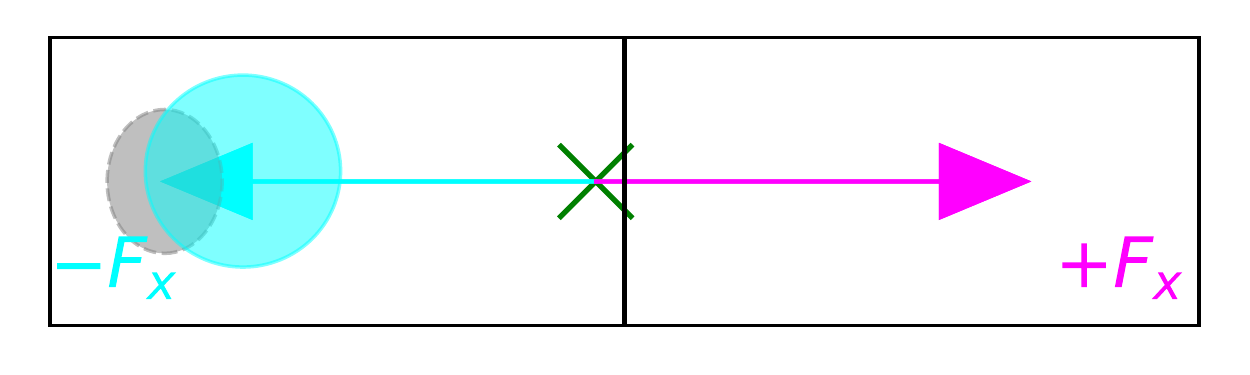}
    \includegraphics[width=0.25\textwidth]{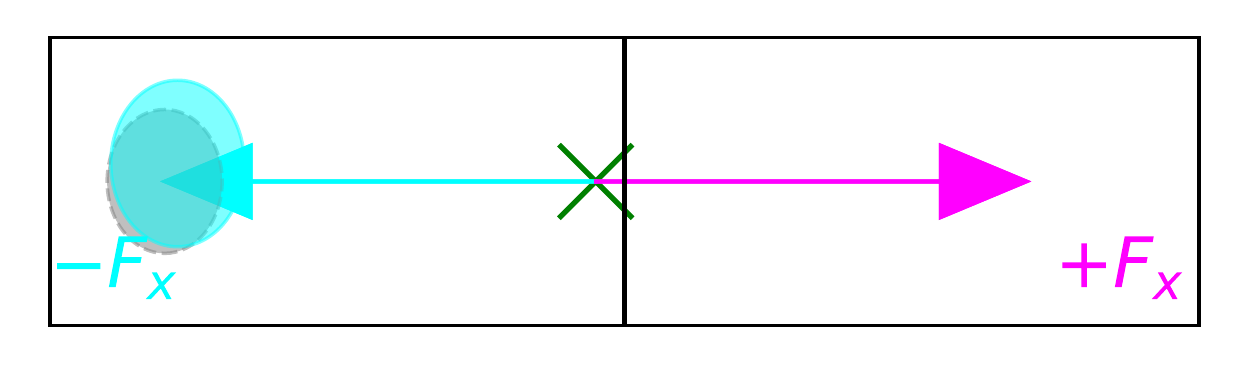}
    \includegraphics[width=0.25\textwidth]{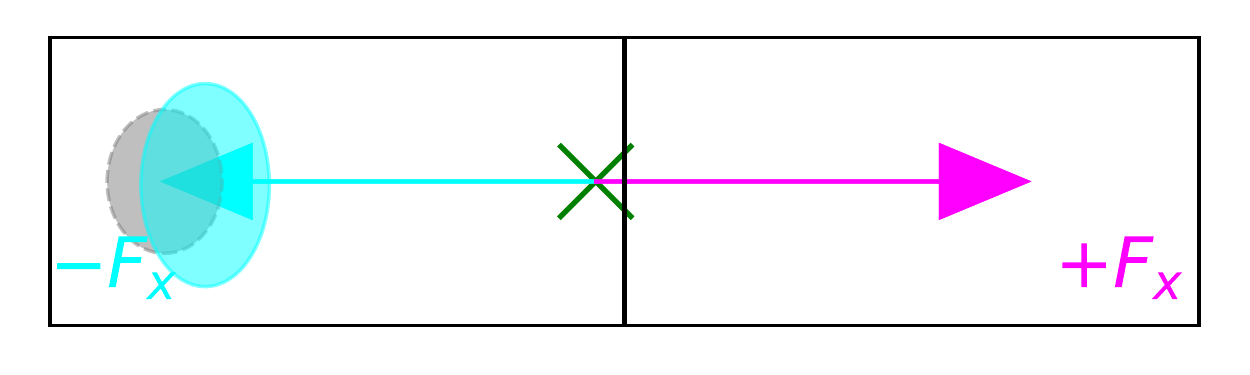}
    \includegraphics[width=0.25\textwidth]{{figures/continuous/replicated/no_labels/V3_model_empirical_demo_seed2_movement_noise0.1_P0.95_1.5_F0.75_0.0_affordanceTrue}.pdf}
    \includegraphics[width=0.25\textwidth]{{figures/continuous/replicated/no_labels/V3_model_empirical_demo_seed3_movement_noise0.1_P0.95_1.5_F0.75_0.0_affordanceTrue}.pdf}
    \caption{\label{appendix_fig:generalization_affordance_model}\textbf{Empirical visualization of generalization for an affordance-aware partial model for four independent seeds.}
    }
    \end{center}
\end{figure}

\subsection{Learning Affordances: Additional Details}
\subsubsection{Learning affordances in discrete environments}
Following the learning approach described in the Sec~\ref{sec:learning_aff} of the main paper, we are also able to learn affordances in a variety of discrete grid-world like environments. We demonstrate the learned affordances for different intent specifications in Fig~\ref{fig:appendix_Illustration}. Note that for a given intent, affordances learned are invariant to various factors such as size of the grid-world, the location of the walls, etc. and capture the underlying dynamics consistently across environments.

\begin{algorithm}[tb]
   \caption{Pseudo code: Affordance-aware model learning}
   \label{alg:aff_model_learning}
\begin{algorithmic}
   \STATE {\bfseries Require:} Collection of intents $\mathcal{I}$.
   \STATE {\bfseries Require:} Environment, $E$.
   \STATE {\bfseries Require:} Intent completion function $c$.
   \STATE {\bfseries Require:} Affordance-classifier $A_\theta$.
   \STATE {\bfseries Require:} Generative Model $P_\phi$.
   \STATE {\bfseries Require:} Data collection policy, $\pi$.
   \STATE {\bfseries Input:} Number of transitions $n$, thresholds $\delta,k$, number of training steps $N$.
\FOR{$i=0\ldots N$ } 

   \STATE \textbf{1.Calculate affordance-classifier loss:}
   \STATE $\{(s, a, s')\}_n \leftarrow$ collect\_trajectories($\pi$, E, n)
   \STATE $\{c\}_n \leftarrow$ get\_intent\_completions($\{(s, a, s', i)\}_n$)
   \STATE $\mathcal{D}\leftarrow \{(s, a, s')\}_n\cup\{c\}_n$
   
  \textbf{2. Calculate affordance-classifier loss:}
   \STATE $\mathcal{O}_A(\theta) \leftarrow \sum_{(s,a,s')\in\mathcal{D}}\sum_{I\in \mathcal{I}}-c(s,a,s',I)\log A_\theta(s,a,I)$

  \textbf{3. Calculate gradient and apply update to $\theta$.}
   \STATE $\theta \leftarrow \theta + \nabla_\theta\mathcal{O}_A(\theta)$
   
  \textbf{4. Calculate model loss:}
   \STATE $\mathcal{O}_{\text{aff}}(\phi) \leftarrow \sum_{(s,a,s')\in \mathcal{D}}\mathbbm{1}\Big[\max_{\forall I\in\mathcal{I}}A_\theta(s,a,I) > k\Big]\log P_\phi(s'|s,a)$
   
  \textbf{5. Calculate gradient and apply update to $\phi$.}
   \STATE $\phi \leftarrow \phi + \nabla_\phi\mathcal{O}_{\text{aff}}(\phi)$
\ENDFOR

\end{algorithmic}
\end{algorithm}

\subsubsection{Learning affordance-aware models under model class restrictions}
\label{sec:appendix:restricted_models}
In Sec~\ref{sec:model_learning} we investigated the qualitative behavior of an affordance-aware model. To better understand the convergence and generalization behavior, we conduct an illustrative experiment with a restricted model class in the Continuous World (Fig~\ref{fig:continuousgridworld}). In particular, we consider a restricted class of transition models that aim to capture displacement only, relative to the current position, and do not have access to state information: $P_\phi(s'|a,s)=\mathcal{N}(s + \mu_\phi(a), \sigma_\phi(a))$. Where $\mu_\phi$ and $\sigma_\phi$ are learned outputs of a neural networks. Learning this model might be difficult in general due to the walls: The model has no information about the current state $s$ and therefore, will not be able to predict that near the walls $\mu_\phi(a)$ should equal 0. In contrast, an affordance classifier implicitly encodes the wall information in $A_\theta$ and the masking should allow the model to learn the rule that $\mu_\phi(a) \approx a$ and $\sigma_\phi(a) \approx 0.1$. Here we use a linear approximator for $\mu_\phi$ and $\sigma_\phi$.

After training %
for 5000 updates, the affordance-aware model reaches a lower loss than the baseline model (Fig~\ref{fig:restricted_class_average_learning_curves}). To quantitatively understand the generalization of the models, we evaluate the mean prediction for a range of actions, $F_x$, over 5 independent seeds. The baseline model learns that $\sigma\approx 0.148$ while the affordance-aware model estimates $\sigma\approx 0.133$. Both baseline and affordance-aware models systematically under-estimate the mean (Fig.\ref{fig:restricted_ood_action}). However, the baseline model is more cautious, and predicts means that are even smaller than the affordance-aware model, since it needs to account for not being able to move near the walls.
The predicted mean from the affordance-aware model during the last 200 updates is significantly closer to the true value, with an average error of $0.038$ compared to the baseline model's  error of $0.065$ (Student's T-test, $p\approx 10^{-28}$).

\begin{figure}[h]
    \begin{center}
    \subfigure[]{\includegraphics[width=0.5\textwidth]{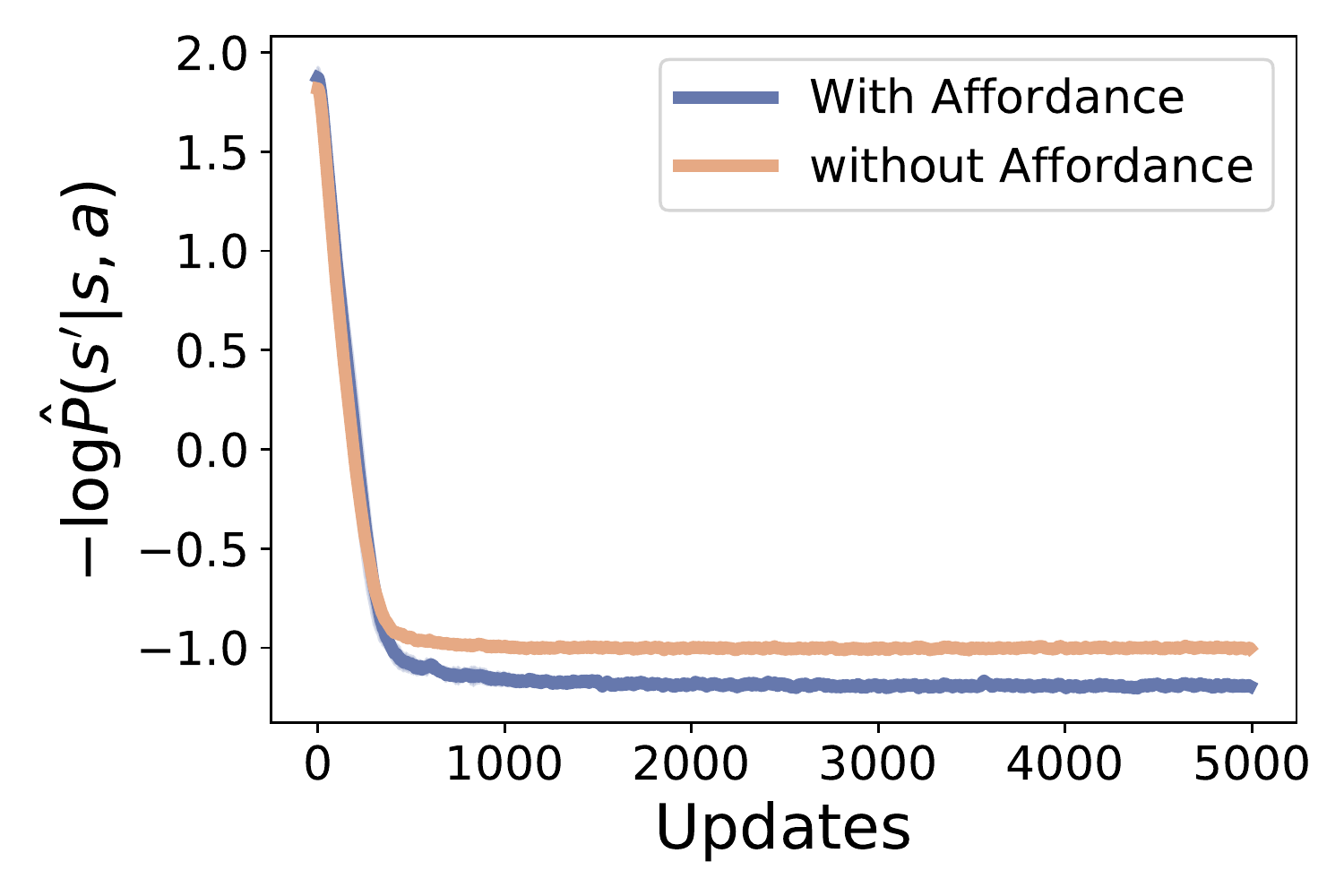}}
    \caption{\label{fig:restricted_class_average_learning_curves}\textbf{Learning curves for training a restricted transition model with and without affordances.} Shaded areas show standard error of the mean over 5 independent runs.}
    \end{center}
\end{figure}

\begin{figure}[h]
    \begin{center}
    \subfigure[$F_x=0.1$]{\includegraphics[width=0.4\textwidth]{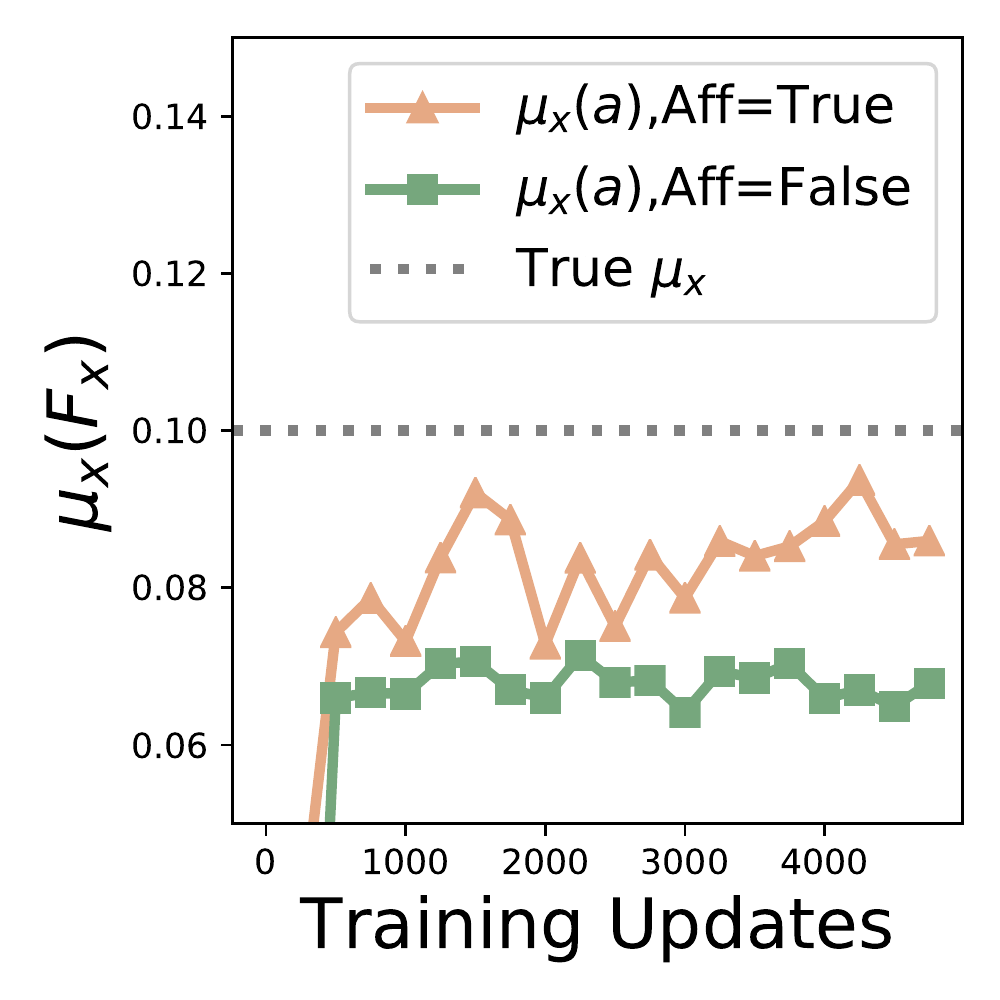}}
    \subfigure[$F_x=0.5$]{\includegraphics[width=0.4\textwidth]{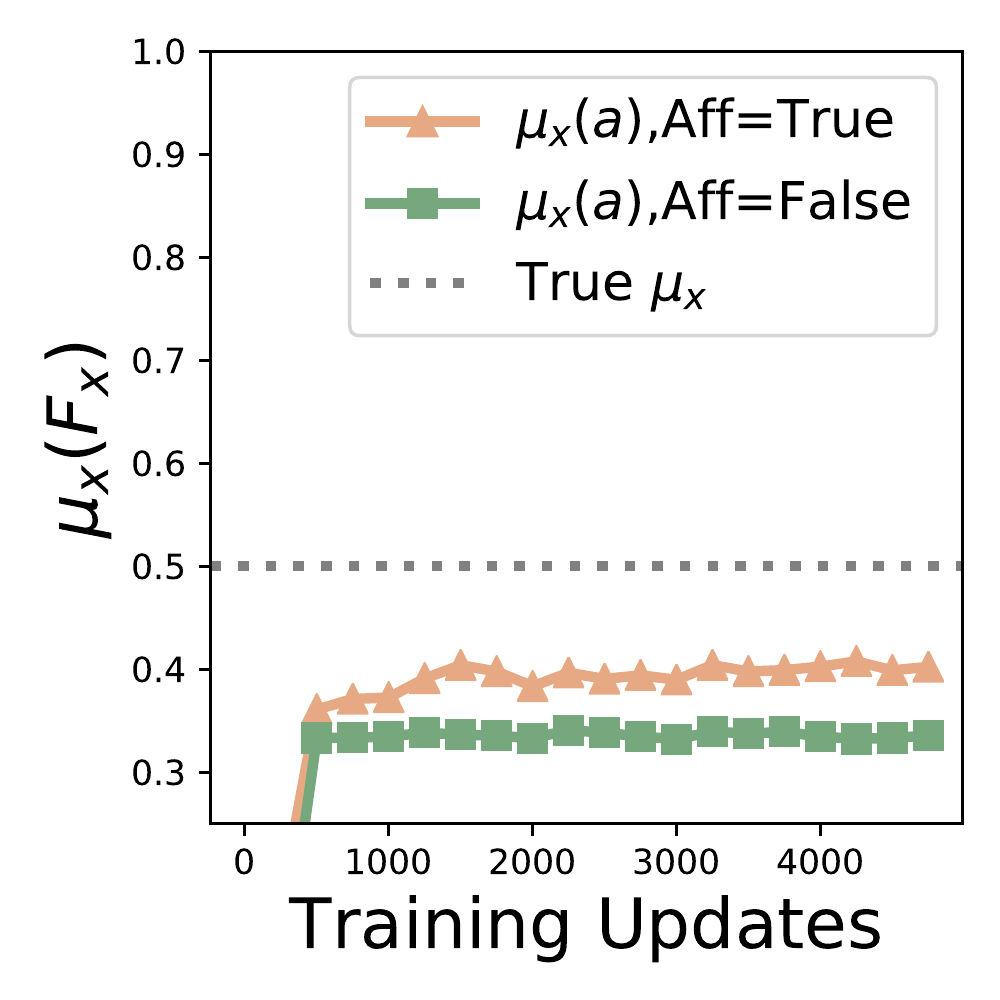}}
    \subfigure[$F_x=0.75$]{\includegraphics[width=0.4\textwidth]{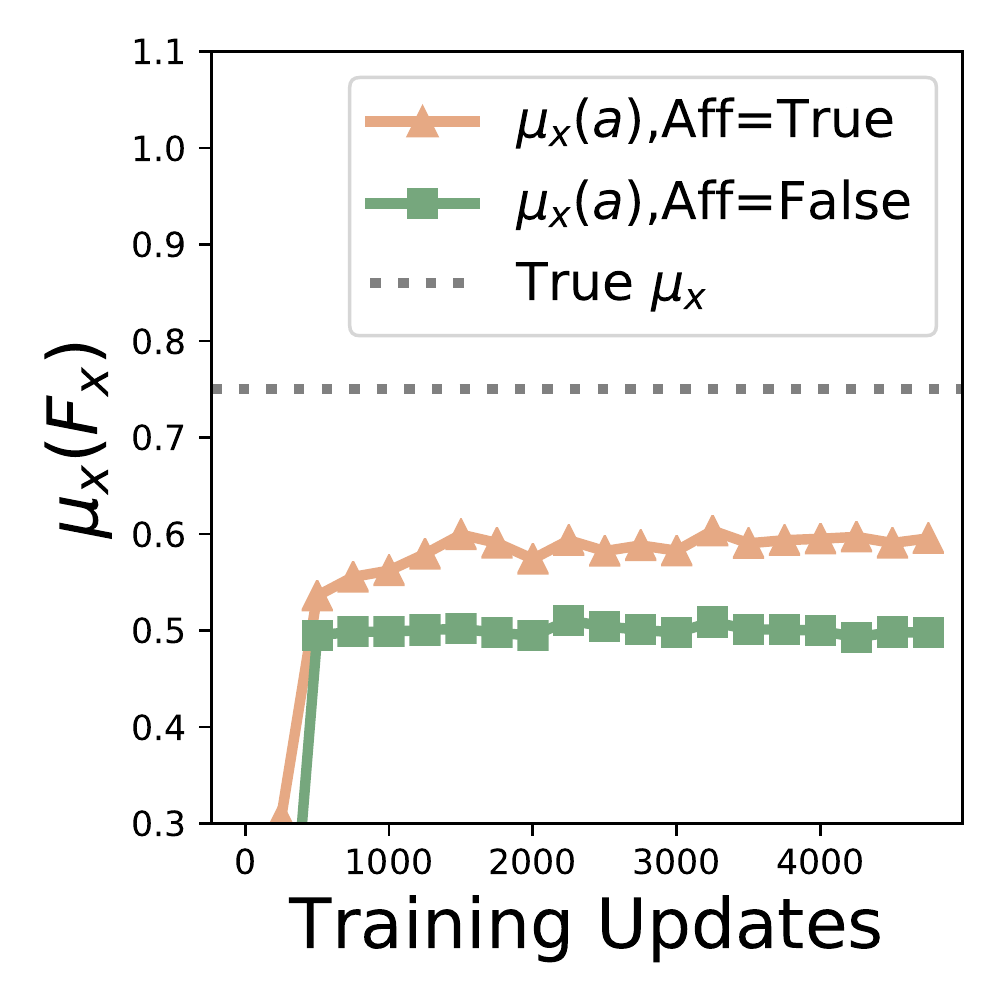}}
    \subfigure[$F_x=1.0$]{\includegraphics[width=0.4\textwidth]{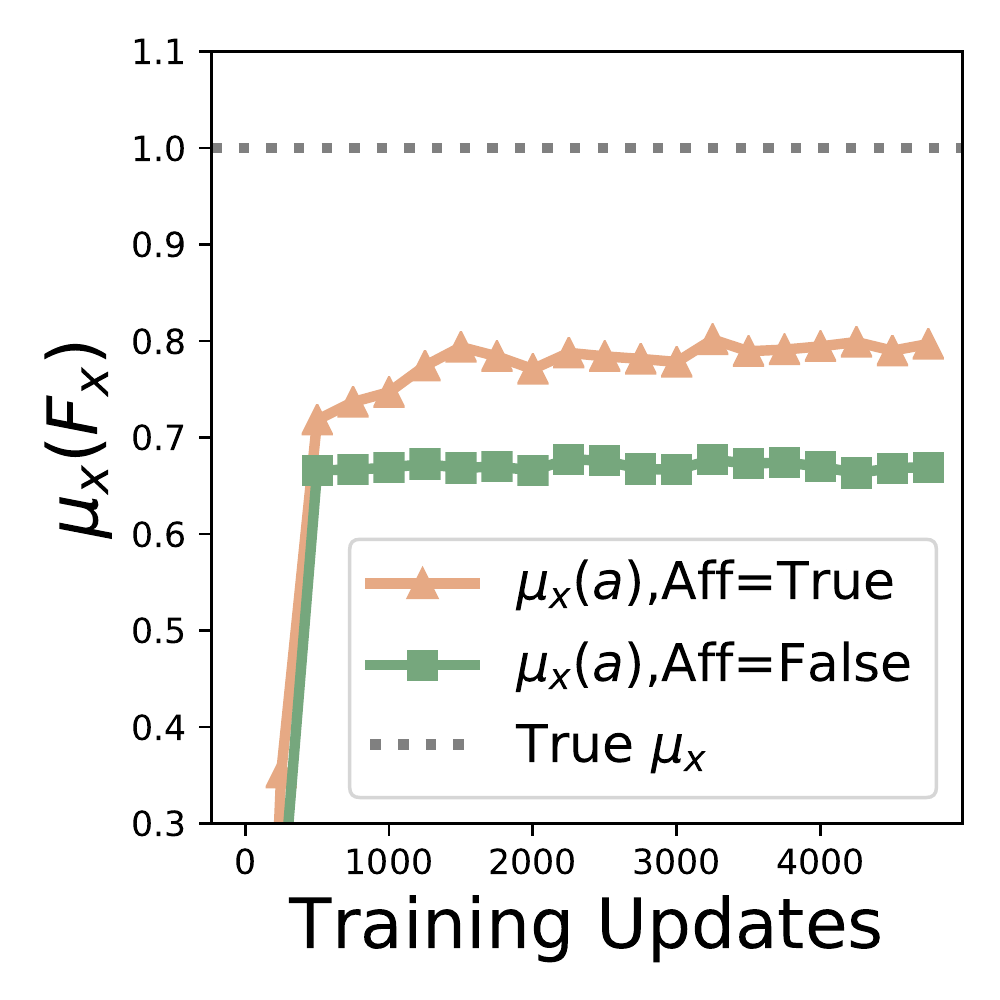}}
    \caption{\label{fig:restricted_ood_action}\textbf{Model prediction accuracy.} Independent runs of the models described in Sec.~\ref{sec:appendix:restricted_models}. Dotted lines show the true prediction. The curves show the mean prediction of the models during training. We here show the predictions by both model with affordances (Aff=True), model without affordances (Aff=False), and ground truth (True $\mu_x$) with a dotted line.}
    \end{center}
\end{figure}

\end{document}